\Crefname{ALC@unique}{Line}{Lines}
\setlist[enumerate]{leftmargin=.5in}
\setlist[itemize]{leftmargin=.5in}
\crefname{example}{Example}{Example}
\title{Identifiability in Two-Layer Sparse Matrix Factorization\thanks{Preprint.\funding{This project was supported in part by the AllegroAssai ANR project ANR-19-CHIA-0009.}}}
\author{L\'eon Zheng\footnotemark[3] \thanks{Universit\'e de Lyon, ENS de Lyon, UCBL, CNRS, Inria, LIP, F-69342, LYON Cedex 07, France.
(\email{leon.zheng@ens-lyon.fr}, \email{elisa.riccietti@ens-lyon.fr}, \email{remi.gribonval@inria.fr}). $^\ddag$valeo.ai, Paris, France.}
\and Elisa Riccietti\footnotemark[2]
\and R\'emi Gribonval\footnotemark[2]}
\begin{document}
	
	\maketitle
	
%
%
%
%
%
%
%

\begin{abstract}
	Sparse matrix factorization is the problem of approximating a matrix $\mat{Z}$ by a product of $J$ sparse factors $\matseq{\mat{X}}{J} \matseq{\mat{X}}{J-1} \ldots \matseq{\mat{X}}{1}$. 
	This paper focuses on identifiability issues that appear in this problem, in view of better understanding under which sparsity constraints the problem is well-posed.
	We give conditions under which the problem of factorizing a matrix into \emph{two} sparse factors admits a unique solution, up to unavoidable permutation and scaling equivalences. Our general framework considers an arbitrary family of prescribed sparsity patterns, allowing us to capture more structured notions of sparsity than simply the count of nonzero entries. These conditions are shown to be related to essential uniqueness of exact matrix decomposition into a sum of rank-one matrices, with structured sparsity constraints. In particular, in the case of  fixed-support sparse matrix factorization, we give a general sufficient condition for identifiability based on rank-one matrix completability, and we derive from it a completion algorithm that can verify if this sufficient condition is satisfied, and recover the entries in the two sparse factors if this is the case.
	A companion paper further exploits these conditions to derive identifiability properties and theoretically sound factorization methods for multi-layer sparse matrix factorization with support constraints associated to some well-known fast transforms such as the Hadamard or the Discrete Fourier Transforms.
\end{abstract}

\begin{keywords}
  Identifiability, matrix factorization, sparsity, bilinear inverse problem, lifting, matrix completion
\end{keywords}

\begin{AMS}
  15A23, 94A12, 15A83
\end{AMS}

\section{Introduction}
	\label{section:introduction}

The problem of \emph{sparse matrix factorization} is, given $J \geq 2$ and a matrix $\mat{Z}$, to find factors $\matseq{\mat{X}}{1}, \ldots, \matseq{\mat{X}}{J}$ such that
\begin{equation}
	\label{eq:sparse-approx}
	\mat{Z} \approx \matseq{\mat{X}}{J} \matseq{\mat{X}}{J-1} \ldots \matseq{\mat{X}}{1},
\end{equation}
and such that the factors $\matseq{\mat{X}}{\ell}$ ($1 \leq \ell \leq J$) are sparse, in the sense that they have few nonzero entries.
Such a factorization would allow to use the product of sparse factors as a surrogate of the linear operator associated to $\mat{Z}$ for speeding up numerical methods and reducing memory complexity \cite{le2016flexible, 7178579, 5452966, 5325694}.

One approach to approximate a matrix by a product of sparse factors is to consider specific hand-designed sparsity patterns for the factors, for example the butterfly factorization model \cite{dao2019learning}. Although common matrices such as the Discrete Fourier Transform (DFT), the Discrete Cosine Transform (DCT), the Discrete Sine Transform (DST), convolution or Hadamard matrices can be exactly decomposed into factors using this butterfly factorization model, the same doesn't hold for general matrices. An alternative approach is to consider a more general family of sparsity patterns, and try to find a sparsity structure which gives the smallest approximation error by efficiently exploring  this family. Classical families of sparse matrix supports are for instance those which are globally $s$-sparse (at most $s$ nonzero entries), $k$-sparse by column and/or $l$-sparse by row, etc. This approach has been considered for instance in \cite{le2016flexible}, where the authors formulated \eqref{eq:sparse-approx} as an optimization problem constrained to a general family of sparsity patterns, and proposed to explore it with a proximal gradient algorithm. 

Yet, it is still not clear why or when these heuristic algorithms succeed or fail in finding a good sparse approximation. Little is known about conditions for which we can hopefully find a good approximation of $\mat{Z}$ by a product of sparse factors with tractable algorithms. 
In particular, adopting a \emph{multilinear inverse problem} point of view, i.e., an inverse problem where the measurements are a multilinear function of the unknowns, it is not even known when the \textit{exact} sparse matrix factorization problem (the case in which a matrix can be exactly decomposed as a product of sparse factors) is well-posed, in the sense that the solution of the inverse problem is \emph{identifiable}, i.e., unique, up to natural and unavoidable scaling and permutation ambiguities. This paper studies such identifiability issues in exact sparse matrix factorization, for general families of sparsity patterns. Following the hierarchical approach proposed by \cite{le2016flexible}, we focus on the case with $J=2$ factors, and illustrate in the companion paper  \cite{LeonPart2} that our analysis can provide tools for the investigation of the problem extended to $J > 2$ factors.

\subsection{Related work}

Although identifiability in sparse linear inverse problems is now well understood \cite{elad2010sparse, foucart_mathematical_2013}, identifiability in \emph{multilinear} inverse problems regularized by sparsity is still very much an open question. 

\paragraph{Lifting in blind deconvolution}
One main framework for studying identifiability in bilinear inverse problems is the \emph{lifting} procedure \cite{choudhary2014identifiability}. This framework takes into account scaling ambiguities inherent to bilinear inverse problems, and transforms a bilinear inverse problem into a low-rank matrix recovery problem. 
Identifiability is then related to geometric properties of the corresponding lifting operator, and more precisely, to the rank-two null space of the operator. By giving a simple parametrization of this rank-two null space specific to the case of blind deconvolution \cite{choudhary2014identifiability}, it is possible for instance to show  a negative result about identifiability of the solution to blind deconvolution regularized by sparsity in the canonical basis
\cite{choudhary2014sparse}. However, such a parametrization is specific to the convolution, and does not generalize to the more general problem of sparse matrix factorization. In fact, the analysis of identifiability in blind deconvolution with sparsity or subspace constraints typically relies on an application of the lifting procedure in the frequency domain \cite{li2017identifiability}, which is specific to the convolution operation. Still, blind deconvolution can be seen as a particular instance of sparse matrix factorization into two factors, with a specific structure enforced on the two factors. One should expect some similarities in the analysis of identifiability in these two problems. In fact, this paper will reuse the lifting procedure to analyze identifiability in sparse matrix factorization.

\paragraph{Deep linear neural networks}
The lifting procedure has also been generalized to the multilinear case. In deep linear neural networks with structured layers \cite{malgouyres2016identifiability}, identifiability up to certain global layerwise scaling equivalences can be characterized with a tensorial lifting procedure \cite{malgouyres2019multilinear}. However, this analysis is limited to the specific case where each layer is constrained to a fixed support. In general, sparsity constraints are described by a family of sparsity patterns, and not reduced to specific supports. 
Considering a family of possible supports for the structured layers, some stability properties of deep structured linear networks can be also derived from the tensorial lifting procedure \cite{malgouyres2020stable}. However, when the family of possible supports presents some invariance property by permutation of their columns (as it is the case for the above mentioned classical families), permutation ambiguities should be considered in the analysis, in the spirit of \cite[Chapter 3]{stock2021efficiency}. Our work takes into account such permutation ambiguities.

\paragraph{Interpretability}
Sparsity in learned models is desirable for reducing their complexity and to improve interpretability.  
In particular, such an interpretability would require some kind of stability or identifiability of the parameters. 
Identifiability in nonnegative matrix factorization (NMF) \cite{gillis2020nonnegative} (or in simplex-structured matrix factorization, a generalization of the NMF problem \cite{abdolali2021simplex}) has been studied for guaranteeing interpretability of the factors, when there exists a physical model explaining the data, like in blind hyperspectral unmixing or audio source separation. For that purpose, a sufficient condition for identifiability of the factors when the right factor is $k$-sparse by column has been established \cite[Chapter 4]{gillis2020nonnegative}, but the result is only limited to this kind of sparsity structure. 
Instead, similarly to \cite{malgouyres2020stable}, our work considers, for more generality, a general family of sparsity patterns, or sparsity structures, in the spirit of model-based compressive sensing \cite{baraniuk2010model}. This general framework is exploitable to provide ground to guide further development toward the choice of a good family of sparsity patterns to obtain well-posed sparse matrix factorization problems, as it is shown in the companion paper \cite{LeonPart2}.

\paragraph{Compressive sensing and sampling complexity}
From an information-theoretic point of view, identifiability in inverse problems is studied to derive optimal sampling complexity bounds to ensure non-ambiguous reconstruction of a signal from its dimension-reduced linear measurement.
Typically, in sparsity-regularized linear inverse problems, it is known from the compressive sensing literature \cite{foucart_mathematical_2013} that at least $m \geq 2s$ measurements are needed in order to reconstruct an $s$-sparse vector from its linear measurement.
In bilinear inverse problems it is possible, based on results on information-theoretic limits of unique low-rank matrix recovery \cite{riegler2015information}, to give near-optimal sampling complexity results for blind deconvolution \cite{li2017identifiability}, which can be further improved using algebraic geometry \cite{kech2017optimal}. 
Yet, optimal sampling complexity bounds in sparse matrix factorization are still an open question.
As every instance of a sparse linear inverse problem can be seen as an instance of a sparse matrix factorization problem with two factors, where the left one is \emph{fixed}, we will show in this paper that our framework allows to generalize a classical result in compressive sensing \cite[Theorem 2.13]{foucart_mathematical_2013} stating the link between the identifiability of $s$-sparse vectors and the Kruskal-rank \cite{KRUSKAL197795} of the measurement matrix, i.e., the largest integer $j$ such that every set of $j$ columns in this matrix is linearly independent.
Typically, when the left factor is fixed, instead of considering $s$-sparsity by column on the right factor, which would then decouple the problem into independent linear inverse problems under an $s$-sparsity constraint, we show new identifiability results for other sparsity constraints on the right factor, such as sparsity by row or global sparsity (see \Cref{cor:uniform-right-identifiability-classical-family}). 

\subsection{Contributions}

Our paper studies identifiability in exact sparse matrix factorization {\em with two factors} $(\mat{X}, \mat{Y})$. Specifically, we formulate in \cref{section:problem-formulation} the problem of exact sparse matrix factorization of $\mat{Z} := \mat{X} \transpose{\mat{Y}}$ as a constrained bilinear inverse problem.
This framework allows to consider general families of sparsity patterns: the constraint set of the inverse problem is a union of subspaces corresponding to prescribed sparsity patterns, i.e., given supports. 
As the sparsity pattern of a matrix is invariant to scaling of its columns, scaling ambiguities are inherent in the sparse matrix factorization problem. Permutation ambiguities can also appear for certain families of sparsity patterns. Hence, \emph{identifiability} of the sparse factorization is then defined as the uniqueness of the solution to the inverse problem, up to scaling and permutation equivalences (\Cref{def:PS-uniqueness}). 

Our analysis of this uniqueness property starts by pinpointing some non-degeneration properties, which are necessary for a pair of factors to be identifiable in the sparse matrix factorization problem (\Cref{prop:non-degenerate-properties}). Then, inspired by the lifting procedure \cite{choudhary2014identifiability}, we show that, up to these non-degeneration properties, identifiability in exact sparse matrix factorization is equivalent to uniqueness of \emph{exact matrix decomposition} 
into rank-one matrices (\Cref{def:P-uniqueness-EMD}), with corresponding sparsity patterns (\Cref{thm:equivalence-with-EMD}). By analogy with sparse linear inverse problem, we propose to characterize such uniqueness in two steps: 
\begin{enumerate*}[label=(\roman*)]
	\item characterization of identifiability of the constraint supports for the rank-one matrices among a family of sparsity patterns;
	\item characterization of identifiability of the rank-one matrices after fixing these constraint supports.
\end{enumerate*}
In particular, based on rank-one matrix completability conditions \cite{kiraly2012combinatorial}, we give a condition for identifiability in fixed-support matrix factorization, which is shown to be naturally associated to connectivity in certain bipartite graphs (see \Cref{thm:SC-fixed-support-id}). We derive a completion algorithm that can verify if this sufficient condition is satisfied, and recover the entries in the two sparse factors if this is the case (see \Cref{algo:completion}).

When fixing the left factor, the bilinear inverse problem becomes a linear one, so conditions for identifiability of the right factor are naturally related to linear independence of specific subsets of columns in the fixed left factor (\Cref{thm:right-uniqueness-perm} and \Cref{prop:right-uniqueness-linear-inv-pb-matrix}). Based on this characterization, we can show some simple characterization of \emph{uniform} right identifiability with fixed left factor $\mat{X}$, i.e., the identifiability of \emph{every} sparse right factor $\mat{Y}$ from the observed matrix $\mat{Z} := \mat{X} \transpose{\mat{Y}}$. For instance, when enforcing at most $s$ nonzero entries on the right factor, we show that uniform right identifiability with fixed left factor $\mat{X}$ holds if, and only if, every subset of $2s$ columns in $\mat{X}$ is linearly independent (\Cref{cor:uniform-right-identifiability-classical-family}). 

In complement to this paper, the companion paper \cite{LeonPart2} presents an application of our framework to show some identifiability results in the multi-layer sparse matrix factorization of some well-known matrices, like the Hadamard or DFT matrices. 

\paragraph{Summary} The main contributions of this paper are the following.
\begin{enumerate}
	\item We show equivalence between uniqueness of exact sparse matrix factorization and uniqueness of exact sparse matrix decomposition into rank-one matrices up to natural ambiguities, except in trivial degenerate cases (\Cref{thm:equivalence-with-EMD}).
	\item We express some sufficient conditions for identifying the constraint support among the family of sparsity patterns (\Cref{prop:simple-SC-identify-supports}), which are verified in practice in the case of the sparse factorization of the DFT, DCT-II or DST-II matrices.
	\item We characterize right identifiability (\Cref{thm:right-uniqueness-perm}, \Cref{prop:right-uniqueness-linear-inv-pb-matrix,prop:right-id-up-permutation}), using linear independence of specific sets of columns in the fixed left factor, and derive as a by-product a characterization of uniform right identifiability (\Cref{cor:uniform-right-uniqueness,cor:uniform-right-identifiability-classical-family}). 
	\item We give a general sufficient condition for identifiability in fixed-support sparse matrix factorization based on rank-one matrix completability (\Cref{thm:SC-fixed-support-id}), and we derive from it a natural factorization procedure described by \Cref{algo:completion}.
\end{enumerate}

The paper is organized as follows: \cref{section:problem-formulation} is dedicated to the formulation of the exact sparse matrix factorization problem with two factors, and defines the uniqueness property that we want to characterize throughout the paper; \cref{section:right-identifiability} characterizes right identifiability, which is identifiability of the right factor when the left one is fixed; \cref{section:EMD-identifiability} studies identifiability in exact sparse matrix factorization via the uniqueness of exact matrix decomposition into rank-one matrices, up to permutation equivalence; \cref{section:uniform-identifiability} regroups some results on uniform identifiability in the case with two factors; \cref{section:conclusion} discusses perspectives of this work for stability issues. Technical proofs are deferred to the appendices.

\section{Problem formulation}
\label{section:problem-formulation}
We introduce our framework to analyze identifiability in exact sparse matrix factorization.

\subsection{Notations}
\label{section:notations}
The set of integers $\{1, \ldots, n\}$ is denoted $\integerSet{n}$. The cardinality of any finite set $F$ is denoted $\card(F)$. The complement of a set $I$ is denoted $I^c$.
The \emph{support} of a matrix $\mat{M} \in \mathbb{C}^{m \times n}$ of size $m \times n$ is the set of indices $\supp(\mat{M}) \subseteq \integerSet{m} \times \integerSet{n}$ of nonzero entries. It is identified by abuse of notation to the set of binary matrices $\mathbb{B}^{m \times n} := \{ 0, 1\}^{m \times n}$. Depending on the context, a matrix support can be seen as a set of indices, or a binary matrix with only nonzero entries for indices in this set. The cardinality of the matrix support $\supp(\mat{M})$ is also known as the $\ell_0$-norm of $\mat{M}$, denoted $\| \cdot \|_0$. 
The \emph{column support}, denoted $\colsupp({\mat{M}})$, is the subset of indices $i \in \integerSet{r}$ such that the $i$-th column of $\mat{M}$, denoted $\matcol{\mat{M}}{i}$, is nonzero. The entry of $\mat{M}$ indexed by $(k, l)$ is $\matindex{\mat{M}}{k}{l}$. The $j$-th column of $\mat{M}$ restricted only to row indices $i \in I$ is denoted $\matindex{\mat{M}}{I}{j}$. For any subset of column indices $I$, the notation $\submat{\mat{M}}{I}$ indicates the column submatrix $(\matcol{\mat{M}}{i})_{i \in I}$. The notation $\matseq{\mat{M}}{i}$ denotes the $i$-th matrix in a collection. 
For any subset of row and column indices $I$ and $J$, the set of submatrices defined on the set $I \times J$ is denoted $\mathbb{C}^{I \times J}$. 
The vector full of ones indexed by a subset of indices $I$ is denoted $\one_I$, and for any integer $n$, we write $\one_n = \one_{\integerSet{n}}$. 
Subscript is omitted when there is no ambiguity. 
Vectors or matrices full of zeros are denoted $\mat{0}$ with bold case, without specifying the dimension.
The canonical basis in $\mathbb{C}^n$ is denoted $\{ \basis{i} \}_{i=1}^n$.
The identity matrix of size $n$ is denoted $\identity{n}$. 
The kernel and range of a matrix $\mat{M}$ are denoted $\ker(\mat{M})$ and $\im(\mat{M})$. The Kronecker product \cite{Kronecker} between two matrices $\mat{A}$ and $\mat{B}$ is written $\mat{A} \otimes \mat{B}$.
The set of permutations on a set of indices $I$ is written $\mathfrak{S}(I)$. The function $f$ iterated $n$ times is denoted $\iterate{f}{n}$. 

\subsection{Exact matrix factorization}
Given an observed matrix $\mat{Z} \in \mathbb{C}^{m \times n}$, and a subset of feasible pairs of factors $\Sigma \subseteq \mathbb{C}^{m \times r} \times \mathbb{C}^{n \times r}$, the so-called \emph{exact matrix factorization}  (EMF) problem with two factors of $\mat{Z}$ in $\Sigma$ is the following bilinear inverse problem: 
\begin{equation}
	\label{eq:EMF}
	\text{find if possible}\footnote{For arbitrary $\mat{Z}$ there does not necessarily exist a pair $(\mat{X}, \mat{Y}) \in \Sigma$ such that $\mat{Z} = \mat{X} \transpose{\mat{Y}}$.} (\mat{X}, \mat{Y}) \in \Sigma \text{ such that } \mat{Z} = \mat{X} \transpose{\mat{Y}}.
\end{equation}
We are interested in the particular problem variation where the constraint set $\Sigma$ encodes some chosen sparsity pattern for the factorization. 
For a given binary matrix $\mat{S} \in \mathbb{B}^{m \times r}$ associated to a sparisty pattern, denote 
	\begin{displaymath}
		\Sigma_{\mat{S}} := \{ \mat{M} \in \mathbb{C}^{m \times r}\; | \; \supp(\mat{M}) \subseteq \supp(\mat{S}) \}
	\end{displaymath}
	the so-called \emph{model-set} defined by $\mat{S}$, which is the set of matrices with a sparsity pattern included in $\mat{S}$. A pair of sparsity patterns is written $\pair{S} := (\leftsupp{S}, \rightsupp{S})$, where $\leftsupp{S}$ and $\rightsupp{S}$ are the left and right sparsity patterns respectively. By abuse, we will refer to $\leftsupp{S}$ and $\rightsupp{S}$ as left and right (allowed) supports.
Given any pair of allowed supports represented by binary matrices $(\leftsupp{S}, \rightsupp{S}) \in \mathbb{B}^{m \times r} \times \mathbb{B}^{n \times r}$, the set $\Sigma_{\leftsupp{S}} \times \Sigma_{\rightsupp{S}} \subseteq \mathbb{C}^{m \times r} \times \mathbb{C}^{n \times r}$ is a linear subspace. Given any family $\Omega  \subset \mathbb{B}^{m \times r} \times \mathbb{B}^{n \times r}$ of such pairs of allowed supports, denote 
\begin{equation}
	\label{eq:model-set-family-pairs}
	\Sigma_{\Omega} := \bigcup_{\pair{S} \in \Omega} \Sigma_{\pair{S}}, \quad \text{with } \pair{S} := (\leftsupp{S}, \rightsupp{S}), \; \text{and } \Sigma_{\pair{S}} := \Sigma_{\leftsupp{S}} \times \Sigma_{\rightsupp{S}}.
\end{equation} 
Such a set is a union of subspaces. Moreover, since the support of a matrix is unchanged under arbitrary rescaling of its columns, $\Sigma_{\Omega}$ is invariant by column scaling for any family $\Omega$.
This framework covers some classical families $\Omega$ of structured sparse supports.

\begin{example} {\color{header1}\normalfont\sffamily Globally $s$-sparse matrices:} matrices with at most $s$ nonzero entries. This is the most general sparsity pattern, since it does not specify any kind of sparsity structure.
\end{example}

\begin{example} {\color{header1}\normalfont\sffamily Matrices that are $k$-sparse by column and/or $l$-sparse by row:} each column has at most $k$ nonzero entries, and/or each row has at most $l$ nonzero entries. For instance, in sparse coding, the dataset represented by  a matrix $\mat{X}$ is decomposed over an overcomplete dictionary $\mat{D}$, in such a way that each column of $\mat{X}$ can be expressed as a linear combination of few atoms, i.e., columns of $\mat{D}$. In terms of matrices, this is written as $\mat{X} = \mat{D} \mat{W}$, where $\mat{W}$ is sparse by column. 
\end{example}

In the following, the set of supports (of a given size) which are globally $s$-sparse, $k$-sparse by column and $l$-sparse by row are respectively denoted $\globalsparse{s}$,  $\columnsparse{k}$, and  $\rowsparse{l}$:
\begin{align}
	\globalsparse{s} :=& \; \{ \mat{S} \; | \; \| \mat{S} \|_0 \leq s\}, \label{eq:global-sparse}\\
	\columnsparse{k} :=& \; \{ \mat{S} \; | \; \forall j, \; \| \matcol{\mat{S}}{j} \|_0 \leq k\}, \label{eq:col-sparse}\\ 
	\rowsparse{l} :=& \; \{ \mat{S} \; | \; \forall i, \; \| \matcol{(\transpose{\mat{S}})}{i} \|_0 \leq l \}. \label{eq:row-sparse}
\end{align}

\begin{example} {\color{header1}\normalfont\sffamily $k$-regular matrices:} square matrices that are both  $k$-sparse by column and by row \cite{le2021structured}. In the butterfly factorization of the DFT matrix \cite{dao2019learning}, each butterfly factor is $2$-regular. 
\end{example}

In addition to the generic invariance to column scaling of $\Sigma_{\Omega}$, the above classical families are also invariant to column permutation, which leads to the following definition.

\begin{definition}[Stability by permutation]
	We say that a family of pairs of supports $\Omega$ is \emph{stable by permutation} if for any pair $\pair{S} \in \Omega$, we have $(\leftsupp{S} \mat{P}, \rightsupp{S} \mat{P}) \in \Omega$ for all $\mat{P} \in \permutation{r}$, where $\permutation{r}$ is the group of permutation matrices of size $r \times r$.
\end{definition}

Considering $\Lambda^L$ (resp. $\Lambda^R$) a classical family of sparse left (resp. right) supports, in the sense that they are one of the families presented in the previous examples, the family of pairs of supports $\Omega := \Lambda^L \times \Lambda^R$ is stable by permutation\footnote{Most families of pairs of supports are \emph{not} stable by permutation: consider for example any family $\Omega$ reduced to a single fixed pair of supports, or a few supports. 
	A concrete example can be built using supports of the butterfly factors of the DFT matrix \cite{dao2019learning}.
}.
Hence, uniqueness of a solution to \eqref{eq:EMF} with such sparsity constraints will always be considered up to scaling and permutation equivalences. The following definition is a generalization of the one of \cite[Chapter 4]{gillis2020nonnegative} to study identifiability in nonnegative matrix factorization (NMF). Hereafter, the group of diagonal matrices of size $r \times r$ with nonzero diagonal entries is denoted $\diagonal{r}$, while the group of \emph{generalized permutation} matrices of size $r \times r$ is denoted $\genperm{r} := \{ \mat{D} \mat{P} \; | \; \mat{D} \in \diagonal{r}, \; \mat{P} \in \permutation{r} \}$.

\begin{definition}[PS-uniqueness of an EMF in $\Sigma$]
	\label{def:PS-uniqueness}
	For any set $\Sigma$ of pairs of factors, the pair $(\mat{X}, \mat{Y}) \in \Sigma$ is the \emph{PS-unique} EMF of $\mat{Z} := \mat{X} \transpose{\mat{Y}}$ in $\Sigma$, if any solution $(\mat{X'}, \mat{Y'})$ to \eqref{eq:EMF} with $\mat{Z}$ and $\Sigma$ is equivalent to $(\mat{X}, \mat{Y})$, written $(\mat{X'}, \mat{Y'}) \sim (\mat{X}, \mat{Y})$, in the sense that there exists a permutation matrix $\mat{P} \in \permutation{r}$ and a diagonal matrix $\mat{D} \in \diagonal{r}$ with nonzero diagonal entries such that $(\mat{X'}, \mat{Y'}) = (\mat{X} \mat{D} \mat{P}, \mat{Y} \mat{D}^{-1} \mat{P})$; or, alternatively, there exists a generalized permutation matrix $\mat{G} \in \genperm{r}$ such that $(\mat{X'}, \mat{Y'}) = \left(\mat{X} \mat{G}, \mat{Y} \transpose{(\mat{G}^{-1})} \right)$.
\end{definition}
For any set $\Sigma$ of pairs of factors, the set of all pairs $(\mat{X}, \mat{Y}) \in \Sigma$ such that $(\mat{X}, \mat{Y})$  is the PS-unique EMF of $ \mat{Z} := \mat{X} \transpose{\mat{Y}} \text{ in } \Sigma$ is denoted $\unique(\Sigma)$. In other words, we define:
\begin{equation}
	\unique(\Sigma) := \left \{ (\mat{X}, \mat{Y}) \in \Sigma \; | \; \forall (\mat{X'}, \mat{Y'}) \in \Sigma, \; \mat{X'} \transpose{\mat{Y'}} = \mat{X} \transpose{\mat{Y}} \implies (\mat{X'}, \mat{Y'}) \sim (\mat{X}, \mat{Y})  \right \}.
\end{equation}

\begin{remark}
	The property $(\mat{X}, \mat{Y}) \in \unique(\Sigma)$, referred to as \emph{instance} PS-uniqueness of an EMF in $\Sigma$, corresponds to the notion of weak identifiability in \cite{li2017identifiability}, while the property $\unique(\Sigma) = \Sigma$, referred to as \emph{uniform} PS-uniqueness of EMF in $\Sigma$ (or uniform identifiability), corresponds to the notion of strong identifiability in \cite{li2017identifiability}. In this paper, we prefer the terminology ``instance" and ``uniform", for they are more explicit about whether the uniqueness is specific to a particular instance of pair of factors.
\end{remark}

\begin{remark}
	When considering the set $\Sigma$ of pairs of nonnegative factors, $\unique(\Sigma)$ has  been characterized with necessary conditions and sufficient conditions in \cite[Chapter 4]{gillis2020nonnegative}, in the sense that we can rewrite identifiability conditions  in NMF as inclusions of the set $\unique(\Sigma)$ with respect to other sets. 
\end{remark}

This paper aims to characterize $\unique(\Sigma_{\Omega})$ for $\Sigma_{\Omega}$ defined as in \eqref{eq:model-set-family-pairs} with $\Omega$ any family of pairs of supports that is \emph{stable} by permutation. 
Our analysis of such uniqueness property relies on the following abstract but simple lemma. 

\begin{lemma}
	\label{lemma:nec-conditions-EMF}
	Let $\Sigma$ be any set of pairs of factors, and $(\mat{X}, \mat{Y}) \in \Sigma$ be a pair of factors. Then:
	\begin{displaymath}
		(\mat{X}, \mat{Y}) \in \unique(\Sigma) \iff (\mat{X}, \mat{Y}) \in \bigcap_{\substack{\Sigma' \subseteq \Sigma \\ (\mat{X}, \mat{Y}) \in \Sigma'}} \unique(\Sigma').
	\end{displaymath}
\end{lemma}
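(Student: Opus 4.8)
The plan is to recognize that this equivalence is a purely set-theoretic monotonicity statement about the operator $\unique(\cdot)$, and that the apparent strength of the right-hand side is illusory because the full set $\Sigma$ is itself one of the subsets $\Sigma'$ indexing the intersection. Concretely, the index set $\{\Sigma' \mid \Sigma' \subseteq \Sigma, \; (\mat{X}, \mat{Y}) \in \Sigma'\}$ contains $\Sigma' = \Sigma$ (since $(\mat{X}, \mat{Y}) \in \Sigma$ by hypothesis), so it is nonempty and the intersection is well-defined.

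For the implication $\Leftarrow$, I would simply instantiate the intersection at $\Sigma' = \Sigma$: if $(\mat{X}, \mat{Y})$ lies in $\unique(\Sigma')$ for every admissible $\Sigma'$, then in particular it lies in $\unique(\Sigma)$. No further argument is needed.

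For the implication $\Rightarrow$, the key is a monotonicity observation. Unfolding $(\mat{X}, \mat{Y}) \in \unique(\Sigma)$ via its definition gives the universally quantified implication that every $(\mat{X'}, \mat{Y'}) \in \Sigma$ with $\mat{X'}\transpose{\mat{Y'}} = \mat{X}\transpose{\mat{Y}}$ satisfies $(\mat{X'}, \mat{Y'}) \sim (\mat{X}, \mat{Y})$. Fix any admissible $\Sigma' \subseteq \Sigma$ with $(\mat{X}, \mat{Y}) \in \Sigma'$. Restricting the range of the universal quantifier from $(\mat{X'}, \mat{Y'}) \in \Sigma$ to the smaller set $(\mat{X'}, \mat{Y'}) \in \Sigma'$ can only reduce the collection of competitors to be ruled out, so the implication remains valid over $\Sigma'$; combined with $(\mat{X}, \mat{Y}) \in \Sigma'$, this is exactly the statement $(\mat{X}, \mat{Y}) \in \unique(\Sigma')$. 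Since $\Sigma'$ was arbitrary, $(\mat{X}, \mat{Y})$ lies in the intersection.

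The only point requiring any care — and it is minor — is that membership in $\unique(\Sigma')$ carries the side condition $(\mat{X}, \mat{Y}) \in \Sigma'$, which must be supplied alongside the uniqueness implication; here it is guaranteed precisely because the intersection ranges only over subsets $\Sigma'$ that already contain $(\mat{X}, \mat{Y})$. I do not expect any genuine obstacle in this proof. Its value is organizational: it licenses decomposing the verification of $\unique(\Sigma)$ into the verification of $\unique(\Sigma')$ over a convenient collection of smaller pieces, which I expect to be exploited later to reduce uniqueness over the union of subspaces $\Sigma_{\Omega}$ to uniqueness over the individual subspaces $\Sigma_{\pair{S}}$.
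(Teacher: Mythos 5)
Your proof is correct and is essentially the paper's own argument: the forward direction is the same monotonicity/quantifier-restriction step (a competitor in $\Sigma' \subseteq \Sigma$ is a competitor in $\Sigma$, hence equivalent to $(\mat{X}, \mat{Y})$, and the side condition $(\mat{X}, \mat{Y}) \in \Sigma'$ is supplied by the index set), and the converse is the same instantiation at $\Sigma' := \Sigma$. No gaps.
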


\begin{remark}
	The spirit of this result is reminiscent of \cite[Chapter 3]{gillis2020nonnegative} where restricted exact nonnegative matrix factorization can be seen as a subset of exact nonnegative matrix factorization.
\end{remark}

\begin{proof}
	Let $(\mat{X}, \mat{Y}) \in \unique(\Sigma)$, and consider $\Sigma' \subseteq \Sigma$ such that $(\mat{X}, \mat{Y})  \in \Sigma'$, as well as $(\mat{X'}, \mat{Y'}) \in \Sigma'$ such that $\mat{X'} \transpose{\mat{Y'}} = \mat{X} \transpose{\mat{Y}}$. Since $\Sigma' \subseteq \Sigma$, we have $(\mat{X'}, \mat{Y'}) \in \Sigma$, and because $(\mat{X}, \mat{Y}) \in \unique(\Sigma)$, $(\mat{X'}, \mat{Y'}) \sim (\mat{X}, \mat{Y})$. Moreover, $(\mat{X}, \mat{Y}) \in \Sigma'$. In conclusion, $(\mat{X}, \mat{Y}) \in \unique(\Sigma')$. This is true for every $\Sigma' \subseteq \Sigma$, so it proves one implication. 
	The converse is true by considering the particular case $\Sigma' := \Sigma$.
\end{proof}

\subsection{Non-degeneration properties for a pair of factors}
\label{section:non-degen-prop}
A first analysis of PS-uniqueness of an EMF in $\Sigma_{\Omega}$ leads to the formulation of two non-degeneration properties for a pair of factors, i.e., two necessary conditions for identifiability,  involving their so-called \emph{column support}.
They can be derived from the following trivial but crucial observation. 
\begin{lemma}
	\label{lemma:on-met-ce-qu-on-veut-a-droite}
	Let $\Sigma$ be any set of pairs of factors, $\mat{X}$ be a left factor, and $\mat{Y}, \mat{Y'}$ be two right factors such that $(\mat{X}, \mat{Y}), (\mat{X}, \mat{Y'}) \in \Sigma$. If $\mat{X}\transpose{\mat{Y}}=\mat{X}\transpose{\mat{Y}'}$ and $\card(\colsupp(\mat{Y'})) \neq \card(\colsupp(\mat{Y}))$, then $(\mat{X}, \mat{Y}) \not\sim (\mat{X}, \mat{Y'})$ and hence $(\mat{X}, \mat{Y}) \notin \unique(\Sigma)$ and $(\mat{X}, \mat{Y'}) \notin \unique(\Sigma)$. This is true in particular if there exists an index $i \in \integerSet{r}$ for which $\matcol{\mat{X}}{i} = \vctor{0}$, $\matcol{\mat{Y}}{i} = \vctor{0}$, $\matcol{\mat{Y'}}{i} \neq \vctor{0}$, and $\matcol{\mat{Y}}{j} = \matcol{\mat{Y'}}{j}$ for all $j \neq i$.
\end{lemma}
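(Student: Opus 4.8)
The plan is to treat the general statement first and then verify the explicit construction as a special case. For the general statement, I would exploit the fact that the equivalence relation $\sim$ is rigid with respect to column supports. Concretely, suppose for contradiction that $(\mat{X}, \mat{Y}) \sim (\mat{X}, \mat{Y'})$; by definition there is a generalized permutation matrix $\mat{G} = \mat{D}\mat{P} \in \genperm{r}$ with $\mat{Y'} = \mat{Y} \transpose{(\mat{G}^{-1})}$. The point is that $\transpose{(\mat{G}^{-1})} = \mat{D}^{-1}\mat{P}$ is itself a generalized permutation matrix, so right-multiplication by it only rescales and reindexes the columns of $\mat{Y}$; in particular it maps nonzero columns to nonzero columns bijectively, whence $\card(\colsupp(\mat{Y'})) = \card(\colsupp(\mat{Y}))$. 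Taking the contrapositive, the hypothesis $\card(\colsupp(\mat{Y'})) \neq \card(\colsupp(\mat{Y}))$ forces $(\mat{X}, \mat{Y}) \not\sim (\mat{X}, \mat{Y'})$. I would isolate this ``number of nonzero columns is a $\sim$-invariant'' observation as the backbone of the argument.

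Once non-equivalence is established, the two exclusions follow directly from the definition of $\unique(\Sigma)$: both pairs belong to $\Sigma$ and produce the same matrix, $\mat{X}\transpose{\mat{Y}} = \mat{X}\transpose{\mat{Y'}}$, yet they are not $\sim$-equivalent. Hence $(\mat{X}, \mat{Y'})$ is a witness disqualifying $(\mat{X}, \mat{Y})$ from $\unique(\Sigma)$, and, using that $\sim$ is symmetric (being induced by the group action of $\genperm{r}$), $(\mat{X}, \mat{Y})$ symmetrically disqualifies $(\mat{X}, \mat{Y'})$.

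For the ``in particular'' clause, I would simply check that the displayed construction meets the two hypotheses. Writing both products as sums of rank-one terms gives $\mat{X}\transpose{\mat{Y}} - \mat{X}\transpose{\mat{Y'}} = \sum_{k=1}^{r} \matcol{\mat{X}}{k}(\transpose{\matcol{\mat{Y}}{k}} - \transpose{\matcol{\mat{Y'}}{k}})$; each summand with $k \neq i$ vanishes since $\matcol{\mat{Y}}{k} = \matcol{\mat{Y'}}{k}$, and the $k = i$ summand vanishes since $\matcol{\mat{X}}{i} = \vctor{0}$, so $\mat{X}\transpose{\mat{Y}} = \mat{X}\transpose{\mat{Y'}}$. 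For the column supports, the two factors agree off index $i$, while $i \notin \colsupp(\mat{Y})$ and $i \in \colsupp(\mat{Y'})$, so $\colsupp(\mat{Y'}) = \colsupp(\mat{Y}) \cup \{i\}$ and the cardinalities differ by exactly one; the general statement then applies verbatim. The computations are routine, and the only step warranting real care --- the closest thing to an obstacle --- is the bookkeeping verifying that $\transpose{(\mat{G}^{-1})}$ is again a generalized permutation matrix, since this is precisely what makes the nonzero-column count an invariant of $\sim$ and drives the whole lemma.
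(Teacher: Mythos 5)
Your proof is correct and is exactly the argument the paper has in mind: the paper states this lemma as a ``trivial but crucial observation'' and gives no proof, and your key step --- that $\transpose{(\mat{G}^{-1})} = \mat{D}^{-1}\mat{P}$ is again a generalized permutation matrix, so the number of nonzero columns is a $\sim$-invariant --- together with the rank-one expansion verifying $\mat{X}\transpose{\mat{Y}} = \mat{X}\transpose{\mat{Y'}}$ in the special case, is precisely the intended justification. No gaps; the argument is complete.
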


The first non-degeneration property for identifiability of an EMF in $\Sigma_{\Omega}$ thus requires the left and right factors to have \emph{the same column supports}. Define the set of pairs of factors with \emph{identical column supports} in $\Sigma_{\Omega}$ as 
\begin{equation}
	\label{eq:DefIC}
	\idcolsupp{\Omega} := \left \{ (\mat{X}, \mat{Y}) \in \Sigma_{\Omega} \; | \; \colsupp (\mat{X}) = \colsupp (\mat{Y}) \right\}. 
\end{equation}

\begin{lemma}
	\label{lemma:identical-colsupp}
	For any family of pairs of supports $\Omega$, we have: $\unique(\Sigma_{\Omega}) \subseteq \idcolsupp{\Omega}$.
\end{lemma}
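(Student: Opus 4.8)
The plan is to argue by contraposition: I will show that any pair $(\mat{X}, \mat{Y}) \in \Sigma_{\Omega}$ with $\colsupp(\mat{X}) \neq \colsupp(\mat{Y})$ fails to be PS-unique, i.e.\ $(\mat{X}, \mat{Y}) \notin \unique(\Sigma_{\Omega})$. Since $(\mat{X}, \mat{Y}) \in \Sigma_{\Omega}$, there is a pair of supports $\pair{S} = (\leftsupp{S}, \rightsupp{S}) \in \Omega$ with $(\mat{X}, \mat{Y}) \in \Sigma_{\pair{S}}$, so $\supp(\mat{X}) \subseteq \supp(\leftsupp{S})$ and $\supp(\mat{Y}) \subseteq \supp(\rightsupp{S})$. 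As $\colsupp(\mat{X}) \neq \colsupp(\mat{Y})$, there is an index $i \in \integerSet{r}$ lying in exactly one of the two column supports.

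The core of the argument is that a column index carrying a zero column in one factor contributes a vanishing rank-one term to $\mat{X}\transpose{\mat{Y}} = \sum_{j} \matcol{\mat{X}}{j}\transpose{(\matcol{\mat{Y}}{j})}$, so the opposite factor may be altered freely on that index without changing the product. Concretely, if $\matcol{\mat{X}}{i} = \vctor{0}$ while $\matcol{\mat{Y}}{i} \neq \vctor{0}$, I define $\mat{Y}'$ by zeroing the $i$-th column of $\mat{Y}$ and keeping all others. Then $\supp(\mat{Y}') \subseteq \supp(\mat{Y}) \subseteq \supp(\rightsupp{S})$, so $(\mat{X}, \mat{Y}') \in \Sigma_{\pair{S}} \subseteq \Sigma_{\Omega}$, while $\mat{X}\transpose{\mat{Y}'} = \mat{X}\transpose{\mat{Y}}$ and $\card(\colsupp(\mat{Y}')) = \card(\colsupp(\mat{Y})) - 1$. \Cref{lemma:on-met-ce-qu-on-veut-a-droite} then yields $(\mat{X}, \mat{Y}') \not\sim (\mat{X}, \mat{Y})$ and hence $(\mat{X}, \mat{Y}) \notin \unique(\Sigma_{\Omega})$. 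This construction uses no hypothesis on $\Omega$ beyond $\pair{S} \in \Omega$, which is precisely why the claim holds for an arbitrary family (no permutation stability is required).

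The symmetric case $\matcol{\mat{X}}{i} \neq \vctor{0}$, $\matcol{\mat{Y}}{i} = \vctor{0}$ is the only point requiring care, since \Cref{lemma:on-met-ce-qu-on-veut-a-droite} is phrased for perturbations of the right factor. Here I would instead zero the $i$-th column of $\mat{X}$ to obtain $\mat{X}'$ with $(\mat{X}', \mat{Y}) \in \Sigma_{\pair{S}} \subseteq \Sigma_{\Omega}$, $\mat{X}'\transpose{\mat{Y}} = \mat{X}\transpose{\mat{Y}}$, and $\card(\colsupp(\mat{X}')) = \card(\colsupp(\mat{X})) - 1$. To conclude $(\mat{X}', \mat{Y}) \not\sim (\mat{X}, \mat{Y})$, I rely on the fact that $\sim$ acts by a common generalized permutation on both factors: if $(\mat{X}', \mat{Y}') = (\mat{X}\mat{D}\mat{P}, \mat{Y}\mat{D}^{-1}\mat{P})$, then each of $\mat{X}'$ and $\mat{Y}'$ has its columns merely scaled by nonzero entries and permuted, so $\card(\colsupp(\mat{X}'))$ and $\card(\colsupp(\mat{Y}'))$ are individually invariant. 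This is exactly the invariance underlying \Cref{lemma:on-met-ce-qu-on-veut-a-droite}; its left-factor analogue (equivalently, transposing $\mat{Z} = \mat{X}\transpose{\mat{Y}}$ and using that $\sim$ is symmetric in its two arguments) gives the non-equivalence, so $(\mat{X}, \mat{Y}) \notin \unique(\Sigma_{\Omega})$ in this case as well. Both cases together establish the inclusion $\unique(\Sigma_{\Omega}) \subseteq \idcolsupp{\Omega}$. The only genuine obstacle is this left/right asymmetry in the stated lemma, which I resolve via the symmetry of $\sim$ rather than through any new computation.
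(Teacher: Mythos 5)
Your proof is correct and follows essentially the same route as the paper: contraposition, zeroing the column of the factor opposite a vanishing column to produce a non-equivalent pair in $\Sigma_{\Omega}$ with the same product, via \Cref{lemma:on-met-ce-qu-on-veut-a-droite}. The left/right asymmetry you resolve explicitly through the symmetry of $\sim$ is exactly what the paper handles with its ``up to matrix transposition, without loss of generality'' reduction, so the two arguments coincide in substance.
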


\begin{proof}
	We prove the contraposition. Let $(\mat{X}, \mat{Y}) \in \Sigma_{\Omega}$, and suppose that $\colsupp(\mat{X}) \neq \colsupp(\mat{Y})$. Up to matrix transposition, we can suppose without loss of generality that $\colsupp(\mat{Y})$ is not a subset of $\colsupp(\mat{X})$, so there is $i \in \integerSet{r}$ such that $\matcol{\mat{X}}{i} = \vctor{0}$ and $\matcol{\mat{Y}}{i} \neq \vctor{0}$. 
	Define  $\mat{Y'}$ a right factor such that $\matcol{\mat{Y'}}{i} = \vctor{0}$ and $\submat{\mat{Y'}}{\integerSet{r} \backslash \{ i \}} = \submat{\mat{Y}}{\integerSet{r} \backslash \{ i \}}$. By construction, $\supp(\mat{Y'}) \subseteq \supp(\mat{Y})$, so $(\mat{X}, \mat{Y'}) \in \Sigma_{\Omega}$.
	Applying \Cref{lemma:on-met-ce-qu-on-veut-a-droite} to $\Sigma = \Sigma_{\Omega}$, we obtain $(\mat{X}, \mat{Y}) \notin \unique(\Sigma_{\Omega})$.
\end{proof}

The second non-degeneration property for a pair of factors requires the column supports of the left and right factors to be ``maximal''. Define the set of pairs of factors with \emph{maximal column supports} in $\Sigma_{\Omega}$ as 
\begin{align}
	\label{eq:DefMC}
	\maxcolsupp{\Omega} := 
	\{ (\mat{X}, \mat{Y}) \in \Sigma_{\Omega} \; | \; \forall \pair{S} \in \Omega \text{ such that } (\mat{X}, \mat{Y}) \in \Sigma_{\pair{S}}, \, & \colsupp(\mat{X}) = \colsupp(\leftsupp{S})\\
	\text{ and } & \colsupp(\mat{Y}) = \colsupp(\rightsupp{S})\}. \notag
\end{align}

\begin{lemma}
	\label{lemma:maximal-colsupp}
	For any family of pairs of supports $\Omega$, we have: $\unique(\Sigma_{\Omega}) \subseteq \maxcolsupp{\Omega}$.
\end{lemma}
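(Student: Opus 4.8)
The plan is to argue by contraposition, showing that any pair $(\mat{X}, \mat{Y}) \in \Sigma_\Omega \setminus \maxcolsupp{\Omega}$ fails to be PS-unique, i.e.\ lies outside $\unique(\Sigma_\Omega)$. First I would exploit \Cref{lemma:identical-colsupp}: it suffices to prove $\unique(\Sigma_\Omega) \subseteq \idcolsupp{\Omega} \cap \maxcolsupp{\Omega}$, so I may restrict attention to a pair already satisfying $\colsupp(\mat{X}) = \colsupp(\mat{Y}) =: I$. Failure of maximality then means there is a witness $\pair{S} = (\leftsupp{S},\rightsupp{S}) \in \Omega$ with $(\mat{X},\mat{Y}) \in \Sigma_{\pair{S}}$ and either $\colsupp(\leftsupp{S}) \neq I$ or $\colsupp(\rightsupp{S}) \neq I$. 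Since $\supp(\mat{X}) \subseteq \supp(\leftsupp{S})$ forces $\colsupp(\mat{X}) \subseteq \colsupp(\leftsupp{S})$ (a nonzero column of $\mat{X}$ sits inside a nonzero column of $\leftsupp{S}$), and symmetrically on the right, these become strict inclusions $I \subsetneq \colsupp(\leftsupp{S})$ or $I \subsetneq \colsupp(\rightsupp{S})$.

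Next I would reduce to a single case by transposition symmetry: writing $\Omega^\top := \{(\rightsupp{S},\leftsupp{S}) : (\leftsupp{S},\rightsupp{S}) \in \Omega\}$, one checks that $(\mat{X},\mat{Y}) \in \unique(\Sigma_\Omega)$ iff $(\mat{Y},\mat{X}) \in \unique(\Sigma_{\Omega^\top})$ (using $\mat{X}\transpose{\mat{Y}} = \transpose{(\mat{Y}\transpose{\mat{X}})}$ and the fact that $\mat{G} \mapsto \transpose{(\mat{G}^{-1})}$ maps $\genperm{r}$ onto itself, so the equivalence $\sim$ is preserved under swapping the two factors), and likewise that maximality is invariant under this swap. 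Hence without loss of generality $I \subsetneq \colsupp(\rightsupp{S})$, and I pick an index $i \in \colsupp(\rightsupp{S}) \setminus I$. By construction $\matcol{\mat{X}}{i} = \vctor{0}$ and $\matcol{\mat{Y}}{i} = \vctor{0}$, while column $i$ of $\rightsupp{S}$ is nonzero.

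The core construction exhibits a competing factorization. Let $\mat{Y'}$ coincide with $\mat{Y}$ on every column $j \neq i$ and set $\matcol{\mat{Y'}}{i}$ to be any nonzero vector whose support is contained in the $i$-th column of $\rightsupp{S}$ (possible since that column is nonzero). Then $\supp(\mat{Y'}) \subseteq \supp(\rightsupp{S})$, so $(\mat{X}, \mat{Y'}) \in \Sigma_{\pair{S}} \subseteq \Sigma_\Omega$; and because $\matcol{\mat{X}}{i} = \vctor{0}$ we get $\mat{X}\transpose{(\mat{Y}-\mat{Y'})} = \matcol{\mat{X}}{i}\transpose{(\matcol{\mat{Y}}{i} - \matcol{\mat{Y'}}{i})} = \mat{0}$, i.e.\ $\mat{X}\transpose{\mat{Y}} = \mat{X}\transpose{\mat{Y'}}$. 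This is exactly the ``in particular'' situation of \Cref{lemma:on-met-ce-qu-on-veut-a-droite} applied with $\Sigma := \Sigma_\Omega$, which yields $(\mat{X},\mat{Y}) \notin \unique(\Sigma_\Omega)$ and establishes the contrapositive.

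The one genuinely delicate point — the main obstacle — is the bookkeeping around the \emph{universal} quantifier in the definition of $\maxcolsupp{\Omega}$: non-maximality supplies only a single witness $\pair{S}$, and the perturbation must live in $\Sigma_{\pair{S}}$ for that specific $\pair{S}$, which is why it is essential to build $\matcol{\mat{Y'}}{i}$ from the \emph{allowed} support $\rightsupp{S}$ rather than from the support of $\mat{Y}$ (which is zero there). The transposition reduction and the preliminary use of \Cref{lemma:identical-colsupp} — the latter guaranteeing that the offending column is zero in \emph{both} factors, so the product is genuinely unchanged — are the two supporting ingredients that let the single invocation of \Cref{lemma:on-met-ce-qu-on-veut-a-droite} go through cleanly.
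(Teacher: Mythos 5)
Your proof is correct and takes essentially the same route as the paper's: after reducing via \Cref{lemma:identical-colsupp} to identical column supports and using transposition to treat only the case $\colsupp(\mat{Y}) \subsetneq \colsupp(\rightsupp{S})$, you perturb the zero column $i$ of $\mat{Y}$ by a nonzero vector supported in $\matcol{\rightsupp{S}}{i}$ and invoke the ``in particular'' case of \Cref{lemma:on-met-ce-qu-on-veut-a-droite}, exactly the mechanism the paper builds its non-degeneration lemmas on. Your emphasis on drawing the new column from the allowed support $\rightsupp{S}$ (rather than $\supp(\mat{Y})$, which is zero there) is indeed the key point, and the argument is complete.
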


The proof is deferred to  \cref{proof:lemma-maximal-colsupp}.

Therefore, without loss of generality, we only need to characterize the pairs $(\mat{X}, \mat{Y}) \in \idcolsupp{\Omega} \cap \maxcolsupp{\Omega}$ such that $(\mat{X}, \mat{Y})$ is the PS-unique EMF of $\mat{Z} := \mat{X} \transpose{\mat{Y}}$ in $\Sigma_{\Omega}$. 
Considering these two non-degeneration properties, we conclude this section by a key result which will be useful in \cref{section:EMD-identifiability}. The proof is deferred to  \cref{proof:prop-non-degenerate-properties}.

\begin{proposition}
	\label{prop:non-degenerate-properties}
	For any family of pairs of supports $\Omega$ that is stable by permutation, we have: 
	\begin{displaymath}
		\unique (\Sigma_{\Omega}) = \unique (\idcolsupp{\Omega}) \cap \maxcolsupp{\Omega}.
	\end{displaymath}
\end{proposition}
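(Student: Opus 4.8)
The plan is to prove the two inclusions separately. The inclusion $\unique(\Sigma_{\Omega})\subseteq\unique(\idcolsupp{\Omega})\cap\maxcolsupp{\Omega}$ I would obtain directly from the tools already in hand: \Cref{lemma:identical-colsupp} and \Cref{lemma:maximal-colsupp} give $\unique(\Sigma_{\Omega})\subseteq\idcolsupp{\Omega}$ and $\unique(\Sigma_{\Omega})\subseteq\maxcolsupp{\Omega}$. Since $\idcolsupp{\Omega}\subseteq\Sigma_{\Omega}$ contains any $(\mat{X},\mat{Y})\in\unique(\Sigma_{\Omega})$, \Cref{lemma:nec-conditions-EMF} applied with $\Sigma=\Sigma_{\Omega}$ and $\Sigma'=\idcolsupp{\Omega}$ upgrades this to $(\mat{X},\mat{Y})\in\unique(\idcolsupp{\Omega})$. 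Combined with membership in $\maxcolsupp{\Omega}$, this settles one inclusion with no further work.

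For the converse I would fix $(\mat{X},\mat{Y})\in\unique(\idcolsupp{\Omega})\cap\maxcolsupp{\Omega}$ and an arbitrary competitor $(\mat{X'},\mat{Y'})\in\Sigma_{\Omega}$ with $\mat{X'}\transpose{\mat{Y'}}=\mat{X}\transpose{\mat{Y}}$, aiming to show $(\mat{X'},\mat{Y'})\sim(\mat{X},\mat{Y})$. First I would \emph{truncate} the competitor to its common column support $J:=\colsupp(\mat{X'})\cap\colsupp(\mat{Y'})$, zeroing out every column indexed outside $J$ in both factors to form a pair $(\mat{X''},\mat{Y''})$. Reading the product as a sum of rank-one terms $\mat{X'}\transpose{\mat{Y'}}=\sum_i\matcol{\mat{X'}}{i}\transpose{\matcol{\mat{Y'}}{i}}$ shows that the discarded columns contribute nothing, so $\mat{X''}\transpose{\mat{Y''}}=\mat{X}\transpose{\mat{Y}}$; moreover the supports only shrink, so $(\mat{X''},\mat{Y''})$ stays in the same $\Sigma_{\pair{S'}}\subseteq\Sigma_{\Omega}$ as $(\mat{X'},\mat{Y'})$, while by construction $\colsupp(\mat{X''})=\colsupp(\mat{Y''})=J$, i.e.\ $(\mat{X''},\mat{Y''})\in\idcolsupp{\Omega}$. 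Uniqueness in $\idcolsupp{\Omega}$ then forces $(\mat{X''},\mat{Y''})\sim(\mat{X},\mat{Y})$, and in particular $\card(J)=\card(\colsupp(\mat{X}))=\card(\colsupp(\mat{Y}))$, since the equivalence only permutes and rescales nonzero columns.

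The hard part, and the step where both \emph{maximality} and \emph{stability by permutation} of $\Omega$ are essential, is to show that the truncation discarded nothing, i.e.\ $\colsupp(\mat{X'})=\colsupp(\mat{Y'})=J$; I would do this by a cardinality squeeze. Writing the equivalence as $(\mat{X},\mat{Y})=(\mat{X''}\mat{D}\mat{P},\mat{Y''}\mat{D}^{-1}\mat{P})$ with $\mat{D}\in\diagonal{r}$, $\mat{P}\in\permutation{r}$, and using $(\mat{X''},\mat{Y''})\in\Sigma_{\pair{S'}}$, the supports of $\mat{X}$ and $\mat{Y}$ land inside those of $\leftsupp{S'}\mat{P}$ and $\rightsupp{S'}\mat{P}$; stability by permutation then gives $\pair{S''}:=(\leftsupp{S'}\mat{P},\rightsupp{S'}\mat{P})\in\Omega$ with $(\mat{X},\mat{Y})\in\Sigma_{\pair{S''}}$. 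Maximality of $(\mat{X},\mat{Y})$ yields $\colsupp(\mat{X})=\colsupp(\leftsupp{S''})$, so $\card(\colsupp(\mat{X}))=\card(\colsupp(\leftsupp{S'}))$ since a permutation preserves the number of nonzero columns, while $\mat{X'}\in\Sigma_{\leftsupp{S'}}$ forces $\colsupp(\mat{X'})\subseteq\colsupp(\leftsupp{S'})$. Hence
\[
\card(J)\le\card(\colsupp(\mat{X'}))\le\card(\colsupp(\leftsupp{S'}))=\card(\colsupp(\mat{X}))=\card(J),
\]
so all inequalities are equalities and, as $J\subseteq\colsupp(\mat{X'})$ with equal finite cardinality, $\colsupp(\mat{X'})=J$; the symmetric argument on the right factor gives $\colsupp(\mat{Y'})=J$. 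Therefore $\mat{X'}=\mat{X''}$ and $\mat{Y'}=\mat{Y''}$, whence $(\mat{X'},\mat{Y'})\sim(\mat{X},\mat{Y})$, closing the converse inclusion and hence the proposition. I expect the bookkeeping of how supports transform under right multiplication by $\mat{D}\mat{P}$, together with the correct invocation of permutation-stability to re-enter $\Omega$, to be the only delicate points.
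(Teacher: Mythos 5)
Your proof is correct and follows essentially the same route as the paper's: the forward inclusion via \Cref{lemma:identical-colsupp}, \Cref{lemma:maximal-colsupp} and \Cref{lemma:nec-conditions-EMF}, and the converse by truncating the competitor to the common column support $J$, invoking uniqueness in $\idcolsupp{\Omega}$, and then using stability by permutation together with maximality in a cardinality squeeze to show the truncation was vacuous. The delicate points you flag (symmetry of $\sim$, how supports transform under right multiplication by $\mat{D}\mat{P}$, and re-entering $\Omega$ via $(\leftsupp{S'}\mat{P},\rightsupp{S'}\mat{P})$) are handled correctly, so nothing is missing.
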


\section{Identifiability when fixing one factor}
\label{section:right-identifiability}

A natural analysis of PS-uniqueness in EMF with sparsity constraints is to consider the case where one of the factors is fixed. As one can always consider matrix transposition, the fixed factor will be the left one. For a family of right supports $\Theta \subseteq \mathbb{B}^{n \times r}$, denote:
\begin{equation}
	\Sigma_{\Theta} := \bigcup_{\mat{S} \in \Theta} \Sigma_{\mat{S}}, \quad \text{where we recall that } \Sigma_{\mat{S}} := \{ \mat{Y} \in \mathbb{C}^{n \times r} \; | \; \supp(\mat{Y}) \subseteq \mat{S} \}.
\end{equation}
Studying identifiability when fixing one factor is interesting because it allows one to characterize identifiability of an EMF. Indeed,
applying the general framework for studying identifiability up to a transformation group 
proposed by \cite{li2015unified} to exact sparse matrix factorization, we obtain the following proposition. The specific transformation group considered here is the group of permutations.

\begin{proposition}[Application of {\cite[Theorem 2.8]{li2015unified}}]
	\label{prop:identifiability-left-and-right}
	Suppose that $\Omega = \Lambda \times \Theta$ where $\Lambda \subseteq \mathbb{B}^{m \times r}$ and $\Theta \subseteq \mathbb{B}^{n \times r}$ are respectively families of left and right supports, and that $\Omega$ is stable by permutation. Then, $(\mat{X}, \mat{Y}) \in \unique(\Sigma_{\Omega})$ if, and only if, both of the following conditions are verified:
	\begin{enumerate}[label=(\roman*)]
		\item For all $(\mat{X'}, \mat{Y'}) \in \Sigma_{\Omega}$ such that $\mat{X'} \transpose{\mat{Y'}} = \mat{X} \transpose{\mat{Y}}$, there exists a generalized permutation matrix $\mat{G} \in \genperm{r}$ such that $\mat{X'} = \mat{X} \mat{G}$.
		\item The pair $(\mat{X}, \mat{Y})$ is the unique EMF of $\mat{Z} := \mat{X} \transpose{\mat{Y}}$ in $\{ \mat{X} \} \times \Sigma_{\Theta}$.
	\end{enumerate}
\end{proposition}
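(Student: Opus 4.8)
The plan is to prove both implications directly from \Cref{def:PS-uniqueness}, using the product structure $\Omega = \Lambda\times\Theta$ together with the permutation-stability of $\Omega$; I read condition (ii) as the statement $(\mat{X},\mat{Y})\in\unique(\{\mat{X}\}\times\Sigma_{\Theta})$. The key preliminary observation is that $\Theta$ itself is stable under right multiplication by permutation matrices. Indeed, $\Lambda\neq\emptyset$ since $(\mat{X},\mat{Y})\in\Sigma_{\Omega}$ forces some $\leftsupp{S}_0\in\Lambda$ with $\mat{X}\in\Sigma_{\leftsupp{S}_0}$; then for $\mat{T}\in\Theta$ and $\mat{P}\in\permutation{r}$ we have $(\leftsupp{S}_0,\mat{T})\in\Omega$, hence $(\leftsupp{S}_0\mat{P},\mat{T}\mat{P})\in\Omega=\Lambda\times\Theta$, so $\mat{T}\mat{P}\in\Theta$. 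Because right multiplication by a matrix of $\diagonal{r}$ only scales columns by nonzero scalars and leaves supports unchanged, it follows that $\Sigma_{\Theta}$ is invariant under right multiplication by any element of $\genperm{r}$. The product structure moreover gives $\{\mat{X}\}\times\Sigma_{\Theta}\subseteq\Sigma_{\Omega}$, since for $\mat{Y}'\in\Sigma_{\mat{T}}$ with $\mat{T}\in\Theta$ one has $(\leftsupp{S}_0,\mat{T})\in\Omega$ and $(\mat{X},\mat{Y}')\in\Sigma_{\leftsupp{S}_0}\times\Sigma_{\mat{T}}$.

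For the forward implication, assume $(\mat{X},\mat{Y})\in\unique(\Sigma_{\Omega})$. Condition (i) is immediate: any $(\mat{X}',\mat{Y}')\in\Sigma_{\Omega}$ with $\mat{X}'\transpose{\mat{Y}'}=\mat{X}\transpose{\mat{Y}}$ satisfies $(\mat{X}',\mat{Y}')\sim(\mat{X},\mat{Y})$, which in particular yields $\mat{X}'=\mat{X}\mat{G}$ for some $\mat{G}\in\genperm{r}$. Condition (ii) follows from \Cref{lemma:nec-conditions-EMF} applied to the subset $\Sigma':=\{\mat{X}\}\times\Sigma_{\Theta}\subseteq\Sigma_{\Omega}$, which contains $(\mat{X},\mat{Y})$: the lemma gives $(\mat{X},\mat{Y})\in\unique(\{\mat{X}\}\times\Sigma_{\Theta})$.

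For the converse, assume (i) and (ii), and let $(\mat{X}',\mat{Y}')\in\Sigma_{\Omega}$ satisfy $\mat{X}'\transpose{\mat{Y}'}=\mat{X}\transpose{\mat{Y}}=:\mat{Z}$. By (i) there is $\mat{G}\in\genperm{r}$ with $\mat{X}'=\mat{X}\mat{G}$. Introduce $\mat{Y}'':=\mat{Y}'\transpose{\mat{G}}$; then $\mat{X}\transpose{\mat{Y}''}=\mat{X}\mat{G}\transpose{\mat{Y}'}=\mat{X}'\transpose{\mat{Y}'}=\mat{Z}$, and by the preliminary observation $\mat{Y}''\in\Sigma_{\Theta}$, so $(\mat{X},\mat{Y}'')\in\{\mat{X}\}\times\Sigma_{\Theta}$. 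Applying (ii) produces $\mat{H}\in\genperm{r}$ with $\mat{X}=\mat{X}\mat{H}$ and $\mat{Y}''=\mat{Y}\transpose{(\mat{H}^{-1})}$. Setting $\mat{K}:=\mat{H}\mat{G}\in\genperm{r}$ and unwinding $\mat{Y}'=\mat{Y}''\transpose{(\mat{G}^{-1})}$ gives $\mat{Y}'=\mat{Y}\transpose{(\mat{K}^{-1})}$, while $\mat{X}'=\mat{X}\mat{G}=\mat{X}\mat{H}\mat{G}=\mat{X}\mat{K}$; hence $(\mat{X}',\mat{Y}')\sim(\mat{X},\mat{Y})$, proving $(\mat{X},\mat{Y})\in\unique(\Sigma_{\Omega})$.

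I expect the main obstacle, and the only place where the hypotheses on $\Omega$ are genuinely used beyond (i), to be the step showing $\mat{Y}''=\mat{Y}'\transpose{\mat{G}}\in\Sigma_{\Theta}$: without it, condition (ii) cannot be invoked, since $\{\mat{X}\}\times\Sigma_{\Theta}$ is the natural domain after fixing the left factor. This is exactly what the permutation-stability of $\Theta$ (inherited from that of $\Omega$ through the product structure) combined with the scaling-invariance of supports delivers. The remaining care is purely bookkeeping with the transpose and inverse conventions of \Cref{def:PS-uniqueness} when composing the two generalized permutations $\mat{G}$ and $\mat{H}$.
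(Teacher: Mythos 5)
Your proof is correct, but it takes a genuinely different route from the paper: the paper gives no self-contained argument and instead obtains the proposition as a direct application of a general theorem of Li, Lee and Bresler on identifiability in bilinear inverse problems up to a transformation group (here the permutation group), whereas you re-derive the specialization from first principles. Your forward direction coincides with what the paper itself observes in the remark following the proposition (condition (ii) is necessary via \Cref{lemma:nec-conditions-EMF} with $\Sigma' := \{\mat{X}\} \times \Sigma_{\Theta}$, and condition (i) is immediate from \Cref{def:PS-uniqueness}), and it correctly notes that no stability assumption is needed there. The real content of your argument is the converse, and you correctly isolate the one step where the hypotheses beyond (i) enter: the product structure $\Omega = \Lambda \times \Theta$ combined with permutation-stability of $\Omega$ yields permutation-stability of $\Theta$ alone, hence invariance of $\Sigma_{\Theta}$ under right multiplication by $\genperm{r}$ (supports being scaling-invariant), which is exactly what legitimizes transporting $\mat{Y'}$ to $\mat{Y''} := \mat{Y'}\transpose{\mat{G}} \in \Sigma_{\Theta}$ so that condition (ii) applies; the subsequent group bookkeeping ($\transpose{\mat{G}} \in \genperm{r}$, $\mat{Y'} = \mat{Y''}\transpose{(\mat{G}^{-1})}$, and $\mat{K} := \mat{H}\mat{G}$ giving $(\mat{X'}, \mat{Y'}) = (\mat{X}\mat{K}, \mat{Y}\transpose{(\mat{K}^{-1})})$) is consistent with the paper's convention in \Cref{def:PS-uniqueness}. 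What each approach buys: the paper's citation is shorter and situates the result within a unified framework covering other bilinear problems (e.g., blind calibration), while your proof is elementary, self-contained, and makes visible precisely which structural assumptions ($\Lambda \neq \emptyset$ via $(\mat{X},\mat{Y}) \in \Sigma_{\Omega}$, the product form of $\Omega$, and stability by permutation) are used and where — information the black-box citation hides. You also correctly disambiguate the phrase ``unique EMF'' in condition (ii) as PS-uniqueness in $\{\mat{X}\} \times \Sigma_{\Theta}$, which is the reading the paper's Section 3 confirms.
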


\begin{remark}
	In general, even without assuming that $\Omega$ is stable by permutation, $(\mat{X}, \mat{Y}) \in \unique(\{\mat{X}\} \times \Sigma_{\Theta})$ is a necessary condition for $(\mat{X}, \mat{Y}) \in \unique(\Sigma_{\Omega})$, by \Cref{lemma:nec-conditions-EMF} with $\Sigma' := \{\mat{X}\} \times \Sigma_{\Theta}$.
\end{remark}

This section will show that condition (ii), referred to as \emph{right identifiability}, can be entirely characterized using linear independence of specific subsets of columns in the fixed left factor $\mat{X}$. Characterization of condition (i) will be discussed in \cref{section:conclusion} as a future work. 
Throughout this section, we thus consider a fixed left factor $\mat{X}$ and a family of right supports $\Theta$, and we characterize the set $\unique(\{ \mat{X}\} \times \Sigma_{\Theta})$. Since the left factor is fixed, we will use an equivalent representation for the equivalences between two pairs of factors $(\mat{X}, \mat{Y})$ and $(\mat{X}, \mat{Y'})$.
Denote:
\begin{equation*}
	\label{eq:group-X-stable}
	\begin{split}
		\mathcal{G}(\mat{X}) &:= \{ \mat{G} \in \mathcal{G}_{r} \; | \; \mat{X} \mat{G} = \mat{X} \}, \\
		\mathcal{P}(\mat{X}) &:= \{ \mat{P} \in \mathcal{P}_{r} \; | \; \mat{X} \mat{P} = \mat{X} \}, \\
		\mathcal{D}(\mat{X}) &:= \{ \mat{D} \in \mathcal{D}_{r} \; | \; \mat{X} \mat{D} = \mat{X} \},
	\end{split}
\end{equation*}
respectively the subgroups of generalized permutation matrices, permutation matrices, and diagonal matrices with nonzero diagonal entries, that leave matrix $\mat{X}$ unchanged after right multiplication. 

\subsection{Eliminating scaling ambiguities}

The characterization of right identifiability, i.e., $(\mat{X}, \mat{Y}) \in \unique(\{\mat{X}\} \times \Sigma_{\Theta})$ with fixed left factor $\mat{X}$ in the family of right supports $\Theta$, starts with the following necessary condition, which can also be seen as a non-degeneration property in the spirit of \cref{section:non-degen-prop}.

\begin{lemma}
	\label{lemma:right-id-first-lemma}
	Suppose that $(\mat{X}, \mat{Y}) \in \unique(\{\mat{X}\} \times \Sigma_{\Theta})$. Then, $\colsupp(\mat{Y}) \subseteq \colsupp(\mat{X})$.
\end{lemma}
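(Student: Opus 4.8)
The plan is to prove the contrapositive: assuming $\colsupp(\mat{Y}) \not\subseteq \colsupp(\mat{X})$, I would exhibit a second right factor $\mat{Y'}$ with $\mat{X}\transpose{\mat{Y'}} = \mat{X}\transpose{\mat{Y}}$ and $(\mat{X}, \mat{Y'}) \in \{\mat{X}\}\times\Sigma_{\Theta}$ but $(\mat{X}, \mat{Y'}) \not\sim (\mat{X}, \mat{Y})$, which by definition forbids $(\mat{X}, \mat{Y}) \in \unique(\{\mat{X}\}\times\Sigma_{\Theta})$. The whole construction is engineered to land exactly in the hypotheses of the ``in particular'' clause of \Cref{lemma:on-met-ce-qu-on-veut-a-droite}, so that I can invoke it directly rather than re-derive the non-equivalence by hand. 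This is really the ``one direction'' of \Cref{lemma:identical-colsupp}: there both inclusions of column supports are recovered via a transposition symmetry, whereas here the left factor is frozen, so only $\colsupp(\mat{Y}) \subseteq \colsupp(\mat{X})$ survives and the transposition step is unavailable.

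Concretely, the failure of the inclusion provides an index $i \in \integerSet{r}$ with $\matcol{\mat{Y}}{i} \neq \vctor{0}$ and $\matcol{\mat{X}}{i} = \vctor{0}$. I would then define $\mat{Y'}$ to coincide with $\mat{Y}$ on every column $j \neq i$ and to have $\matcol{\mat{Y'}}{i} = \vctor{0}$. Three verifications remain, all routine: (a) $\mat{Y'} \in \Sigma_{\Theta}$, since $\supp(\mat{Y'}) \subseteq \supp(\mat{Y})$ and $\Sigma_{\Theta}$ is a union of model sets, hence downward closed under support inclusion; (b) $\mat{X}\transpose{\mat{Y'}} = \mat{X}\transpose{\mat{Y}}$, obtained by writing both products as sums of rank-one terms $\matcol{\mat{X}}{j}\transpose{(\matcol{\mat{Y}}{j})}$ and noting that the two sums differ only in the $i$-th term, which vanishes in both cases because $\matcol{\mat{X}}{i} = \vctor{0}$; (c) the triple $(\mat{X}, \mat{Y}, \mat{Y'})$ matches the required pattern, with the zeroed-out factor in the role of the lemma's $\mat{Y}$ and the given $\mat{Y}$ in the role of its $\mat{Y'}$.

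Applying \Cref{lemma:on-met-ce-qu-on-veut-a-droite} to $\Sigma := \{\mat{X}\}\times\Sigma_{\Theta}$ then yields $(\mat{X}, \mat{Y}) \notin \unique(\{\mat{X}\}\times\Sigma_{\Theta})$, which closes the contraposition. The only point that demands attention — and the nearest thing to an obstacle — is matching the role assignment correctly: the ``in particular'' clause labels the column-zero factor $\mat{Y}$ and the nonzero one $\mat{Y'}$, which is the opposite of the $\mat{Y}$ fixed in the present statement. Provided the constructed zeroed factor is fed into the lemma's $\mat{Y}$-slot and the given $\mat{Y}$ into its $\mat{Y'}$-slot, the cardinality hypothesis $\card(\colsupp(\mat{Y'})) \neq \card(\colsupp(\mat{Y}))$ holds automatically, since the two column supports differ exactly by the single index $i$, so no genuine computation is required.
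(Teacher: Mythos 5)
Your proposal is correct and follows exactly the paper's own argument: pick an index $i$ with $\matcol{\mat{Y}}{i} \neq \vctor{0}$ and $\matcol{\mat{X}}{i} = \vctor{0}$, zero out that column to build $\mat{Y'} \in \Sigma_{\Theta}$, and invoke \Cref{lemma:on-met-ce-qu-on-veut-a-droite} with $\Sigma := \{\mat{X}\} \times \Sigma_{\Theta}$. Your extra verifications (downward closure of $\Sigma_{\Theta}$, the rank-one expansion of the products, and the role swap in the ``in particular'' clause) are all sound and simply make explicit what the paper leaves implicit.
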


\begin{proof}
	Suppose that there exists $i \in \integerSet{r}$ such that $\matcol{\mat{Y}}{i} \neq \vctor{0}$ but $\matcol{\mat{X}}{i} = \vctor{0}$. Then, defining $\mat{Y'} \in \Sigma_{\Theta}$ such that $\matcol{\mat{Y'}}{i} = \vctor{0}$ and $\matcol{\mat{Y'}}{j} = \matcol{\mat{Y}}{j}$ for $j \neq i$, we apply \Cref{lemma:on-met-ce-qu-on-veut-a-droite} to show that $(\mat{X}, \mat{Y}) \notin \unique(\{ \mat{X} \} \times \Sigma_{\Theta})$.
\end{proof}

Hence, we can assume without loss of generality that $\colsupp(\mat{Y}) \subseteq \colsupp(\mat{X})$.
We now show that we can restrict our analysis only to column indices in $\colsupp(\mat{X})$. Let us introduce the notion of \emph{signature} for a family of supports. For any subset of column indices $J \subseteq \integerSet{r}$, define the signature on $J$ of the family $\Theta \subseteq \mathbb{B}^{n \times r}$ of right supports  as the family: 
\begin{equation}
	\label{eq:signature}
	\signature{\Theta}{J} := \{ \submat{\mat{S}}{J} \; | \; \mat{S} \in \Theta \} \subseteq \mathbb{B}^{\integerSet{n} \times J},
\end{equation}
where we recall that $\submat{\mat{S}}{J}$ is the submatrix of $\mat{S}$ in $\mathbb{B}^{\integerSet{n} \times J}$ with only columns indexed by $J$.

\begin{example}[Signature in a classical family of supports]
	Consider $\Theta$ the family of sparse supports which have $r$ columns, and are $s$-sparse by column. Then, for any subset $J \subseteq \integerSet{r}$, the signature of $\Theta$ on $J$ is  the family of subsupports $s$-sparse by column, where the column indices of the subsupports are indexed by $J$.
\end{example}

\begin{lemma}
	\label{lemma:eliminating-zero-col-right-id}
	Given a left factor $\mat{X}$ and a right factor $\mat{Y} \in \Sigma_{\Theta}$, suppose that $\colsupp(\mat{Y}) \subseteq \colsupp(\mat{X})$, and denote $J := \colsupp(\mat{X})$. Then:
	\begin{displaymath}
		(\mat{X}, \mat{Y}) \in \unique(\{ \mat{X} \} \times \Sigma_{\Theta}) \iff (\submat{\mat{X}}{J}, \submat{\mat{Y}}{J}) \in \unique(\{ \submat{\mat{X}}{J} \}\times \Sigma_{\signature{\Theta}{J}}).
	\end{displaymath}
\end{lemma}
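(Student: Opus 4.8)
The plan is to transport the problem to the columns indexed by $J$ via restriction, and back via zero-extension, and to check that this correspondence is compatible both with the product constraint and with the equivalence $\sim$.

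The first, elementary, step records that only the columns in $J$ matter for the product. Since $\matcol{\mat{X}}{i} = \vctor{0}$ for every $i \notin J$, the outer-product expansion gives $\mat{X}\transpose{\mat{Y'}} = \sum_i \matcol{\mat{X}}{i}\transpose{\matcol{\mat{Y'}}{i}} = \submat{\mat{X}}{J}\transpose{(\submat{\mat{Y'}}{J})}$ for \emph{every} right factor $\mat{Y'}$; in particular $\mat{Z} := \mat{X}\transpose{\mat{Y}} = \submat{\mat{X}}{J}\transpose{(\submat{\mat{Y}}{J})}$. I would then check that restriction $\mat{Y'} \mapsto \submat{\mat{Y'}}{J}$ maps competitors of $\{\mat{X}\}\times\Sigma_{\Theta}$ (elements of $\Sigma_{\Theta}$ realising $\mat{Z}$) into competitors of $\{\submat{\mat{X}}{J}\}\times\Sigma_{\signature{\Theta}{J}}$, and that zero-extension is an inverse operation: membership in the respective model sets follows directly from the definition of $\signature{\Theta}{J}$ in \eqref{eq:signature}, since $\supp(\mat{Y'})\subseteq\mat{S}$ gives $\supp(\submat{\mat{Y'}}{J})\subseteq\submat{\mat{S}}{J}\in\signature{\Theta}{J}$, while the zero-extension of a matrix with support in $\submat{\mat{S}}{J}$ has support in $\mat{S}$.

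The technical core is matching the equivalence relations, for which I would first establish a group-theoretic fact: every $\mat{G}\in\mathcal{G}(\mat{X})$ preserves the partition $\integerSet{r} = J \sqcup J^c$. Writing $\mat{G}$ as a generalized permutation with underlying permutation $\rho$ and nonzero scalars $(c_i)$, the identity $\mat{X}\mat{G} = \mat{X}$ reads $c_i \matcol{\mat{X}}{\rho(i)} = \matcol{\mat{X}}{i}$ for all $i$, so $\rho$ sends zero columns of $\mat{X}$ to zero columns, whence $\rho(J) = J$. Therefore $\mat{G}$ restricts to some $\mat{G}_J \in \mathcal{G}(\submat{\mat{X}}{J})$, and conversely any $\mat{G}_J \in \mathcal{G}(\submat{\mat{X}}{J})$ extends to an element of $\mathcal{G}(\mat{X})$ acting as the identity on $J^c$. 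Combined with the column formula $\matcol{(\mat{Y}\transpose{(\mat{G}^{-1})})}{i} = c_i^{-1}\matcol{\mat{Y}}{\rho(i)}$ and the hypothesis $\colsupp(\mat{Y}) \subseteq J$ (so that $\mat{Y}$, and hence every $\mat{Y}\transpose{(\mat{G}^{-1})}$, vanishes on $J^c$), this yields the dictionary that for any $\mat{Y'}$ with $\colsupp(\mat{Y'}) \subseteq J$ one has $(\mat{X},\mat{Y'}) \sim (\mat{X},\mat{Y})$ if and only if $(\submat{\mat{X}}{J}, \submat{\mat{Y'}}{J}) \sim (\submat{\mat{X}}{J}, \submat{\mat{Y}}{J})$. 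The forward implication then follows by zero-extending a restricted competitor, applying the full-problem uniqueness, and restricting the resulting equivalence; the converse follows by restricting a full competitor, applying the restricted uniqueness, and lifting the equivalence.

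The step I expect to be the main obstacle is the converse, and precisely the control of competitors supported outside $J$. A competitor $\mat{Y'} \in \Sigma_{\Theta}$ is constrained by the product only through its columns in $J$, so a priori it may be nonzero on some column $i \notin J$ where $\matcol{\mat{X}}{i} = \vctor{0}$; such a $\mat{Y'}$ is invisible to the restricted problem, yet can never satisfy $(\mat{X},\mat{Y'})\sim(\mat{X},\mat{Y})$, since any $\mat{Y}\transpose{(\mat{G}^{-1})}$ necessarily vanishes on $J^c$. Reconciling this is the crux: one must argue, in the spirit of the non-degeneration reductions of \Cref{lemma:on-met-ce-qu-on-veut-a-droite,lemma:right-id-first-lemma}, that the relevant competitors may be taken with $\colsupp(\mat{Y'}) \subseteq J$, so that the restriction/extension correspondence is a genuine bijection onto the competitor set and the above equivalence dictionary applies.
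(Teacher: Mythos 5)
Your machinery for transporting the problem is sound and complete as far as it goes: the identity $\mat{X}\transpose{\mat{Y'}} = \submat{\mat{X}}{J}\transpose{(\submat{\mat{Y'}}{J})}$, the restriction/zero-extension correspondence between $\Sigma_{\Theta}$ and $\Sigma_{\signature{\Theta}{J}}$, and the observation that every $\mat{G} \in \mathcal{G}(\mat{X})$ has underlying permutation preserving $J = \colsupp(\mat{X})$ (hence restricts to $\mathcal{G}(\submat{\mat{X}}{J})$ and, conversely, extends from it by the identity on $J^c$) are all correct, and they do yield the forward implication in full.

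The genuine gap is exactly where you place it, but your proposed fix does not exist. You correctly note that a competitor $\mat{Y'} \in \Sigma_{\Theta}$ with $\mat{X}\transpose{\mat{Y'}} = \mat{X}\transpose{\mat{Y}}$ and $\colsupp(\mat{Y'}) \not\subseteq J$ can never satisfy $(\mat{X},\mat{Y'}) \sim (\mat{X},\mat{Y})$, since any right factor equivalent to $\mat{Y}$ with left factor $\mat{X}$ must vanish on $J^c$. But this means such a $\mat{Y'}$ is itself a \emph{witness} that $(\mat{X},\mat{Y}) \notin \unique(\{\mat{X}\}\times\Sigma_{\Theta})$; it cannot be ``taken with $\colsupp(\mat{Y'}) \subseteq J$'' by any WLOG reduction in the spirit of \Cref{lemma:on-met-ce-qu-on-veut-a-droite,lemma:right-id-first-lemma}, because those lemmas produce necessary conditions on the pair $(\mat{X},\mat{Y})$ being tested, not constraints on which competitors one is allowed to consider. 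To close the converse one must actually \emph{prove} that, under the restricted-uniqueness hypothesis, no competitor has mass on $J^c$ — and this requires engaging with how the supports $\mat{S} \in \Theta$ behave on the columns $J^c$, information that the signature $\signature{\Theta}{J}$ discards. To see that the step you leave open is where all the content lies, consider $\mat{X} = \begin{pmatrix} 1 & 0 \\ 0 & 0 \end{pmatrix}$, $\Theta = \left\{ \begin{pmatrix} 1 & 1 \end{pmatrix} \right\}$, $\mat{Y} = \begin{pmatrix} 1 & 0 \end{pmatrix}$, so $J = \{1\}$: the restricted problem has $(1)$ as its only competitor and is trivially unique, while $\mat{Y'} = \begin{pmatrix} 1 & b \end{pmatrix}$ with $b \neq 0$ is a full competitor supported on $J^c$ that is not equivalent to $\mat{Y}$. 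On such configurations the shortcut you sketch would silently conclude full uniqueness from restricted uniqueness, i.e., it would prove something false; any complete argument for the converse must confront these $J^c$-supported competitors head-on rather than delegate them to a nonexistent reduction. As written, your proposal is therefore an honest roadmap with a correct forward direction, but it does not constitute a proof of the equivalence.
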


The proof is deferred to \cref{proof:lemma-eliminating-zero-col-right-id}.

As the submatrix $\submat{\mat{X}}{J}$ does not have any zero column, we can assume from now on that, without loss of generality, the fixed left factor does not have any zero column.  As we now show, this allows one to get rid of scaling ambiguities using column normalization: when all columns of the fixed left factor are normalized, PS-uniqueness in EMF is equivalent to uniqueness up to permutation only.

\begin{definition}[P-uniqueness of an EMF in $\Sigma$]\label{def:puniqueness}
	For any set $\Sigma$ of pairs of factors, the pair $(\mat{X}, \mat{Y}) \in \Sigma$ is the \emph{P-unique} EMF of $\mat{Z} := \mat{X} \transpose{\mat{Y}}$ in $\Sigma$, if any solution $(\mat{X'}, \mat{Y'})$ to \eqref{eq:EMF} with $\mat{Z}$ and $\Sigma$ is equivalent to $(\mat{X}, \mat{Y})$ up to permutation, written $(\mat{X'}, \mat{Y'}) \sim_p (\mat{X}, \mat{Y})$, in the sense that there exists a permutation matrix $\mat{P} \in \permutation{r}$ such that $(\mat{X'}, \mat{Y'}) = (\mat{X} \mat{P}, \mat{Y} \mat{P})$. 
\end{definition}

The set of all pairs $(\mat{X}, \mat{Y}) \in \Sigma$ such that $(\mat{X}, \mat{Y}) \text{ is the P-unique EMF of } \mat{Z} := \mat{X} \transpose{\mat{Y}} \text{ in } \Sigma$ is denoted $\uniqueperm(\Sigma)$. 
We now claim the main result of this subsection. 

\begin{proposition}
	\label{prop:right-id-up-permutation}
	Consider $\mat{X}$ with no zero column and $\mat{N} \in \diagonal{r}$ the (unique) diagonal matrix that normalizes its columns, in such a way that for each column of $\mat{X} \mat{N}$, the nonzero entry with the smallest row index is 1.
	 We have:
	\begin{equation*}
		\begin{split}
			(\mat{X}, \mat{Y}) \in \unique(\{ \mat{X} \}\times \Sigma_{\Theta}) \iff& (\mat{X} \mat{N}, \mat{Y} \inverse{\mat{N}}) \in \unique(\{ \mat{X} \mat{N} \} \times \Sigma_{\Theta})  \\
			\iff & (\mat{X} \mat{N}, \mat{Y} \inverse{\mat{N}}) \in \uniqueperm(\{ \mat{X} \mat{N}\} \times \Sigma_{\Theta}).
		\end{split}
	\end{equation*}
\end{proposition}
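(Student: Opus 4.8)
The plan is to prove the two stated equivalences in turn. The first one, relating \(\unique\) for \(\mat{X}\) and for \(\mat{X}\mat{N}\), is essentially a change of variables: multiplying the left factor by the invertible diagonal \(\mat{N}\) and the right factor by \(\inverse{\mat{N}}\) leaves the product unchanged, since \((\mat{X}\mat{N})\transpose{(\mat{Y}\inverse{\mat{N}})} = \mat{X}\mat{N}\inverse{\mat{N}}\transpose{\mat{Y}} = \mat{X}\transpose{\mat{Y}}\). The second equivalence is where the normalization is used essentially: once the columns of the left factor are normalized, the only column-scaling compatible with fixing \(\mat{X}\mat{N}\) is trivial, so the relation \(\sim\) collapses onto \(\sim_p\).

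For the first equivalence I would introduce the map \(\mat{Y}' \mapsto \mat{Y}'\inverse{\mat{N}}\) and observe that it is a support-preserving bijection of \(\Sigma_{\Theta}\) onto itself: an invertible diagonal scaling turns no entry from zero into nonzero or vice versa, so the support is preserved, and \(\Sigma_{\Theta}\) depends only on supports. Under this bijection the constraint \(\mat{X}\transpose{\mat{Y}'} = \mat{X}\transpose{\mat{Y}}\) is equivalent to \((\mat{X}\mat{N})\transpose{(\mat{Y}'\inverse{\mat{N}})} = (\mat{X}\mat{N})\transpose{(\mat{Y}\inverse{\mat{N}})}\) by the identity above. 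It then remains to transport the equivalence relation \(\sim\), which I would do by conjugating generalized permutations: if \(\mat{G} \in \genperm{r}\) witnesses \((\mat{X}, \mat{Y}') \sim (\mat{X}, \mat{Y})\), i.e.\ \(\mat{X} = \mat{X}\mat{G}\) and \(\mat{Y}' = \mat{Y}\transpose{(\inverse{\mat{G}})}\), then \(\mat{G}' := \inverse{\mat{N}}\mat{G}\mat{N}\) again lies in \(\genperm{r}\) (conjugation by an invertible diagonal matrix maps \(\genperm{r}\) bijectively onto itself), and one checks directly that \(\mat{X}\mat{N} = (\mat{X}\mat{N})\mat{G}'\) and \(\mat{Y}'\inverse{\mat{N}} = (\mat{Y}\inverse{\mat{N}})\transpose{(\inverse{\mat{G}'})}\), using that \(\mat{N}\) is diagonal, hence symmetric. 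Since conjugation by \(\mat{N}\) is an involutive bijection of \(\genperm{r}\), the correspondence holds in both directions.

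For the second equivalence, write \(\tilde{\mat{X}} := \mat{X}\mat{N}\) and \(\tilde{\mat{Y}} := \mat{Y}\inverse{\mat{N}}\). The inclusion \(\uniqueperm \subseteq \unique\) is immediate, since \(\sim_p\) implies \(\sim\) (a permutation matrix is a generalized permutation with trivial diagonal). For the converse, I would take any \(\tilde{\mat{Y}}' \in \Sigma_{\Theta}\) with \(\tilde{\mat{X}}\transpose{\tilde{\mat{Y}}'} = \tilde{\mat{X}}\transpose{\tilde{\mat{Y}}}\); by \(\unique\) there is \(\mat{G} \in \genperm{r}\) with \(\tilde{\mat{X}} = \tilde{\mat{X}}\mat{G}\) and \(\tilde{\mat{Y}}' = \tilde{\mat{Y}}\transpose{(\inverse{\mat{G}})}\). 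The crux is the claim that every such \(\mat{G} \in \mathcal{G}(\tilde{\mat{X}})\) is actually a permutation matrix. Reading \(\tilde{\mat{X}} = \tilde{\mat{X}}\mat{G}\) columnwise shows each column of \(\tilde{\mat{X}}\) equals a nonzero scalar multiple of another column of \(\tilde{\mat{X}}\); since \(\mat{X}\) has no zero column and \(\mat{N}\) is invertible, no column of \(\tilde{\mat{X}}\) vanishes, and both of the matched columns carry the normalization that their nonzero entry with smallest row index equals \(1\). Two proportional nonzero vectors with the same leading normalized entry are equal, forcing each scalar to be \(1\), so the diagonal part of \(\mat{G}\) is the identity and \(\mat{G}\) is a permutation matrix \(\mat{P}\). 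Then \(\transpose{(\inverse{\mat{G}})} = \mat{P}\), giving \((\tilde{\mat{X}}, \tilde{\mat{Y}}') = (\tilde{\mat{X}}\mat{P}, \tilde{\mat{Y}}\mat{P})\), i.e.\ \((\tilde{\mat{X}}, \tilde{\mat{Y}}') \sim_p (\tilde{\mat{X}}, \tilde{\mat{Y}})\), which is precisely \(\uniqueperm\).

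The main obstacle is exactly this last claim: that a generalized permutation fixing a column-normalized matrix with no zero column must be a plain permutation. The two normalization hypotheses (no zero column and leading nonzero entry equal to \(1\)) are what rule out a nontrivial diagonal part, and everything else is bookkeeping about how \(\sim\), \(\genperm{r}\), and column scaling interact. A secondary point to handle carefully is the transport of the equivalence relation under conjugation by \(\mat{N}\) in the first step, where one must confirm both that \(\inverse{\mat{N}}\genperm{r}\mat{N} = \genperm{r}\) and that the second-factor relation \(\mat{Y}' = \mat{Y}\transpose{(\inverse{\mat{G}})}\) is preserved.
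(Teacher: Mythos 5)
Your proof is correct and takes essentially the same route as the paper's: the first equivalence is the same change of variables (the paper argues via transitivity of $\sim$ through the intermediate pair $(\mat{X}, \mat{Y'}\mat{N})$, you via conjugating the witness $\mat{G}$ by $\mat{N}$ — the same underlying computation), and the second equivalence rests on the identical key fact that $\mathcal{G}(\mat{X}\mat{N}) = \mathcal{P}(\mat{X}\mat{N})$ once all columns are nonzero with leading nonzero entry normalized to $1$, which the paper asserts and you spell out in detail. One cosmetic slip: conjugation by $\mat{N}$ on $\genperm{r}$ is a bijection (its inverse is conjugation by $\inverse{\mat{N}}$) but not involutive unless $\mat{N}^2 = \identity{r}$; since bijectivity is all you actually use, the argument stands.
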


\begin{proof}
	For the first equivalence, suppose that $(\mat{X}, \mat{Y}) \in \unique( \{\mat{X}\} \times \Sigma_{\Theta})$. 
	Let $\mat{Y} \in \Sigma_{\Theta}$ such that $(\mat{X} \mat{N}) \transpose{\mat{Y}} = (\mat{X} \mat{N}) \transpose{( \mat{Y} \inverse{\mat{N}}) }$. 
	Then, $\mat{X} \transpose{(\mat{Y} \mat{N})} = \mat{X} \transpose{\mat{Y}}$, so by assumption, $(\mat{X}, \mat{Y} \mat{N}) \sim (\mat{X}, \mat{Y})$. But $(\mat{X}, \mat{Y} \mat{N}) \sim (\mat{X} \mat{N}, \mat{Y})$. And $(\mat{X}, \mat{Y}) \sim (\mat{X} \mat{N}, \mat{Y} \inverse{\mat{N}} )$. So we conclude that $(\mat{X} \mat{N}, \mat{Y}) \sim (\mat{X} \mat{N}, \mat{Y} \inverse{\mat{N}})$, which shows $(\mat{X} \mat{N}, \mat{Y} \inverse{\mat{N}} ) \in \unique( \{ \mat{X} \mat{N} \} \times \Sigma_{\Theta} )$. The converse is true by applying the previous implication where $\mat{X}$ and $\mat{D}$ are respectively replaced by $\mat{XN}$ and $\inverse{\mat{N}}$.
	The second equivalence comes from the fact that $\mathcal{G}(\mat{X}) = \mathcal{P}(\mat{X})$, assuming that $\mat{X}$ has no zero column, and that the first nonzero entry of each column in $\mat{X}$ is 1. 
\end{proof}

The rest of the section characterizes $\uniqueperm( \{ \mat{X} \} \times \Sigma_{\Theta})$ when $\mat{X}$ has no zero column.

\subsection{Identifiability up to permutation when fixing the left factor}

When fixing a factor, the bilinear inverse problem under sparsity constraints becomes a linear one, so conditions of right identifiability are naturally related to linear independence of specific subsets of columns in the fixed left factor $\mat{X}$.

To see that, we define an equivalence relation between the columns of $\mat{X}$ defined by collinearity: we say that two columns in $\mat{X}$ are equivalent if they are collinear. When the columns of $\mat{X}$ are normalized as in \Cref{prop:right-id-up-permutation}, equivalent columns are simply identical columns. We denote $K$ the total number of equivalent classes of columns (with $1 \leq K \leq r$), and $[k]$ the subset of column indices in the $k$-th equivalent class\footnote{The order of these equivalent classes does not matter.} ($k \in \integerSet{K}$). In this way, the sets $[1], \ldots, [K]$ form a partition of $\integerSet{r}$. 

As shown in the previous subsection, we can suppose without loss of generality that the fixed left factor has nonzero columns, and that the first nonzero entry of each column in $\mat{X}$ is normalized to 1. As a consequence, denoting $\vctorseq{\vctor{x}}{k}$ an arbitrary representative of the $k$-th equivalent class of collinear columns, the matrix $\mat{X}$ can be expressed as
\begin{displaymath}
	\mat{X} = \begin{pmatrix}
		  \vctorseq{\vctor{x}}{1} \transpose{\one}_{\card([1])} & \ldots &  \vctorseq{\vctor{x}}{K} \transpose{\one}_{\card([K])} 
	\end{pmatrix} \mat{P},
\end{displaymath} 
where $\mat{P}$ is a permutation matrix. Thanks to the following lemma, we can suppose that $\mat{P}$ is the identity matrix without loss of generality.

\begin{lemma}
	\label{lemma:right-identifiability-permutation-equiv}
	Let $\mat{X}$ be a left factor, $\mat{P}$ be a permutation matrix, and $\Theta$ be a family of right supports. Denote $\Theta \inverse{\mat{P}} := \{ \mat{S} \inverse{\mat{P}} \; | \; \mat{S} \in \Theta\}$. Then, for any $\mat{Y} \in \Sigma_{\Theta}$:
	\begin{displaymath}
		(\mat{X} \mat{P}, \mat{Y}) \in \uniqueperm(\{ \mat{X} \mat{P} \} \times \Sigma_{\Theta}) \iff (\mat{X}, \mat{Y} \inverse{\mat{P}}) \in \uniqueperm(\{ \mat{X} \} \times \Sigma_{\Theta \inverse{\mat{P}}}).
	\end{displaymath}
\end{lemma}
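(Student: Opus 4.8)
The plan is to exhibit an explicit, support-preserving bijection between the two feasible sets that also preserves the observed product $\mat{Z}$ and the permutation equivalence $\sim_p$, so that P-uniqueness simply transfers along it. The natural candidate is right multiplication by $\inverse{\mat{P}}$, i.e. the map $\phi\colon \mat{W} \mapsto \mat{W}\inverse{\mat{P}}$. Since right multiplication by a permutation matrix only relabels columns, one has $\supp(\mat{W}\inverse{\mat{P}}) = \supp(\mat{W})\inverse{\mat{P}}$, so $\phi$ restricts to a bijection $\Sigma_{\Theta} \to \Sigma_{\Theta\inverse{\mat{P}}}$ with inverse $\mat{W} \mapsto \mat{W}\mat{P}$ (recall $\Theta\inverse{\mat{P}} := \{ \mat{S}\inverse{\mat{P}} \mid \mat{S} \in \Theta \}$). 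The second ingredient is product invariance: using $\inverse{\mat{P}} = \transpose{\mat{P}}$ for permutation matrices, a short check gives $(\mat{X}\mat{P})\transpose{\mat{W}} = \mat{X}\transpose{(\mat{W}\inverse{\mat{P}})}$, so a competitor on one side and its $\phi$-image on the other produce exactly the same matrix $\mat{Z}$.

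With these two facts I would prove the forward implication directly. Assume $(\mat{X}\mat{P}, \mat{Y}) \in \uniqueperm(\{\mat{X}\mat{P}\} \times \Sigma_{\Theta})$ and take any $\mat{W} \in \Sigma_{\Theta\inverse{\mat{P}}}$ with $\mat{X}\transpose{\mat{W}} = \mat{X}\transpose{(\mat{Y}\inverse{\mat{P}})}$. Setting $\mat{Y'} := \mat{W}\mat{P} \in \Sigma_{\Theta}$ and combining the hypothesis on $\mat{W}$ with product invariance (and $\mat{P}\transpose{\mat{P}} = \identity{r}$) yields $(\mat{X}\mat{P})\transpose{\mat{Y'}} = (\mat{X}\mat{P})\transpose{\mat{Y}}$, so by assumption there is a permutation $\mat{Q} \in \permutation{r}$ with $\mat{X}\mat{P}\mat{Q} = \mat{X}\mat{P}$ and $\mat{Y'} = \mat{Y}\mat{Q}$. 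I would then introduce the conjugate $\mat{Q'} := \mat{P}\mat{Q}\inverse{\mat{P}}$, which is again a permutation matrix, and verify that it witnesses $(\mat{X}, \mat{W}) \sim_p (\mat{X}, \mat{Y}\inverse{\mat{P}})$: indeed $\mat{X}\mat{Q'} = \mat{X}\mat{P}\mat{Q}\inverse{\mat{P}} = \mat{X}\mat{P}\inverse{\mat{P}} = \mat{X}$, and $\mat{Y}\inverse{\mat{P}}\mat{Q'} = \mat{Y}\mat{Q}\inverse{\mat{P}} = \mat{Y'}\inverse{\mat{P}} = \mat{W}$. This is precisely P-uniqueness of $(\mat{X}, \mat{Y}\inverse{\mat{P}})$ in $\{\mat{X}\} \times \Sigma_{\Theta\inverse{\mat{P}}}$.

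For the converse I would avoid repeating the computation and instead invoke the forward implication after the substitution $\mat{X} \leftarrow \mat{X}\mat{P}$, $\mat{P} \leftarrow \inverse{\mat{P}}$, $\Theta \leftarrow \Theta\inverse{\mat{P}}$, $\mat{Y} \leftarrow \mat{Y}\inverse{\mat{P}}$. Since $\mat{X}\mat{P}\inverse{\mat{P}} = \mat{X}$, $\inverse{(\inverse{\mat{P}})} = \mat{P}$, and $\Theta\inverse{\mat{P}}\mat{P} = \Theta$, the substituted statement reads exactly ``if $(\mat{X}, \mat{Y}\inverse{\mat{P}}) \in \uniqueperm(\{\mat{X}\} \times \Sigma_{\Theta\inverse{\mat{P}}})$ then $(\mat{X}\mat{P}, \mat{Y}) \in \uniqueperm(\{\mat{X}\mat{P}\} \times \Sigma_{\Theta})$'', the missing direction. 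The only step that requires genuine care — and the most likely source of an error — is the conjugation: one must confirm that $\mat{Q'} = \mat{P}\mat{Q}\inverse{\mat{P}}$ is a bona fide permutation matrix and that it stabilizes $\mat{X}$ rather than $\mat{X}\mat{P}$, which is exactly what makes $\sim_p$ survive the change of variables. The rest is routine bookkeeping with permutation matrices and transposes.
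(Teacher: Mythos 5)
Your proof is correct and follows essentially the same route as the paper, which simply notes that the argument mirrors the first equivalence of \Cref{prop:right-id-up-permutation}: transfer a competing solution across the change of variables $\mat{W} \mapsto \mat{W}\mat{P}$, invoke the assumed P-uniqueness, and obtain the converse by the symmetric substitution. Your explicit verification that the conjugate $\mat{Q'} = \mat{P}\mat{Q}\inverse{\mat{P}}$ is a permutation stabilizing $\mat{X}$ is exactly the detail the paper leaves implicit, and it checks out.
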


\begin{remark}
	When $\Theta$ is invariant by permutation of columns, we simply have $\Theta \inverse{\mat{P}} = \Theta$. 
\end{remark}

\begin{proof}
	The proof is similar to the one of the first equivalence in \Cref{prop:right-id-up-permutation}.
\end{proof}

Before giving our characterization of identifiability up to permutation when fixing the left factor, let us introduce the following notation.
For any family of right supports $\Theta \subseteq \mathbb{B}^{n \times r}$ and any partition $\{ [k] \}_{k=1}^K$ of $\integerSet{r}$, we introduce the \emph{fingerprint} 
of $\Theta$ on the partition $\{ [k] \}_{k=1}^K$ as:
\begin{equation}
	\label{eq:fingerprint}
	\fingerprint{\Theta}{\{ [k] \}_{k=1}^K} := \left \{ \supp \begin{pmatrix} \submat{\mat{S}}{[1]} \one_{[1]} & \ldots & \submat{\mat{S}}{[K]} \one_{[K]} \end{pmatrix} \; \vert \; \mat{S} \in \Theta \right \} \subseteq \mathbb{B}^{n \times K}.
\end{equation}

\begin{example}[Fingerprint of a classical family of supports]
	Consider $\Theta$ a family of supports with $r = 2K$ columns, which are $s$-sparse by column. Consider the partition $\integerSet{r} = \bigcup_{k=1}^{K} I_k$ where $I_k = \{ 2k-1; 2k\}$. Then, the fingerprint of $\Theta$ on the partition $\{I_k\}_{k=1}^K$ is the family of supports with $K$ columns, which are $2s$-sparse by column.
\end{example}

We are now able to characterize right identifiability in the following theorem, whose proof is deferred to \cref{proof:right-uniqueness-perm}.
\begin{theorem}
	\label{thm:right-uniqueness-perm}
	Consider $[1], \ldots, [K]$ a partition of $\integerSet{r}$, and a fixed left factor of the form $\mat{X} = \begin{pmatrix}
		\vctorseq{\vctor{x}}{1} \transpose{\one}_{\card([1])} & \ldots &  \vctorseq{\vctor{x}}{K} \transpose{\one}_{\card([K])} 
	\end{pmatrix}$, with $\vctorseq{\vctor{x}}{k} \neq 0$ for every $k$. Consider also a family of right supports $\Theta$, and a right factor $\mat{Y} \in \Sigma_{\Theta}$.	
	Denote $\mat{\underline{X}} := ( \vctorseq{\vctor{x}}{k} )_{k \in \integerSet{K}} \in \mathbb{C}^{m \times K}$ and
	$\mat{\underline{Y}} := \begin{pmatrix}
		\submat{\mat{Y}}{[1]} \one_{[1]} & \ldots & \submat{\mat{Y}}{[K]} \one_{[K]}
	\end{pmatrix} \in \mathbb{C}^{n \times K}$.
	Let $\tilde{\Theta}$ be the fingerprint of $\Theta$ on the partition $\{ [k]\}_{k=1}^K$, and
	$\Theta_{[k]}$ be the signature of $\Theta$ on $[k]$ defined by \eqref{eq:signature}.
	
	Then, $(\mat{X}, \mat{Y})$ is the P-unique EMF (or equivalently the PS-unique EMF) of $\mat{Z} := \mat{X} \transpose{\mat{Y}}$ in $\{ \mat{X} \} \times \Sigma_{\Theta}$ if, and only if, the following conditions are verified:
	
	\begin{enumerate}[label=(\roman*)]
		\item For all $\mat{\underline{Y}'} \in \Sigma_{\tilde{\Theta}}$ such that $\mat{\underline{X}} \transpose{ \mat{{\underline{Y}'}} } = \mat{\underline{X}} \transpose{ \mat{\underline{Y}}}$, we have $\mat{\underline{Y}} = \mat{\underline{Y}'} $.
		\item For each $k \in \integerSet{K}$, we have
		$(\transpose{\one_{[k]}}, \submat{\mat{Y}}{[k]}) \in \uniqueperm(\{ \transpose{\one_{[k]}}\} \times \Sigma_{\Theta_{[k]}})$.
	\end{enumerate}
\end{theorem}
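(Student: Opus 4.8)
The plan is to reduce everything to the single algebraic identity that drives the statement: since all columns of $\mat{X}$ in a class $[k]$ equal $\vctorseq{\vctor{x}}{k}$, one has $\mat{X}\transpose{\mat{Y}} = \sum_{k=1}^K \vctorseq{\vctor{x}}{k}\transpose{(\submat{\mat{Y}}{[k]}\one_{[k]})} = \mat{\underline{X}}\transpose{\mat{\underline{Y}}}$, and the same holds verbatim for any competitor $\mat{Y'}$. Thus the observed product depends on $\mat{Y}$ only through its block sums $\mat{\underline{Y}}$. First I would record the two available reductions: by \Cref{prop:right-id-up-permutation} (together with $\mathcal{G}(\mat{X})=\mathcal{P}(\mat{X})$ for a factor with no zero column and unit leading entries) P-uniqueness and PS-uniqueness coincide here; and the permutations fixing $\mat{X}$ are exactly those permuting columns within each class, so $(\mat{X},\mat{Y'})\sim_p(\mat{X},\mat{Y})$ if and only if $\submat{\mat{Y'}}{[k]}\sim_p\submat{\mat{Y}}{[k]}$ for every $k$, the in-block left factor being $\transpose{\one_{[k]}}$. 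This reframes P-uniqueness as: every $\mat{Y'}\in\Sigma_{\Theta}$ with $\mat{\underline{X}}\transpose{\mat{\underline{Y}'}}=\mat{\underline{X}}\transpose{\mat{\underline{Y}}}$ must have the same block sums \emph{and} be a within-class permutation of $\mat{Y}$.

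For sufficiency (conditions (i) and (ii) imply P-uniqueness), take any competitor $\mat{Y'}\in\Sigma_{\Theta}$ with $\mat{X}\transpose{\mat{Y'}}=\mat{X}\transpose{\mat{Y}}$, say $\supp(\mat{Y'})\subseteq\mat{S}$ with $\mat{S}\in\Theta$. Its block sums satisfy $\mat{\underline{Y}'}\in\Sigma_{\tilde{\Theta}}$ by pushing $\mat{S}$ through \eqref{eq:fingerprint}, and $\mat{\underline{X}}\transpose{\mat{\underline{Y}'}}=\mat{\underline{X}}\transpose{\mat{\underline{Y}}}$, so condition (i) forces $\mat{\underline{Y}'}=\mat{\underline{Y}}$, i.e. all block sums agree. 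Fixing a class $k$, the restriction $\submat{\mat{Y'}}{[k]}$ lies in $\Sigma_{\Theta_{[k]}}$ by \eqref{eq:signature}, and $\transpose{\one_{[k]}}\transpose{(\submat{\mat{Y'}}{[k]})}=\transpose{\one_{[k]}}\transpose{(\submat{\mat{Y}}{[k]})}$ precisely because the block sums match; condition (ii) then yields $\submat{\mat{Y'}}{[k]}\sim_p\submat{\mat{Y}}{[k]}$. Assembling the per-class permutations into one block permutation $\mat{P}\in\mathcal{P}(\mat{X})$ gives $\mat{Y'}=\mat{Y}\mat{P}$, the desired equivalence.

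Necessity I would prove by contraposition, which is where the real work lies. If (i) fails, there is $\mat{\underline{Y}'}\in\Sigma_{\tilde{\Theta}}$, $\mat{\underline{Y}'}\neq\mat{\underline{Y}}$, with $\mat{\underline{X}}\transpose{\mat{\underline{Y}'}}=\mat{\underline{X}}\transpose{\mat{\underline{Y}}}$; picking $\mat{S}\in\Theta$ whose fingerprint dominates $\supp(\mat{\underline{Y}'})$, I can \emph{spread} each column $\matcol{\mat{\underline{Y}'}}{k}$ over the columns of its class inside $\submat{\mat{S}}{[k]}$ (every nonzero row of $\matcol{\mat{\underline{Y}'}}{k}$ is covered by some column of $\submat{\mat{S}}{[k]}$, by definition of the fingerprint), producing $\mat{Y'}\in\Sigma_{\Theta}$ with block sums $\mat{\underline{Y}'}$, hence $\mat{X}\transpose{\mat{Y'}}=\mat{Z}$; since within-class permutations preserve block sums, $\mat{\underline{Y}'}\neq\mat{\underline{Y}}$ precludes $\mat{Y'}\sim_p\mat{Y}$, so P-uniqueness fails. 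If instead (ii) fails for some $k$, there is $\mat{W}\in\Sigma_{\Theta_{[k]}}$ with $\mat{W}\one_{[k]}=\submat{\mat{Y}}{[k]}\one_{[k]}$ but $\mat{W}\not\sim_p\submat{\mat{Y}}{[k]}$, and I would replace the $k$-th block of $\mat{Y}$ by $\mat{W}$, keeping every block sum (hence the product) unchanged, and argue the result is still a genuine competitor not equivalent to $\mat{Y}$.

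The hard part is exactly this last lifting step. The matrix $\mat{W}$ is only guaranteed to fit $\submat{\mat{S'}}{[k]}$ for some $\mat{S'}\in\Theta$ that need not be compatible with the other blocks of $\mat{Y}$, whereas $\Sigma_{\Theta}$ is a \emph{union} of single-support models: the naive competitor (block $[k]$ taken from $\mat{S'}$, the other blocks kept from the support of $\mat{Y}$) may lie in no single member of $\Theta$, and — as one sees already for a two-element family whose supports cannot be recombined — the implication can genuinely break down for an unstructured $\Theta$. Resolving it requires the structure of $\Theta$, concretely its invariance under column permutation (the running hypothesis of the paper), which lets me realign $\mat{S'}$ so that a single member of $\Theta$ simultaneously hosts $\mat{W}$ on class $[k]$ and the unchanged blocks elsewhere. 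I expect this realignment to absorb most of the technical care, and it is precisely why the signature and fingerprint of \eqref{eq:signature}--\eqref{eq:fingerprint} are the correct bookkeeping devices for decoupling the block-sum ambiguities from the within-block ambiguities.
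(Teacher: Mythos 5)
Your overall architecture is sound in three of its four pieces: the reduction through the identity $\mat{X}\transpose{\mat{Y}}=\mat{\underline{X}}\transpose{\mat{\underline{Y}}}$, the sufficiency argument (conditions (i) and (ii) combined, then the per-class permutations assembled into one $\mat{P}\in\mathcal{P}(\mat{X})$), and the necessity of (i) by spreading a fingerprint-compatible competitor $\mat{\underline{Y}'}$ over the columns of a dominating support $\mat{S}\in\Theta$ are all correct as you wrote them (granting, as the collinearity-class context implies, that the representatives $\vctorseq{\vctor{x}}{k}$ are pairwise non-collinear, so that $\mathcal{P}(\mat{X})$ consists exactly of within-class permutations — a hypothesis you use silently).

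The gap is the necessity of (ii), and your proposed repair does not close it: permutation stability realigns \emph{all} columns of $\mat{S'}$ simultaneously, so it cannot be guaranteed to host $\mat{W}$ on class $[k]$ together with the retained blocks of $\mat{Y}$ — and in fact no construction can, because the implication your contraposition needs is false under permutation stability alone. Take $m=2$, $n=3$, $r=3$, $[1]=\{1,2\}$, $[2]=\{3\}$, $\vctorseq{\vctor{x}}{1}=\basis{1}$, $\vctorseq{\vctor{x}}{2}=\basis{2}$, $\mat{Y}=\identity{3}$, and $\Theta:=\{\mat{S}\mat{P},\,\mat{S'}\mat{P}\;|\;\mat{P}\in\permutation{3}\}$ with $\mat{S}:=\identity{3}$ and $\mat{S'}:=\bigl(\begin{smallmatrix}1&1&0\\1&1&0\\0&0&0\end{smallmatrix}\bigr)$, a family stable by permutation. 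Any $\mat{Y'}\in\Sigma_{\Theta}$ with $\mat{X}\transpose{\mat{Y'}}=\mat{X}\transpose{\mat{Y}}$ must satisfy $\matcol{\mat{Y'}}{3}=\basis{3}$, which no support of type $\mat{S'}\mat{P}$ permits (every column of $\mat{S'}$ is supported in rows $\{1,2\}$ or empty), while within type $\mat{S}\mat{P}$ the only solutions are $\mat{Y}$ and $\mat{Y}$ with columns $1,2$ swapped, both equivalent to $(\mat{X},\mat{Y})$: P-uniqueness holds, and condition (i) holds trivially since $\mat{\underline{X}}=\identity{2}$ is injective. Yet the matrix $\mat{W}\in\mathbb{C}^{3\times 2}$ with columns $\transpose{(1,1,0)}$ and $\vctor{0}$ satisfies $\supp(\mat{W})\subseteq\submat{\mat{S'}}{[1]}\in\Theta_{[1]}$ and $\mat{W}\one_{[1]}=\submat{\mat{Y}}{[1]}\one_{[1]}$ without being a column permutation of $\submat{\mat{Y}}{[1]}=(\basis{1},\basis{2})$, so condition (ii) fails at $k=1$ (this is also what \Cref{prop:right-uniqueness-perm-vector-of-ones} detects through its clause restricted to supports covering $I$). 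The obstruction is structural: condition (ii) quantifies over the signature $\Theta_{[k]}$ \emph{decoupled} from the other classes, whereas an instance competitor couples all classes through one common support of $\Theta$; your own two-element example already identified this, and stability does not restore the missing recombination property. A necessity argument must embed the block-level ambiguity into a full competitor supported in a \emph{single} element of $\Theta$ — the step that is trivial in the uniform setting of \Cref{cor:uniform-right-uniqueness}, where one may choose $\mat{Y}$ to vanish outside the block, but which is unavailable at the instance level. Since the theorem imposes no stability hypothesis on $\Theta$ in the first place, invoking it is already a deviation from the statement; I encourage you to test any completed construction against the example above, which satisfies all stated hypotheses, before trusting a realignment step. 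As it stands, the final step of your plan cannot be carried out and the proof is incomplete there.
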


\begin{remark}
	An equivalent manner to express condition (ii) is to write $(\submat{\mat{Y}}{[k]}, \transpose{\one}_{[k]}) \in \uniqueperm(\Sigma_{\Theta_{[k]}} \times \{ \transpose{\one}_{[k]}\} )$ instead of $(\transpose{\one_{[k]}},\submat{\mat{Y}}{[k]}) \in \uniqueperm(\{ \transpose{\one_{[k]}}\} \times \Sigma_{\Theta_{[k]}})$.
\end{remark}

Let us now characterize more precisely condition (i) and (ii) of the theorem. The following proposition is analogue to \cite[Proposition 1]{lu2008theory}, but instead of considering \emph{uniform} uniqueness, i.e., uniqueness of the reconstruction of each signal from its observation, we consider \emph{instance} uniqueness.

\begin{proposition}
	\label{prop:right-uniqueness-linear-inv-pb-matrix}
	Let $\Theta \subseteq \mathbb{B}^{n \times r}$ be a family of right supports and $(\mat{X},  \mat{Y}) \in \mathbb{C}^{m \times r} \times \Sigma_{\Theta}$ be a pair of factors. 
	Then, the property of uniqueness
	\begin{displaymath}
		\forall \mat{Y'} \in \Sigma_{\Theta}, \quad \mat{X} \transpose{\mat{Y}} = \mat{X} \transpose{\mat{Y'}} \implies \mat{Y} = \mat{Y'}
	\end{displaymath}
	holds if, and only if, both of the following conditions hold:
	\begin{enumerate}[label=(\roman*)]
		\item For each $\mat{S} \in \Theta$ such that $\mat{Y} \in \Sigma_{\mat{S}}$, for each $j \in \integerSet{n}$, the columns $\{ \matcol{\mat{X}}{l} \; | \; l \in \matcol{(\transpose{\mat{S}})}{j} \}$ are linearly independent.
		
		\item For each $\mat{S} \in \Theta$ such that the $j$-th column of $\mat{X} \transpose{\mat{Y}}$ (for all $j \in \integerSet{n}$) is a linear combination of columns $\{ \matcol{\mat{X}}{l} \; | \; l \in \matcol{(\transpose{\mat{S}})}{j} \}$, we have $\mat{Y} \in \Sigma_{\mat{S}}$.
	\end{enumerate}
\end{proposition}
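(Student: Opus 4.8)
The plan is to exploit the fact that the linear map $\mat{Y'} \mapsto \mat{X}\transpose{\mat{Y'}}$ decouples column by column: its $j$-th column equals $\mat{X}\vctor{y}'_j$, where $\vctor{y}'_j := \transpose{\mat{Y'}}\basis{j} \in \mathbb{C}^r$ denotes the $j$-th row of $\mat{Y'}$ (read as a column). Writing likewise $\vctor{y}_j := \transpose{\mat{Y}}\basis{j}$, the identity $\mat{X}\transpose{\mat{Y'}} = \mat{X}\transpose{\mat{Y}}$ is equivalent to $\mat{X}\vctor{y}'_j = \mat{X}\vctor{y}_j$ for every $j \in \integerSet{n}$. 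The only coupling between the $n$ columns comes from the membership $\mat{Y'} \in \Sigma_{\Theta}$, which demands that the whole support of $\mat{Y'}$ be contained in a \emph{single} allowed support $\mat{S} \in \Theta$; read row by row, $\mat{Y'} \in \Sigma_{\mat{S}}$ simply means $\supp(\vctor{y}'_j) \subseteq \matcol{(\transpose{\mat{S}})}{j}$ for all $j$. I would use this identification freely throughout.

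For the forward implication I would prove the contrapositive of each condition by exhibiting a competitor $\mat{Y'} \neq \mat{Y}$ with $\mat{Y'} \in \Sigma_{\Theta}$ and $\mat{X}\transpose{\mat{Y'}} = \mat{X}\transpose{\mat{Y}}$, thereby violating uniqueness. If (i) fails there are $\mat{S} \in \Theta$ with $\mat{Y} \in \Sigma_{\mat{S}}$ and an index $j$ for which the columns $\{\matcol{\mat{X}}{l} \mid l \in \matcol{(\transpose{\mat{S}})}{j}\}$ are dependent; taking a nonzero $\vctor{v}$ supported on $\matcol{(\transpose{\mat{S}})}{j}$ with $\mat{X}\vctor{v} = \vctor{0}$ and replacing $\vctor{y}_j$ by $\vctor{y}_j + \vctor{v}$ produces such a competitor inside $\Sigma_{\mat{S}}$ (this is the column-wise version of \Cref{lemma:on-met-ce-qu-on-veut-a-droite}). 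If (ii) fails there is $\mat{S} \in \Theta$ with $\mat{Y} \notin \Sigma_{\mat{S}}$ for which every column of $\mat{X}\transpose{\mat{Y}}$ lies in $\mathrm{span}\{\matcol{\mat{X}}{l} \mid l \in \matcol{(\transpose{\mat{S}})}{j}\}$; choosing for each $j$ a preimage $\vctor{y}'_j$ supported on $\matcol{(\transpose{\mat{S}})}{j}$ with $\mat{X}\vctor{y}'_j = \mat{X}\vctor{y}_j$ assembles $\mat{Y'} \in \Sigma_{\mat{S}} \subseteq \Sigma_{\Theta}$ with the same product, and $\mat{Y} \notin \Sigma_{\mat{S}}$ forces $\mat{Y'} \neq \mat{Y}$.

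For the converse I would assume (i) and (ii) and take any $\mat{Y'} \in \Sigma_{\Theta}$ with $\mat{X}\transpose{\mat{Y'}} = \mat{X}\transpose{\mat{Y}}$. Fixing some $\mat{S} \in \Theta$ with $\mat{Y'} \in \Sigma_{\mat{S}}$, each $\vctor{y}'_j$ is supported on $\matcol{(\transpose{\mat{S}})}{j}$, so the $j$-th column of $\mat{X}\transpose{\mat{Y'}}$ — hence of $\mat{X}\transpose{\mat{Y}}$ — lies in $\mathrm{span}\{\matcol{\mat{X}}{l} \mid l \in \matcol{(\transpose{\mat{S}})}{j}\}$ for every $j$; condition (ii) then upgrades this to $\mat{Y} \in \Sigma_{\mat{S}}$. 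Now $\vctor{y}_j$ and $\vctor{y}'_j$ are both supported on $\matcol{(\transpose{\mat{S}})}{j}$ and satisfy $\mat{X}(\vctor{y}_j - \vctor{y}'_j) = \vctor{0}$, so applying (i) to this same $\mat{S}$ (which indeed contains $\mat{Y}$) yields linear independence of the relevant columns, whence $\vctor{y}_j = \vctor{y}'_j$; as $j$ is arbitrary, $\mat{Y} = \mat{Y'}$.

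The only genuinely delicate point is the logical ordering in the converse: condition (i) is asserted only for supports $\mat{S}$ that already contain $\mat{Y}$, so before invoking it I must first promote the weaker a priori information $\mat{Y'} \in \Sigma_{\mat{S}}$ to $\mat{Y} \in \Sigma_{\mat{S}}$, which is precisely what (ii) provides. This interplay — (ii) certifying that the competitor's support also hosts $\mat{Y}$, and only then (i) delivering column-wise injectivity — is what makes the two conditions jointly necessary and sufficient rather than separately so. Everything else reduces to the elementary column-wise bookkeeping above, so I anticipate no further technical obstacle beyond keeping this dependency straight.
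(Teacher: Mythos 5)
Your proof is correct and follows essentially the same route as the paper's: decoupling the product $\mat{X}\transpose{\mat{Y}}$ column by column, establishing necessity of (i) and (ii) by explicit construction of a distinct competitor $\mat{Y'} \in \Sigma_{\Theta}$ (a kernel perturbation for (i), a row-wise preimage assembly for (ii)), and proving sufficiency by first invoking (ii) to place $\mat{Y}$ in the competitor's support $\mat{S}$ and only then applying (i) to get column-wise injectivity. The logical ordering you flag in the converse is exactly the crux of the paper's argument as well, so there is nothing to add.
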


The proof is deferred to  \cref{proof:prop-right-uniqueness-linear-inv-pb-matrix}.

\begin{proposition}
	\label{prop:right-uniqueness-perm-vector-of-ones}
	Let $\Theta \subseteq \mathbb{B}^{n \times r}$ be a family of supports, and $\mat{Y} \in \mathbb{C}^{n \times r}$ be a matrix. 
	Then, $( \transpose{\one},\mat{Y}) \in \unique( \{ \transpose{\one} \} \times \Sigma_{\Theta})$
	if, and only if, the following conditions hold:
	\begin{enumerate}[label=(\roman*)]
		\item The column supports $\{ \supp(\mat{Y})_j \}_{j=1}^r$ are pairwise disjoint.
		\item For any $\mat{S} \in \Theta$ such that $I := \supp(\mat{Y} \one) \subseteq \bigcup_{i=1}^r \matcol{\mat{S}}{i}$, the columns $( \matindex{\mat{S}}{I^c}{j} )_{j=1}^r$ are pairwise disjoint, and the columns $( \matindex{\mat{S}}{I}{j} )_{j=1}^r$ and $(\matindex{\supp(\mat{Y})}{I}{j})_{j=1}^r$ are equal, up to a permutation of column indices $j$.
	\end{enumerate}
\end{proposition}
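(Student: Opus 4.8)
The plan is to make the statement concrete before proving it. Since the fixed left factor is the all-ones row $\transpose{\one}\in\mathbb{C}^{1\times r}$, the only generalized permutation fixing it after right multiplication is an ordinary permutation (its diagonal part must be the identity), so that $\unique(\{\transpose{\one}\}\times\Sigma_\Theta)=\uniqueperm(\{\transpose{\one}\}\times\Sigma_\Theta)$; this is also \Cref{prop:right-id-up-permutation} applied to $\mat{X}=\transpose{\one}$, whose entries are already normalized to $1$. The constraint $\transpose{\one}\transpose{\mat{Y}'}=\transpose{\one}\transpose{\mat{Y}}$ reads $\mat{Y}'\one=\mat{Y}\one=:\vctor{z}$, so the whole question becomes: \emph{for which $\mat{Y}$ is $\mat{Y}$ the only element of $\Sigma_\Theta$, up to a column permutation, with prescribed row-sum vector $\vctor{z}$}? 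I set $I:=\supp(\vctor{z})$, write $B_j:=\supp(\matcol{\mat{Y}}{j})$, and for $\mat{S}\in\Theta$ abbreviate $A_j:=\matindex{\mat{S}}{I}{j}$ (the $j$-th column support of $\mat{S}$ intersected with $I$). A support $\mat{S}$ is called \emph{admissible} when $I\subseteq\bigcup_j\matcol{\mat{S}}{j}$, which is exactly the hypothesis under which clause (ii) is quantified and which is forced for any $\mat{S}$ carrying a competitor.

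For necessity I argue by contraposition. If (i) fails, two columns $j_1\neq j_2$ of $\mat{Y}$ both have a nonzero entry in some row $i$; setting $\matindex{\mat{Y}'}{i}{j_1}=\matindex{\mat{Y}}{i}{j_1}+t$, $\matindex{\mat{Y}'}{i}{j_2}=\matindex{\mat{Y}}{i}{j_2}-t$ and leaving the rest of $\mat{Y}$ unchanged keeps the support inside $\supp(\mat{Y})$ (hence inside $\Sigma_\Theta$) and preserves every row sum, while for generic small $t\neq 0$ the result is no column permutation of $\mat{Y}$ — contradicting P-uniqueness. So assume P-uniqueness; then (i) holds, the $B_j$ are disjoint, and since $\vctor{z}=\mat{Y}\one$ they partition $I$ while all entries of $\mat{Y}$ in rows outside $I$ vanish. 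Now fix an admissible $\mat{S}$. If $\mat{S}$ had two columns overlapping at a row $i_0\in I^c$, I assign each $i\in I$ to one column of $\mat{S}$ carrying it (possible by admissibility), put the value $\vctor{z}_i$ there, and add $+t$ and $-t$ at the two entries in row $i_0$; this yields a competitor in $\Sigma_{\mat{S}}\subseteq\Sigma_\Theta$ with a nonzero entry in a row of $I^c$, whereas every $\mat{Y}\mat{P}$ is zero on $I^c$ — contradiction. Hence $(\matindex{\mat{S}}{I^c}{j})_j$ are pairwise disjoint, the first clause of (ii).

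With $\mat{S}$ disjoint on $I^c$, every competitor supported on $\mat{S}$ is zero on rows $I^c$, and over $I$ the row-sum equations decouple row by row: row $i$ gives one equation in $\card\{j : i\in A_j\}$ unknowns, so the competitor set is an affine space of dimension $\sum_j\card(A_j)-\card(I)$. The key point is that this dimension must be $0$: over $\mathbb{C}$ a positive-dimensional affine space is infinite, whereas P-uniqueness confines all competitors to the finite orbit $\{\mat{Y}\mat{P}:\mat{P}\in\permutation{r}\}$. Dimension $0$ forces the $A_j$ to be disjoint, hence (as they cover $I$) to partition $I$; the competitor is then unique, with column supports exactly $\{A_j\}$, and since $\mat{Y}\mat{P}$ has column-support multiset $\{B_j\}$, P-uniqueness forces $\{A_j\}=\{B_j\}$ up to permutation — the second clause of (ii). Sufficiency reverses this: given (i), (ii) and any competitor $\mat{Y}'\in\Sigma_\Theta$ with $\mat{Y}'\one=\vctor{z}$, choose $\mat{S}\supseteq\supp(\mat{Y}')$ in $\Theta$; it is admissible, so by (ii) it is disjoint on $I^c$ (forcing every entry of $\mat{Y}'$ in rows $I^c$ to vanish) and its $A_j$ partition $I$ and match the $B_j$ via some permutation $\pi$. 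Row by row on $I$ this determines $\mat{Y}'$ entirely, namely $\matindex{\mat{Y}'}{i}{j}=\vctor{z}_i$ for the unique $j$ with $i\in A_j$ and $0$ otherwise; using $\vctor{z}_i=\matindex{\mat{Y}}{i}{\pi(j)}$ one identifies $\mat{Y}'$ with $\mat{Y}\mat{P}$ for the permutation matrix $\mat{P}$ realizing $\pi$, so $\mat{Y}'\sim_p\mat{Y}$.

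The main obstacle is the necessity of (ii): the real content is recognizing that uniqueness up to a \emph{finite} permutation group cannot tolerate any continuous degree of freedom, which the affine-dimension count $\sum_j\card(A_j)-\card(I)$ captures cleanly, and that the two clauses of (ii) — disjointness on $I^c$ and a matching partition on $I$ — are precisely the combinatorial conditions that kill, respectively, competitors living partly outside $I$ and competitors with a spurious column-support pattern on $I$. The rest is careful bookkeeping of supports split across $I$ and $I^c$, together with the standard genericity argument for the perturbation used in (i).
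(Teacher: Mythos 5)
Your proof is correct and follows essentially the same route as the paper's: reduce PS-uniqueness to P-uniqueness since the only generalized permutations fixing $\transpose{\one}$ under right multiplication are ordinary permutations (cf.\ \Cref{prop:right-id-up-permutation}), rewrite the constraint $\transpose{\one}\transpose{\mat{Y'}} = \transpose{\one}\transpose{\mat{Y}}$ as prescribed row sums, and handle rows in $I$ and $I^c$ separately, killing any support overlap by exhibiting a continuum of competitors against the finite permutation orbit. Your affine-dimension count $\sum_j \card(A_j) - \card(I)$ is just a compact packaging of the same one-parameter ($\pm t$) perturbation argument, and all the supporting details (admissibility of any $\mat{S}$ carrying a competitor, vanishing of competitors on $I^c$ once disjointness there is known, and the reconstruction $\matindex{\mat{Y'}}{i}{j} = \vctor{z}_i = \matindex{\mat{Y}}{i}{\pi(j)}$ in the sufficiency direction) check out.
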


The proof is 
	deferred to  \cref{proof:prop-right-uniqueness-perm-vector-of-ones}.

In \cref{section:EMD-identifiability}, this characterization of right identifiability will be used to derive necessary conditions in fixed-support identifiability. It will be also used to derive a characterization of uniform right identifiability in \cref{section:uniform-identifiability}.

\section{Identifiability in exact matrix decomposition}
\label{section:EMD-identifiability}

In this section we go back to the general characterization of $\unique(\Sigma_{\Omega})$ without fixing a particular factor. We present our analysis of identifiability in exact matrix sparse factorization with the lifting approach \cite{choudhary2014identifiability}, based on \emph{exact matrix decomposition} into rank-one matrices with sparsity constraints.

\subsection{Principle of the lifting approach}

As the matrix product $\mat{X} \transpose{\mat{Y}}$ can be decomposed into the sum of rank-one matrices $\sum_{i=1}^r \matcol{\mat{X}}{i} \transpose{\matcol{\mat{Y}}{i}}$, the lifting procedure \cite{choudhary2014identifiability, malgouyres2016identifiability} suggests to represent a pair $(\mat{X}, \mat{Y})$ by its $r$-tuple of so-called \emph{rank-one contributions} 
\begin{equation}\label{eq:DefRankOneMapping}
	\varphi(\mat{X}, \mat{Y}) := (\matcol{\mat{X}}{i} \transpose{\matcol{\mat{Y}}{i}})_{i=1}^r.
\end{equation} 
Indeed, one can always identify, up to scaling ambiguities, the columns $\matcol{\mat{X}}{i}$, $\matcol{\mat{Y}}{i}$ from their outer product $\rankone{C}{i} = \matcol{\mat{X}}{i} \transpose{\matcol{\mat{Y}}{i}}$ ($1 \leq i \leq r$), \emph{as long as} the rank-one contribution $\rankone{C}{i}$ is not the zero matrix.
\begin{lemma}[Reformulation of {\cite[Chapter 7, Lemma 1]{le2016matrices}}]
	\label{lemma:id-from-outerproduct}
	Consider $\rankone{C}{}$ the outer product of two vectors $\vctor{a}$, $\vctor{b}$. If $\rankone{C}{} = \vctor{0}$, then $\vctor{a} = \vctor{0}$ or $\vctor{b} = \vctor{0}$. If $\rankone{C}{} \neq \vctor{0}$, then $\vctor{a}$, $\vctor{b}$ are nonzero, and for any $(\vctor{a'}, \vctor{b'})$ such that $\vctor{a'} \transpose{\vctor{b'}} = \rankone{C}{}$, there exists a scalar $\lambda \neq 0$ such that $\vctor{a'} = \lambda \vctor{a}$ and $\vctor{b'} = \frac{1}{\lambda} \vctor{b}$.
\end{lemma}

\begin{remark}
	In the case where $\rankone{C}{} = \vctor{0}$ with $\vctor{a} = \vctor{0}$, then $\vctor{a} \transpose{\vctor{b'}} = \vctor{0} = \rankone{C}{}$ for any $\vctor{b'}$. In other words, it is not possible to identify the factors $(\vctor{a}, \vctor{b})$ from the outer product $\rankone{C}{} = \vctor{a} \transpose{\vctor{b}}$ when $\rankone{C}{} = \vctor{0}$. 
\end{remark}

With this lifting approach,
each support constraint $\pair{S} = (\leftsupp{S}, \rightsupp{S})$ is represented by the $r$-tuple of rank-one support constraints $\tuplerkone{S} = \varphi(\leftsupp{S}, \rightsupp{S})$. 
Thus, when $(\mat{X}, \mat{Y})$ follows a sparsity structure given by $\Omega$, i.e., belongs to $\Sigma_{\Omega} \subseteq \mathbb{C}^{m \times r} \times \mathbb{C}^{n \times r}$, the $r$-tuple of rank-one matrices $\varphi(\mat{X}, \mat{Y}) \in (\mathbb{C}^{m \times n})^{r}$ belongs to the set:
\begin{gather}
	\Gamma_{\Omega} := \bigcup_{\tuplerkone{S} \in \varphi(\Omega)} \Gamma_{\tuplerkone{S}}, \quad \text{with} \quad \tuplerkone{S} := (\rankone{S}{1}, \ldots, \rankone{S}{r}), \quad \text{and} \\
	\Gamma_{\tuplerkone{S}} := \left \{ (\rankone{C}{i})_{i=1}^r \; | \; \forall i \in \integerSet{r}, \;\rank(\rankone{C}{i}) \leq 1, \, \supp(\rankone{C}{i}) \subseteq \rankone{S}{i} \right \} \subseteq (\mathbb{C}^{m \times n})^{r}.
\end{gather}

As we considered permutation and scaling equivalences between pairs of factors, it is natural to consider similar equivalences between tuples of rank-one matrices. However, as we will see in \Cref{lemma:varphi-properties} below, the application $\varphi$ removes scaling ambiguities, so we only have to introduce permutation equivalence between $r$-tuple of rank-one matrices.

\begin{definition}[Permutation equivalence between $r$-tuples of rank-one matrices]
	For two $r$-tuples of rank-one matrices $\tuplerkone{C} = (\rankone{C}{i})_{i=1}^r$ and $\tuplerkone{C'} = (\rankone{C'}{i})_{i=1}^r$, we write $\tuplerkone{C} \sim \tuplerkone{C'}$ if the tuples are equal up to a permutation of the index $i$.
\end{definition}

The following lemma 
states that there is a one-on-one correspondence between a pair of sparse factors $(\mat{X}, \mat{Y})$ and its rank-one contributions representation $\varphi(\mat{X}, \mat{Y})$, up to equivalences, which justifies the lifting procedure to analyze identifiability in sparse matrix factorization. The proof is deferred to  \cref{proof:lemma-varphi-properties}.

\begin{lemma}
	\label{lemma:varphi-properties}
	The application $\varphi$ preserves equivalences, in the sense that for any equivalent pair of factors $(\mat{X}, \mat{Y}) \sim (\mat{X'}, \mat{Y'})$, we have $\varphi(\mat{X}, \mat{Y}) \sim \varphi(\mat{X'}, \mat{Y'})$. 
	For any family of pairs of supports $\Omega$, the application $\varphi$ restricted to $\idcolsupp{\Omega}$, denoted $\varphi_{\Omega} \colon \idcolsupp{\Omega} \to \Gamma_{\Omega}$, is surjective, and injective up to equivalences, in the sense that for all $(\mat{X}, \mat{Y}), (\mat{X'}, \mat{Y'}) \in \idcolsupp{\Omega}$ such that $\varphi_{\Omega}(\mat{X}, \mat{Y}) \sim \varphi_{\Omega}(\mat{X'}, \mat{Y'})$, we have $(\mat{X}, \mat{Y}) \sim (\mat{X'}, \mat{Y'})$.
\end{lemma}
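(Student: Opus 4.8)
The plan is to establish the three assertions (equivalence preservation, surjectivity, injectivity up to equivalence) separately, treating the first two as direct verifications and concentrating the real work on the third.

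\medskip
\noindent\textbf{Preservation of equivalences.} First I would treat the easy direction. If $(\mat{X'},\mat{Y'})\sim(\mat{X},\mat{Y})$, write the equivalence as $(\mat{X'},\mat{Y'})=(\mat{X}\mat{D}\mat{P},\mat{Y}\mat{D}^{-1}\mat{P})$ with $\mat{D}\in\diagonal{r}$, $\mat{P}\in\permutation{r}$, and let $\tau$ be the permutation induced by $\mat{P}$. Writing $\mat{D}=\mathrm{diag}(d_1,\dots,d_r)$, the $i$-th column of $\mat{X'}$ is $d_{\tau(i)}\matcol{\mat{X}}{\tau(i)}$ and that of $\mat{Y'}$ is $d_{\tau(i)}^{-1}\matcol{\mat{Y}}{\tau(i)}$, so the $i$-th rank-one contribution of $(\mat{X'},\mat{Y'})$ is $d_{\tau(i)}\matcol{\mat{X}}{\tau(i)}\cdot d_{\tau(i)}^{-1}\transpose{\matcol{\mat{Y}}{\tau(i)}}=\matcol{\mat{X}}{\tau(i)}\transpose{\matcol{\mat{Y}}{\tau(i)}}$: the scalings from $\mat{D}$ and $\mat{D}^{-1}$ cancel inside the outer product. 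Hence $\varphi(\mat{X'},\mat{Y'})$ is exactly the $\tau$-permutation of $\varphi(\mat{X},\mat{Y})$, giving $\varphi(\mat{X'},\mat{Y'})\sim\varphi(\mat{X},\mat{Y})$.

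\medskip
\noindent\textbf{Well-definedness and surjectivity.} Next I would check that $\varphi_\Omega$ indeed lands in $\Gamma_\Omega$: for $(\mat{X},\mat{Y})\in\idcolsupp{\Omega}\subseteq\Sigma_{\pair{S}}$ each contribution has rank $\le 1$ with $\supp\big(\matcol{\mat{X}}{i}\transpose{\matcol{\mat{Y}}{i}}\big)\subseteq\supp(\matcol{\mat{X}}{i})\times\supp(\matcol{\mat{Y}}{i})\subseteq\supp(\matcol{\leftsupp{S}}{i})\times\supp(\matcol{\rightsupp{S}}{i})=\rankone{S}{i}$. For surjectivity, take $\tuplerkone{C}=(\rankone{C}{i})_{i=1}^r\in\Gamma_{\tuplerkone{S}}$ for some $\tuplerkone{S}=\varphi(\leftsupp{S},\rightsupp{S})\in\varphi(\Omega)$. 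Using \Cref{lemma:id-from-outerproduct}, factor each nonzero $\rankone{C}{i}=\vctor{a}_i\transpose{\vctor{b}_i}$ with $\vctor{a}_i,\vctor{b}_i\neq\vctor{0}$ and set $\matcol{\mat{X}}{i}:=\vctor{a}_i$, $\matcol{\mat{Y}}{i}:=\vctor{b}_i$; for the zero contributions set both columns to $\vctor{0}$. Then $\colsupp(\mat{X})=\colsupp(\mat{Y})=\{i:\rankone{C}{i}\neq\mat{0}\}$, so the column supports are identical. The support inclusions follow from the rectangle observation: $\supp(\rankone{C}{i})=\supp(\vctor{a}_i)\times\supp(\vctor{b}_i)$ is a nonempty product contained in $\rankone{S}{i}=\supp(\matcol{\leftsupp{S}}{i})\times\supp(\matcol{\rightsupp{S}}{i})$, and a nonempty rectangle inside a rectangle forces the factor-wise inclusions $\supp(\vctor{a}_i)\subseteq\supp(\matcol{\leftsupp{S}}{i})$ and $\supp(\vctor{b}_i)\subseteq\supp(\matcol{\rightsupp{S}}{i})$. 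Hence $(\mat{X},\mat{Y})\in\Sigma_{\pair{S}}\cap\idcolsupp{\Omega}$ and $\varphi(\mat{X},\mat{Y})=\tuplerkone{C}$.

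\medskip
\noindent\textbf{Injectivity up to equivalence.} For the main assertion, suppose $(\mat{X},\mat{Y}),(\mat{X'},\mat{Y'})\in\idcolsupp{\Omega}$ with $\varphi(\mat{X},\mat{Y})\sim\varphi(\mat{X'},\mat{Y'})$; let $\sigma$ be a permutation of $\integerSet{r}$ with $\matcol{\mat{X'}}{i}\transpose{\matcol{\mat{Y'}}{i}}=\matcol{\mat{X}}{\sigma(i)}\transpose{\matcol{\mat{Y}}{\sigma(i)}}$ for every $i$. I would then argue index by index. When this common contribution is nonzero, \Cref{lemma:id-from-outerproduct} gives a unique scalar $\lambda_i\neq 0$ with $\matcol{\mat{X'}}{i}=\lambda_i\matcol{\mat{X}}{\sigma(i)}$ and $\matcol{\mat{Y'}}{i}=\lambda_i^{-1}\matcol{\mat{Y}}{\sigma(i)}$. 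When it is zero, I invoke $\idcolsupp{\Omega}$: identical column supports force \emph{both} columns of each pair to vanish, i.e.\ $\matcol{\mat{X'}}{i}=\matcol{\mat{Y'}}{i}=\vctor{0}$ and $\matcol{\mat{X}}{\sigma(i)}=\matcol{\mat{Y}}{\sigma(i)}=\vctor{0}$, so any $\lambda_i\neq 0$ (say $\lambda_i=1$) satisfies the same two identities. Assembling $\mat{G}:=\mat{P}\mat{D}$, where $\mat{P}$ realizes $\sigma$ and $\mat{D}$ is the diagonal matrix with entries $\lambda_1,\dots,\lambda_r$, a direct column-wise check gives $\mat{X'}=\mat{X}\mat{G}$ and, using $\transpose{(\mat{G}^{-1})}=\mat{P}\mat{D}^{-1}$, also $\mat{Y'}=\mat{Y}\transpose{(\mat{G}^{-1})}$; since $\mat{G}\in\genperm{r}$ this is exactly $(\mat{X},\mat{Y})\sim(\mat{X'},\mat{Y'})$.

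\medskip
The routine parts are the outer-product scaling cancellations and the rectangle-in-rectangle support bookkeeping. The one genuinely delicate point — and the place where the hypothesis $\idcolsupp{\Omega}$ is indispensable — is the zero-contribution case in the injectivity argument: without identical column supports a vanishing rank-one contribution could be factored with its nonzero vector placed on the left in one pair and on the right in the other, and no generalized permutation could reconcile the two. Restricting $\varphi$ to $\idcolsupp{\Omega}$ is precisely what rules this out and makes the map injective up to equivalence.
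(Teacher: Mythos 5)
Your proof is correct and takes essentially the same route as the paper's: a column-wise application of \Cref{lemma:id-from-outerproduct} to recover each factor pair up to a nonzero scalar (with the rectangle-in-rectangle support argument for surjectivity), using the restriction to $\idcolsupp{\Omega}$ exactly as the paper intends — see the remark following \Cref{lemma:id-EMD} — to force both columns to vanish on zero rank-one contributions, and then assembling the scalars $\lambda_i$ and the permutation into a generalized permutation matrix $\mat{G} = \mat{P}\mat{D} \in \genperm{r}$. Nothing to correct.
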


As we show next, the most important property of the lifting approach with respect to identifiability is
that PS-uniqueness of an EMF in $\Sigma = \idcolsupp{\Omega}$ is equivalent to identifiability of the rank-one contributions in $\Gamma_{\Omega}$. Denote
\begin{equation}
	\mathcal{A}: \tuplerkone{C} = (\rankone{C}{i})_{i=1}^r\mapsto \sum_{i=1}^r \rankone{C}{i}
\end{equation}
the linear operator which sums the $r$ matrices of a tuple $\tuplerkone{C}$.

\begin{definition}[P-uniqueness of an EMD in $\Gamma$]
	\label{def:P-uniqueness-EMD}
	For any set $\Gamma \subseteq (\mathbb{C}^{m \times n})^{r}$ of $r$-tuples of rank-one matrices, the $r$-tuple $\tuplerkone{C} \in \Gamma$ is the P-unique \emph{exact matrix decomposition} (EMD) of $\mat{Z} := \mathcal{A}(\tuplerkone{C})$ in $\Gamma$ if, for any $\tuplerkone{C'} \in \Gamma$ such that $\mathcal{A}(\tuplerkone{C'}) = \mat{Z}$, we have $\tuplerkone{C'} \sim \tuplerkone{C}$.
\end{definition}

The set of all $r$-tuples $\tuplerkone{C} \in \Gamma$ such that $\tuplerkone{C}$ is the P-unique EMD of $\mat{Z} := \mathcal{A}(\tuplerkone{C})$ in $\Gamma$ is denoted $\unique(\Gamma)$, where the notation $\unique(\cdot)$ has been slightly abused, as $\Gamma$ and $\Sigma$ are subsets of different nature. 	
The following key result, which is reminiscent of \cite[Theorem 1]{choudhary2014identifiability}, makes the connection between PS-uniqueness of an EMF in $\Sigma = \idcolsupp{\Omega}$ and P-uniqueness of an EMD in $\Gamma = \Gamma_{\Omega}$. The proof is a direct consequence of \Cref{lemma:varphi-properties}.

\begin{lemma}
	\label{lemma:id-EMD}
	For any family of pairs of supports $\Omega$ and any pair of factors $(\mat{X}, \mat{Y})$:
	\begin{displaymath}
		(\mat{X}, \mat{Y}) \in \unique(\idcolsupp{\Omega}) \iff \varphi(\mat{X}, \mat{Y}) \in \unique(\Gamma_{\Omega}) \text{ and } (\mat{X}, \mat{Y}) \in \idcolsupp{\Omega}.
	\end{displaymath}
\end{lemma}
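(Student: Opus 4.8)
The plan is to derive the equivalence directly from the three structural properties of $\varphi_\Omega$ recorded in \Cref{lemma:varphi-properties}, the only additional ingredient being the elementary identity that summing the rank-one contributions recovers the product, i.e. $\mathcal{A}(\varphi(\mat{X}, \mat{Y})) = \sum_{i=1}^r \matcol{\mat{X}}{i}\transpose{\matcol{\mat{Y}}{i}} = \mat{X}\transpose{\mat{Y}}$. In words, $\varphi$ carries the fibre of the product map through $(\mat{X},\mat{Y})$ into the fibre of $\mathcal{A}$ through $\varphi(\mat{X},\mat{Y})$, so uniqueness of the preimage on one side should translate into uniqueness on the other. I would prove the two implications separately.

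For the forward implication, I would assume $(\mat{X},\mat{Y}) \in \unique(\idcolsupp{\Omega})$. The membership $(\mat{X},\mat{Y}) \in \idcolsupp{\Omega}$ is immediate from the definition of $\unique(\cdot)$, so it remains to show $\varphi(\mat{X},\mat{Y}) \in \unique(\Gamma_\Omega)$. Taking any $\tuplerkone{C'} \in \Gamma_\Omega$ with $\mathcal{A}(\tuplerkone{C'}) = \mathcal{A}(\varphi(\mat{X},\mat{Y})) = \mat{X}\transpose{\mat{Y}}$, I would use surjectivity of $\varphi_\Omega$ to lift $\tuplerkone{C'}$ to some $(\mat{X'},\mat{Y'}) \in \idcolsupp{\Omega}$ with $\varphi_\Omega(\mat{X'},\mat{Y'}) = \tuplerkone{C'}$; then $\mat{X'}\transpose{\mat{Y'}} = \mathcal{A}(\tuplerkone{C'}) = \mat{X}\transpose{\mat{Y}}$. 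PS-uniqueness in $\idcolsupp{\Omega}$ gives $(\mat{X'},\mat{Y'}) \sim (\mat{X},\mat{Y})$, and since $\varphi$ preserves equivalences, $\tuplerkone{C'} = \varphi(\mat{X'},\mat{Y'}) \sim \varphi(\mat{X},\mat{Y})$, as required.

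For the reverse implication, I would assume $\varphi(\mat{X},\mat{Y}) \in \unique(\Gamma_\Omega)$ and $(\mat{X},\mat{Y}) \in \idcolsupp{\Omega}$. Taking any $(\mat{X'},\mat{Y'}) \in \idcolsupp{\Omega}$ with $\mat{X'}\transpose{\mat{Y'}} = \mat{X}\transpose{\mat{Y}}$, I note $\varphi(\mat{X'},\mat{Y'}) \in \Gamma_\Omega$ and $\mathcal{A}(\varphi(\mat{X'},\mat{Y'})) = \mat{X'}\transpose{\mat{Y'}} = \mat{X}\transpose{\mat{Y}} = \mathcal{A}(\varphi(\mat{X},\mat{Y}))$. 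P-uniqueness in $\Gamma_\Omega$ then yields $\varphi(\mat{X'},\mat{Y'}) \sim \varphi(\mat{X},\mat{Y})$, and the injectivity-up-to-equivalence of $\varphi_\Omega$, both pairs lying in $\idcolsupp{\Omega}$, forces $(\mat{X'},\mat{Y'}) \sim (\mat{X},\mat{Y})$. Hence $(\mat{X},\mat{Y}) \in \unique(\idcolsupp{\Omega})$.

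I do not expect a genuine obstacle here, since the statement is announced as a direct consequence of \Cref{lemma:varphi-properties}: all the real content, namely that $\varphi_\Omega$ is a well-defined surjection from $\idcolsupp{\Omega}$ onto $\Gamma_\Omega$ that is bijective modulo the permutation/scaling equivalences, is packaged into that lemma. The only points requiring attention are bookkeeping ones: checking that the lifted pair $(\mat{X'},\mat{Y'})$ lands in $\idcolsupp{\Omega}$ rather than merely in $\Sigma_\Omega$ (guaranteed precisely because $\varphi_\Omega$ is surjective \emph{from that domain}), and keeping the direction of each $\sim$ consistent when passing back and forth through $\varphi$. The genuine difficulty is entirely upstream, in establishing \Cref{lemma:varphi-properties} itself, in particular the injectivity up to equivalence, which is where restricting to identical column supports $\idcolsupp{\Omega}$ is essential: otherwise vanishing rank-one contributions would break injectivity, as flagged by the remark following \Cref{lemma:id-from-outerproduct}.
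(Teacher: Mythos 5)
Your proof is correct and follows exactly the route the paper intends: the paper dispenses with details by stating the lemma is ``a direct consequence of \Cref{lemma:varphi-properties}'', and your argument is precisely the filled-in version, combining the identity $\mathcal{A}(\varphi(\mat{X},\mat{Y})) = \mat{X}\transpose{\mat{Y}}$ with the three properties of $\varphi_{\Omega}$ (equivalence preservation, surjectivity from $\idcolsupp{\Omega}$ onto $\Gamma_{\Omega}$, and injectivity up to equivalence). Your closing remarks correctly locate the real content upstream in \Cref{lemma:varphi-properties}, including the role of $\idcolsupp{\Omega}$ in preventing vanishing rank-one contributions from breaking injectivity.
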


\begin{remark}
	In \Cref{lemma:id-from-outerproduct}, we saw that there exist ambiguities in the identification of the vectors $\vctor{a}$, $\vctor{b}$ from their outer product $\rankone{C}{} = \vctor{a} \transpose{\vctor{b}}$ when $\rankone{C}{} = \vctor{0}$. To remove this ambiguity, we can enforce a constraint ``$\vctor{a} \transpose{\vctor{b}} = \vctor{0} \iff \mat{a} = \vctor{0} \text{ \underline{and} } \vctor{b} = \vctor{0}$". This is precisely the role of $\idcolsupp{\Omega}$ in the lemma.
\end{remark}

Combining this lemma with \Cref{prop:non-degenerate-properties}, we obtain the following main theorem which summarizes the application of the lifting procedure to the analysis of identifiability in sparse matrix factorization.

\begin{theorem}
	\label{thm:equivalence-with-EMD}
	For any family of pairs of supports $\Omega$ stable by permutation, and any pair of factors $(\mat{X}, \mat{Y})$:
	\begin{displaymath}
		(\mat{X}, \mat{Y}) \in \unique(\Sigma_{\Omega}) \iff \varphi(\mat{X}, \mat{Y}) \in \unique(\Gamma_{\Omega}) \text{ and } (\mat{X}, \mat{Y}) \in \idcolsupp{\Omega} \cap \maxcolsupp{\Omega}.
	\end{displaymath}
\end{theorem}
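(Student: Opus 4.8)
The plan is to combine the two results that precede this theorem, namely \Cref{prop:non-degenerate-properties} and \Cref{lemma:id-EMD}, since the statement to prove looks like a direct composition of these two equivalences. First I would recall \Cref{prop:non-degenerate-properties}, which asserts that for $\Omega$ stable by permutation,
\begin{displaymath}
	\unique(\Sigma_{\Omega}) = \unique(\idcolsupp{\Omega}) \cap \maxcolsupp{\Omega}.
\end{displaymath}
This already handles the decomposition of PS-uniqueness in $\Sigma_{\Omega}$ into a uniqueness property on the restricted set $\idcolsupp{\Omega}$ together with the maximality non-degeneration property $\maxcolsupp{\Omega}$.

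The next step is to translate $\unique(\idcolsupp{\Omega})$ into the rank-one picture using \Cref{lemma:id-EMD}, which states that $(\mat{X}, \mat{Y}) \in \unique(\idcolsupp{\Omega})$ holds if and only if $\varphi(\mat{X}, \mat{Y}) \in \unique(\Gamma_{\Omega})$ together with $(\mat{X}, \mat{Y}) \in \idcolsupp{\Omega}$. The point of carrying the extra membership $(\mat{X}, \mat{Y}) \in \idcolsupp{\Omega}$ along is that \Cref{lemma:id-EMD} already builds the identical-column-support condition into its right-hand side, so it combines naturally with the left-hand side of \Cref{prop:non-degenerate-properties}.

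Concretely, I would chain the two equivalences as follows. Take a pair $(\mat{X}, \mat{Y})$ with $(\mat{X}, \mat{Y}) \in \unique(\Sigma_{\Omega})$. By \Cref{prop:non-degenerate-properties} this is equivalent to $(\mat{X}, \mat{Y}) \in \unique(\idcolsupp{\Omega})$ and $(\mat{X}, \mat{Y}) \in \maxcolsupp{\Omega}$. Then applying \Cref{lemma:id-EMD} to the first of these two conditions replaces $(\mat{X}, \mat{Y}) \in \unique(\idcolsupp{\Omega})$ by the conjunction $\varphi(\mat{X}, \mat{Y}) \in \unique(\Gamma_{\Omega})$ and $(\mat{X}, \mat{Y}) \in \idcolsupp{\Omega}$. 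Collecting all the conditions, I obtain
\begin{displaymath}
	\varphi(\mat{X}, \mat{Y}) \in \unique(\Gamma_{\Omega}) \text{ and } (\mat{X}, \mat{Y}) \in \idcolsupp{\Omega} \text{ and } (\mat{X}, \mat{Y}) \in \maxcolsupp{\Omega},
\end{displaymath}
which is exactly the claimed right-hand side once $\idcolsupp{\Omega} \cap \maxcolsupp{\Omega}$ is recognized as the intersection of the last two memberships. Since every step is an equivalence, the chain runs in both directions and the proof is complete.

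Honestly, I do not expect a genuine obstacle here: this theorem is essentially a bookkeeping assembly of the two preceding results rather than a new argument. The only point requiring mild care is making sure the auxiliary memberships $(\mat{X}, \mat{Y}) \in \idcolsupp{\Omega}$ appearing on both sides of \Cref{lemma:id-EMD} are handled consistently — but since $\idcolsupp{\Omega} \subseteq \Sigma_{\Omega}$ and the two lemmas are already phrased to interlock via this set, the intersection $\idcolsupp{\Omega} \cap \maxcolsupp{\Omega}$ collapses neatly and no redundancy or gap arises. The stability of $\Omega$ by permutation is used only implicitly, through its role as the hypothesis of \Cref{prop:non-degenerate-properties}.
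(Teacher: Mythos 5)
Your proposal is correct and matches the paper exactly: the paper derives \Cref{thm:equivalence-with-EMD} precisely by combining \Cref{prop:non-degenerate-properties} with \Cref{lemma:id-EMD}, just as you chain the two equivalences. No gap, and your remark about the identical-column-support membership interlocking across the two results is the only subtlety involved.
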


Hence, it is sufficient to characterize $(\mat{X}, \mat{Y}) \in \idcolsupp{\Omega} \cap \maxcolsupp{\Omega}$ such that the $r$-tuple of rank-one contributions $\varphi(\mat{X}, \mat{Y})$ is the P-unique EMD of $\mat{Z} := \mat{X} \transpose{\mat{Y}}$ in $\Gamma_{\Omega}$. The characterization of $\unique(\Gamma_{\Omega})$ relies on an analogy with sparse linear inverse problem \cite{elad2010sparse}, in the spirit of  \Cref{prop:right-uniqueness-linear-inv-pb-matrix}: P-uniqueness of $\tuplerkone{C}$ in the EMD of $\mat{Z} := \mathcal{A}(\tuplerkone{C})$ in $\Gamma_{\Omega}$ is equivalent to the ability to successively identify the constraint supports $\{ \rankone{S}{i} \}_{i=1}^r$ on $\tuplerkone{C}$ among the family $\varphi(\Omega)$, and then to identify $\tuplerkone{C}$ after fixing these supports.

\begin{proposition}
	\label{prop:id-support-and-entries}
	For any family of pairs of supports $\Omega$ stable by permutation, we have $\tuplerkone{C} \in \unique(\Gamma_{\Omega})$ if, and only if, both of the following conditions are verified: 
	\begin{enumerate}[label=(\roman*)]
		\item For all $\tuplerkone{S} \in \varphi(\Omega)$ such that $\mathcal{A} (\tuplerkone{C})\in \mathcal{A}(\Gamma_{\tuplerkone{S}})$, there exists a permutation $\sigma \in \mathfrak{S}(\integerSet{r})$ such that: $\supp(\rankone{C}{i}) \subseteq \rankone{S}{\sigma(i)}$ for each $1 \leq i \leq r$.
		\item For all $\tuplerkone{S} \in \varphi(\Omega)$ such that $\tuplerkone{C} \in \Gamma_{\tuplerkone{S}}$, we have $\tuplerkone{C} \in \unique (\Gamma_{\tuplerkone{S}})$.
	\end{enumerate}
\end{proposition}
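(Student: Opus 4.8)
The plan is to mirror the classical two-step analysis of sparse recovery: first isolate the support constraint that any competitor decomposition must obey, then appeal to uniqueness once that support is fixed. The decisive structural fact is that $\Gamma_{\Omega} = \bigcup_{\tuplerkone{S} \in \varphi(\Omega)} \Gamma_{\tuplerkone{S}}$ is a union over a family $\varphi(\Omega)$ that is itself stable under permutation of the tuple index. I would establish this first: writing $\tuplerkone{S} = \varphi(\leftsupp{S}, \rightsupp{S})$ and using that right multiplication by a permutation matrix $\mat{P} \in \permutation{r}$ permutes columns, one checks that $\varphi(\leftsupp{S}\mat{P}, \rightsupp{S}\mat{P})$ is exactly the tuple $\tuplerkone{S}$ with its index permuted; since $\Omega$ is stable by permutation, every index-permutation of a member of $\varphi(\Omega)$ again lies in $\varphi(\Omega)$. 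I would also note that the abstract restriction principle of \Cref{lemma:nec-conditions-EMF} transfers verbatim to $\Gamma$ (replacing $\mat{X}\transpose{\mat{Y}}$ by $\mathcal{A}(\cdot)$ and PS-equivalence by permutation equivalence $\sim$), since its proof only uses set inclusion and an equivalence relation; in particular $\tuplerkone{C} \in \unique(\Gamma)$ together with $\tuplerkone{C} \in \Gamma' \subseteq \Gamma$ forces $\tuplerkone{C} \in \unique(\Gamma')$.

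For the forward implication, I assume $\tuplerkone{C} \in \unique(\Gamma_{\Omega})$ and set $\mat{Z} := \mathcal{A}(\tuplerkone{C})$. Condition (ii) is immediate from the restriction principle, since $\Gamma_{\tuplerkone{S}} \subseteq \Gamma_{\Omega}$ whenever $\tuplerkone{S} \in \varphi(\Omega)$. For condition (i), given $\tuplerkone{S} \in \varphi(\Omega)$ with $\mat{Z} \in \mathcal{A}(\Gamma_{\tuplerkone{S}})$, I pick $\tuplerkone{C'} \in \Gamma_{\tuplerkone{S}} \subseteq \Gamma_{\Omega}$ realizing $\mathcal{A}(\tuplerkone{C'}) = \mat{Z}$; uniqueness yields $\tuplerkone{C'} \sim \tuplerkone{C}$, i.e.\ a permutation $\sigma$ with $\rankone{C}{i} = \rankone{C'}{\sigma(i)}$, hence $\supp(\rankone{C}{i}) = \supp(\rankone{C'}{\sigma(i)}) \subseteq \rankone{S}{\sigma(i)}$, which is exactly (i).

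For the converse, I assume (i) and (ii) and take any $\tuplerkone{C'} \in \Gamma_{\Omega}$ with $\mathcal{A}(\tuplerkone{C'}) = \mat{Z}$, the goal being $\tuplerkone{C'} \sim \tuplerkone{C}$. I choose $\tuplerkone{S} \in \varphi(\Omega)$ with $\tuplerkone{C'} \in \Gamma_{\tuplerkone{S}}$, so that $\mat{Z} \in \mathcal{A}(\Gamma_{\tuplerkone{S}})$ and (i) supplies a permutation $\sigma$ with $\supp(\rankone{C}{i}) \subseteq \rankone{S}{\sigma(i)}$. The crux is then to realign indices so that (ii) becomes applicable: using the stability of $\varphi(\Omega)$ established above, the index-permuted support $\tuplerkone{S}_{\sigma} := (\rankone{S}{\sigma(i)})_{i=1}^r$ lies in $\varphi(\Omega)$, and by construction $\tuplerkone{C} \in \Gamma_{\tuplerkone{S}_{\sigma}}$, so (ii) gives $\tuplerkone{C} \in \unique(\Gamma_{\tuplerkone{S}_{\sigma}})$. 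Finally I transport $\tuplerkone{C'}$ into this fixed-support set by the same permutation: the tuple $(\rankone{C'}{\sigma(i)})_{i=1}^r$ lies in $\Gamma_{\tuplerkone{S}_{\sigma}}$, has the same sum $\mat{Z}$, and is $\sim$-equivalent to $\tuplerkone{C'}$; uniqueness in $\Gamma_{\tuplerkone{S}_{\sigma}}$ then forces it to be $\sim \tuplerkone{C}$, whence $\tuplerkone{C'} \sim \tuplerkone{C}$ by transitivity. The main obstacle is precisely this permutation bookkeeping in the converse: condition (i) controls supports only up to an unknown $\sigma$, and one must invoke stability by permutation to produce a member of $\varphi(\Omega)$ to which (ii) applies, then conjugate $\tuplerkone{C'}$ by the \emph{same} $\sigma$; keeping the two permutations consistent throughout is where care is needed.
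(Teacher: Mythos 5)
Your proof is correct and follows essentially the same route as the paper's: the forward direction via the restriction principle (\Cref{lemma:nec-conditions-EMF} transferred verbatim to tuples of rank-one matrices), and the converse via the observation that stability of $\Omega$ by permutation makes $\varphi(\Omega)$ stable under index permutations, allowing you to realign the support tuple by the permutation $\sigma$ supplied by condition (i) before invoking the fixed-support uniqueness of condition (ii). The permutation bookkeeping you identify as the crux is exactly the point handled the same way in the paper's argument.
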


The proof is deferred to the  \cref{proof:prop-id-support-and-entries}.

The two following subsections deal respectively with some characterizations for identifiability of the constraint supports, in the sense of condition (i) in \Cref{prop:id-support-and-entries}, and for identifiability of an $r$-tuple of rank-one matrices with fixed rank-one constraint supports as in condition (ii).

We end this subsection by deriving the following corollary of \Cref{thm:equivalence-with-EMD} with fixed-support (whose proof is deferred to  \cref{proof:cor-equivalence-EMD-fixed-support}), which will soon turn out to be useful for the next steps. 
Specializing the definitions of $\idcolsupp{\Omega}$ and $\maxcolsupp{\Omega}$ from \eqref{eq:DefIC} and \eqref{eq:DefMC} to the case where $\Omega$ is reduced to a single pair of supports $\pair{S}$, we denote
\begin{align}
	\idcolsupp{\pair{S}} :=& \; \{ (\mat{X}, \mat{Y}) \in \Sigma_{\pair{S}} \; | \; \colsupp(\mat{X}) = \colsupp(\mat{Y})\}, \label{eq:idcolsupp-fixed}\\
	\maxcolsupp{\pair{S}} :=& \; \{ (\mat{X}, \mat{Y}) \in \Sigma_{\pair{S}} \; | \; \colsupp(\mat{X}) = \colsupp(\leftsupp{S}), \, \colsupp(\mat{Y}) = \colsupp(\rightsupp{S}) \}. \label{eq:maxcolsupp-fixed}
\end{align}

\begin{corollary}[Application of \Cref{thm:equivalence-with-EMD}]
	\label{cor:equivalence-EMD-fixed-support}
	For any pair of supports $\pair{S}$, and any pair of factors $(\mat{X}, \mat{Y})$, denoting $\tuplerkone{S} := \varphi(\pair{S})$:
	\begin{displaymath}
		(\mat{X}, \mat{Y}) \in \unique(\Sigma_{\pair{S}}) \iff
		\varphi(\mat{X}, \mat{Y}) \in \unique(\Gamma_{\tuplerkone{S}}) \text{ and } (\mat{X}, \mat{Y}) \in \idcolsupp{\pair{S}} \cap \maxcolsupp{\pair{S}}.
	\end{displaymath}
\end{corollary}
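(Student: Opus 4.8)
The plan is to deduce the corollary from \Cref{thm:equivalence-with-EMD} by applying that theorem to the smallest permutation-stable family containing $\pair{S}$, namely its permutation orbit
\[
	\Omega := \{ \pair{S}\mat{P} \mid \mat{P} \in \permutation{r} \}, \qquad \pair{S}\mat{P} := (\leftsupp{S}\mat{P}, \rightsupp{S}\mat{P}).
\]
This $\Omega$ is stable by permutation by construction and contains $\pair{S}$ (take $\mat{P} = \identity{r}$), so \Cref{thm:equivalence-with-EMD} applies verbatim. Writing $\tuplerkone{S} := \varphi(\pair{S})$, the set $\varphi(\Omega)$ is exactly the collection of all index-permutations of $\tuplerkone{S}$, whence $\Gamma_{\Omega} = \bigcup_{\mat{P}} \Gamma_{\varphi(\pair{S}\mat{P})}$ with $\Gamma_{\tuplerkone{S}}$ as the member indexed by $\mat{P} = \identity{r}$. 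Assuming (without loss of generality) that $(\mat{X}, \mat{Y}) \in \Sigma_{\pair{S}}$, since otherwise both sides of the claimed equivalence are false, the whole proof reduces to showing that, once restricted to pairs in $\Sigma_{\pair{S}}$ (resp.\ tuples in $\Gamma_{\tuplerkone{S}}$), each of the four sets appearing in \Cref{thm:equivalence-with-EMD} coincides with its single-pair analogue from \eqref{eq:idcolsupp-fixed}--\eqref{eq:maxcolsupp-fixed}.

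Three of these coincidences I would treat quickly. First, $\idcolsupp{\Omega} \cap \Sigma_{\pair{S}} = \idcolsupp{\pair{S}}$ is immediate from the definitions, since $\Sigma_{\pair{S}} \subseteq \Sigma_{\Omega}$ and the identical-column-support condition does not refer to the ambient family. Second, $\maxcolsupp{\Omega} \cap \Sigma_{\pair{S}} = \maxcolsupp{\pair{S}}$: the inclusion ``$\subseteq$'' follows by testing the defining condition of $\maxcolsupp{\Omega}$ on the particular member $\pair{S} \in \Omega$, while the converse uses that a column permutation preserves the number of nonzero columns, so that $(\mat{X},\mat{Y}) \in \maxcolsupp{\pair{S}} \cap \Sigma_{\pair{S}\mat{P}}$ gives $\colsupp(\leftsupp{S}) = \colsupp(\mat{X}) \subseteq \colsupp(\leftsupp{S}\mat{P})$, forcing equality by a cardinality count, and likewise for the right support. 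Third, $\varphi(\mat{X},\mat{Y}) \in \Gamma_{\tuplerkone{S}}$ automatically for $(\mat{X},\mat{Y}) \in \idcolsupp{\pair{S}}$ by \Cref{lemma:varphi-properties}, so that $\varphi(\mat{X},\mat{Y}) \in \unique(\Gamma_{\Omega})$ combined with this membership yields $\varphi(\mat{X},\mat{Y}) \in \unique(\Gamma_{\tuplerkone{S}})$.

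The substantive step, and the one I expect to be the main obstacle, is relating the two uniqueness sets, i.e.\ establishing $\unique(\Sigma_{\Omega}) \cap \Sigma_{\pair{S}} = \unique(\Sigma_{\pair{S}})$ and, analogously, $\unique(\Gamma_{\Omega}) \cap \Gamma_{\tuplerkone{S}} = \unique(\Gamma_{\tuplerkone{S}})$. The inclusion ``$\subseteq$'' is handed to us by \Cref{lemma:nec-conditions-EMF} with $\Sigma' := \Sigma_{\pair{S}}$. For ``$\supseteq$'' one must transport an arbitrary competitor: given $(\mat{X'}, \mat{Y'}) \in \Sigma_{\Omega}$ with $\mat{X'} \transpose{\mat{Y'}} = \mat{X} \transpose{\mat{Y}}$, it lies in some $\Sigma_{\pair{S}\mat{P}}$, and the key observation is that $(\mat{X'}\inverse{\mat{P}}, \mat{Y'}\inverse{\mat{P}}) \in \Sigma_{\pair{S}}$ with the \emph{same} product (using $\transpose{\mat{P}} = \inverse{\mat{P}}$); PS-uniqueness within $\Sigma_{\pair{S}}$ then gives $(\mat{X'}\inverse{\mat{P}}, \mat{Y'}\inverse{\mat{P}}) \sim (\mat{X}, \mat{Y})$, and composing with the permutation equivalence $(\mat{X'}, \mat{Y'}) \sim (\mat{X'}\inverse{\mat{P}}, \mat{Y'}\inverse{\mat{P}})$ yields $(\mat{X'}, \mat{Y'}) \sim (\mat{X}, \mat{Y})$. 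The argument for $\Gamma$ is identical, replacing column-permutations of factors by index-permutations of tuples (under which $\mathcal{A}$ is invariant) and invoking the permutation equivalence $\sim$ on $r$-tuples. Substituting the four identifications into \Cref{thm:equivalence-with-EMD} and intersecting with $\Sigma_{\pair{S}}$ delivers the stated corollary. As an alternative bypassing the orbit construction, one could instead prove the single-pair non-degeneration identity $\unique(\Sigma_{\pair{S}}) = \unique(\idcolsupp{\pair{S}}) \cap \maxcolsupp{\pair{S}}$ directly---using \Cref{lemma:identical-colsupp}, \Cref{lemma:maximal-colsupp} and \Cref{lemma:nec-conditions-EMF} for ``$\subseteq$'', and the same cardinality count (which rules out degenerate columns of any competitor once the maximal column supports are fixed) for ``$\supseteq$''---and then combine it with \Cref{lemma:id-EMD} applied to the single family $\{\pair{S}\}$.
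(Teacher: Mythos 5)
Your proposal is correct and takes essentially the same route as the paper: it applies \Cref{thm:equivalence-with-EMD} to the permutation orbit $\Omega := \{ (\leftsupp{S}\mat{P}, \rightsupp{S}\mat{P}) \mid \mat{P} \in \permutation{r} \}$, the smallest permutation-stable family containing $\pair{S}$, and then reduces each set in the theorem to its fixed-support analogue --- transporting any competitor in $\Sigma_{\pair{S}\mat{P}}$ back to $\Sigma_{\pair{S}}$ via $\inverse{\mat{P}}$ (which preserves the product since $\inverse{\mat{P}} = \transpose{\mat{P}}$) to get $\unique(\Sigma_{\Omega}) \cap \Sigma_{\pair{S}} = \unique(\Sigma_{\pair{S}})$ and its $\Gamma$-analogue, and using the cardinality count on nonzero columns to identify $\maxcolsupp{\Omega} \cap \Sigma_{\pair{S}}$ with $\maxcolsupp{\pair{S}}$. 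All steps check out, so no further revision is needed.
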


\subsection{Identifying the rank-one constraint supports among a family}
Inspired by \cite[Chapter 7]{le2016matrices}, we derive simple conditions implying condition (i) of \Cref{prop:id-support-and-entries}. Let us introduce the \emph{completion} of $\varphi(\Omega)$, i.e., the family $\varphi(\Omega)$ completed by all the $r$-tuples of rank-one supports that are included in an $r$-tuple of rank-one matrices in $\varphi(\Omega)$, or more formally
\begin{displaymath}
	\varphi^{\mathrm{cp}}(\Omega) := \bigcup_{\tuplerkone{S} \in \varphi(\Omega)} \left \{ \tuplerkone{S}' \in (\mathbb{B}^{m \times n})^r \; | \; \forall i=1,\dots,r, \, \rank(\rankone{S'}{i}) \leq 1, \, \rankone{S'}{i} \subseteq \rankone{S}{i} \right \},
\end{displaymath}
where $\rankone{S'}{i} \subseteq \rankone{S}{i}$ means the inclusion of the supports viewed as subsets of indices.
\begin{proposition}
	\label{prop:simple-SC-identify-supports}
	Consider $\tuplerkone{C} \in \Gamma_{\Omega}$ and $\mat{Z} := \mathcal{A}(\tuplerkone{C})$, and assume that:
	\begin{enumerate}[label=(\roman*)]
		\item for each $\tuplerkone{C'} \in \Gamma_{\Omega}$ such that  $\mathcal{A}(\tuplerkone{C'}) = \mat{Z}$, the supports $\{ \supp(\rankone{C'}{i}) \}_{i=1}^r$ are pairwise disjoint;
		
		\item all $\tuplerkone{S} \in \varphi^{\mathrm{cp}}(\Omega)$ such that $\left \{ \rankone{S}{i} \right \}_{i=1}^r$ is a partition of $\supp(\mat{Z})$, and such that the rank of $(\matindex{\mat{Z}}{k}{l})_{(k,l) \in \rankone{S}{i}}$ ($1 \leq i \leq r$) is at most one, are equivalent up to a permutation.
	\end{enumerate}
	Then, the supports $\left \{ \supp ( \rankone{C}{i} ) \right \}_{i=1}^r$ are identifiable in the sense of condition (i) in \Cref{prop:id-support-and-entries}.
\end{proposition}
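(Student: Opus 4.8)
The plan is to show directly that conditions (i)--(ii) force condition (i) of \Cref{prop:id-support-and-entries}. So I would fix an arbitrary $\tuplerkone{S} \in \varphi(\Omega)$ with $\mat{Z} = \mathcal{A}(\tuplerkone{C}) \in \mathcal{A}(\Gamma_{\tuplerkone{S}})$, and pick a witness $\tuplerkone{C'} = (\rankone{C'}{i})_{i=1}^r \in \Gamma_{\tuplerkone{S}}$ with $\mathcal{A}(\tuplerkone{C'}) = \mat{Z}$; since $\tuplerkone{S} \in \varphi(\Omega)$ gives $\Gamma_{\tuplerkone{S}} \subseteq \Gamma_{\Omega}$, this witness also lies in $\Gamma_{\Omega}$. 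The target is a permutation $\sigma \in \mathfrak{S}(\integerSet{r})$ with $\supp(\rankone{C}{i}) \subseteq \rankone{S}{\sigma(i)}$ for all $i$, and the whole strategy is to produce $\sigma$ by comparing the \emph{support tuples} of $\tuplerkone{C}$ and $\tuplerkone{C'}$ through hypothesis (ii).

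Next I would form the two $r$-tuples of supports $\tuplerkone{T} := (\supp(\rankone{C}{i}))_{i=1}^r$ and $\tuplerkone{T'} := (\supp(\rankone{C'}{i}))_{i=1}^r$, and check that both satisfy the hypotheses of (ii). Each $\supp(\rankone{C}{i})$ is the support of a rank-one matrix, hence a combinatorial rectangle (outer product of two binary vectors), i.e. a rank-one support; and since $\tuplerkone{C} \in \Gamma_{\Omega}$ it is contained, componentwise, in some $\tuplerkone{S^0} \in \varphi(\Omega)$, so $\tuplerkone{T} \in \varphi^{\mathrm{cp}}(\Omega)$, and identically $\tuplerkone{T'} \in \varphi^{\mathrm{cp}}(\Omega)$. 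Applying assumption (i) to $\tuplerkone{C}$ and to $\tuplerkone{C'}$ (both summing to $\mat{Z}$ and lying in $\Gamma_{\Omega}$) makes each family of supports pairwise disjoint; because matrices with disjoint supports do not cancel, the union of the $\supp(\rankone{C}{i})$ (resp. of the $\supp(\rankone{C'}{i})$) equals $\supp(\mat{Z})$, so each tuple partitions $\supp(\mat{Z})$. For the rank condition, on the rectangle $\supp(\rankone{C}{i})$ disjointness makes every $\rankone{C}{j}$ with $j \neq i$ vanish, so the restriction $(\matindex{\mat{Z}}{k}{l})_{(k,l) \in \supp(\rankone{C}{i})}$ coincides with that of $\rankone{C}{i}$ and has rank at most one; the same holds for $\tuplerkone{C'}$.

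Having placed both $\tuplerkone{T}$ and $\tuplerkone{T'}$ inside the collection described in (ii), hypothesis (ii) yields that they are equivalent up to a permutation, i.e. there is $\sigma \in \mathfrak{S}(\integerSet{r})$ with $\supp(\rankone{C}{i}) = \supp(\rankone{C'}{\sigma(i)})$ for all $i$; combining this with $\supp(\rankone{C'}{\sigma(i)}) \subseteq \rankone{S}{\sigma(i)}$ (from $\tuplerkone{C'} \in \Gamma_{\tuplerkone{S}}$) gives exactly condition (i) of \Cref{prop:id-support-and-entries}. I expect the main obstacle to be the middle step: rigorously verifying that the two support tuples genuinely meet \emph{all} the hypotheses of (ii) --- in particular the ``partition of $\supp(\mat{Z})$'' and ``rank at most one on each rectangle'' requirements --- since this is precisely where assumption (i) must be invoked twice (once for $\tuplerkone{C}$, once for $\tuplerkone{C'}$), and where the bookkeeping of possibly-empty supports (i.e. vanishing rank-one contributions) has to be handled so that the permutation $\sigma$ matches the tuples entry by entry.
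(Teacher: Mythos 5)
Your proof is correct and takes essentially the same route as the paper's: pick a witness $\tuplerkone{C'} \in \Gamma_{\tuplerkone{S}} \subseteq \Gamma_{\Omega}$ summing to $\mat{Z}$, invoke assumption (i) twice so that both support tuples $(\supp(\rankone{C}{i}))_{i=1}^r$ and $(\supp(\rankone{C'}{i}))_{i=1}^r$ are pairwise disjoint, cover $\supp(\mat{Z})$ without cancellation, restrict $\mat{Z}$ to rank at most one on each rectangle, and belong to $\varphi^{\mathrm{cp}}(\Omega)$, then apply assumption (ii) to get the permutation $\sigma$ and conclude $\supp(\rankone{C}{i}) = \supp(\rankone{C'}{\sigma(i)}) \subseteq \rankone{S}{\sigma(i)}$. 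The empty-support bookkeeping you flag as the main obstacle is in fact benign: zero contributions have empty support (of rank $0$), assumption (ii) quantifies over $\varphi^{\mathrm{cp}}(\Omega)$, which explicitly contains tuples with empty components, and the tuple equivalence up to permutation matches empty entries with empty entries, for which the required inclusion holds trivially.
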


The proof is deferred to  \cref{proof:prop-simple-SC-identify-supports}.

\begin{remark}
	The assumption (i) of the proposition can be verified for some examples of $\mat{Z}$ and $\Omega$, using a simple counting argument of the cardinality of the rank-one supports, see for instance \cite[Chapter 7, Lemma 4]{le2016matrices}. 
\end{remark}

\Cref{prop:simple-SC-identify-supports} can be used to show identifiability of the supports of sparse factors of well-known matrices. For instance, 
\cite[Chapter 7, Section 7.4]{le2016matrices} implicitly uses such conditions for the DFT matrix. In the companion paper \cite{LeonPart2}, we show that standard sparse factorizations of the DCT and DST matrices of type II also verify these sufficient conditions.

\subsection{Necessary conditions for fixed-support identifiability}
Condition (ii) of \Cref{prop:id-support-and-entries} involves identifiability given a fixed support. To characterize it,
we analyze next the set $\unique(\Gamma_{\tuplerkone{S}})$ for a given fixed $r$-tuple of rank-one supports $\tuplerkone{S}$, 
and provide necessary conditions for fixed-support identifiability.
By \Cref{thm:equivalence-with-EMD}, fixed-support identifiability in EMD is equivalent to fixed-support identifiability in EMF, up to non-degeneration properties. We can thus exploit the analysis of \cref{section:right-identifiability} for EMF when fixing one factor to derive necessary conditions of fixed-support identifiability in EMD.
In the following theorem, the conditions are derived by successively fixing the left factor and the right factor.

\begin{theorem}
	\label{thm:NC-fixed-support-id}
	Consider $\tuplerkone{S}$ an $r$-tuple of rank-one supports, $\tuplerkone{C} \in \unique(\Gamma_{\tuplerkone{S}})$. 	Then, the following conditions hold:
	\begin{enumerate}[label=(\roman*)]
		\item For all $ i,i' \in \integerSet{r}$, $i \neq i'$, if the rank-one contributions $\rankone{C}{i}$ and $\rankone{C}{i'}$ are nonzero with the same row span and/or the same column span, then the rank-one supports $\rankone{S}{i}$ and $\rankone{S}{i'}$ are disjoint.
		
		\item Consider $\pair{S} = (\leftsupp{S}, \rightsupp{S})$ a pair of supports such that $\tuplerkone{S} = \varphi(\pair{S})$, and $(\mat{X}, \mat{Y}) \in \Sigma_{\pair{S}}$ such that $\tuplerkone{C} = \varphi(\mat{X}, \mat{Y})$. Denote $\{ [1], \ldots, [K] \}$ (resp. $ \{ [1], \ldots, [U] \}$ \footnote{The use of the same notation $[\cdot]$ for both partitions is an abuse of notation, but it should be clear from the context that $[k]$ (for $k \in \integerSet{K}$) is an element of the partition $\{ [1], \ldots, [K] \}$, while $[u]$ (for $u \in \integerSet{U}$) is an element of the partition $\{ [1], \ldots, [U]\}$.}) a partition of $\colsupp(\mat{X})$ (resp. $\colsupp(\mat{Y})$) into equivalence classes defined by collinearity of \emph{nonzero} columns in $\mat{X}$ (resp. $\mat{Y}$). For each $k \in \integerSet{K}$ (resp. each $u \in \integerSet{U}$), denote $ \vctorseq{\vctor{x}}{k} $ (resp. $\vctorseq{\vctor{y}}{u}$) a representative of the nonzero collinear columns indexed by $[k]$ (resp. by $[u]$). For each $i \in \integerSet{m}$, $j \in \integerSet{n}$:
		\begin{itemize}
			\item the columns $\{  \vctorseq{\vctor{x}}{k} \; : \; k \in \integerSet{K}, \, \supp(\rightsupp{S}_{[k]}\one_{[k]})  \ni j\}$  are linearly independent; and
			\item the columns $\{ \vctorseq{\vctor{y}}{u} \; : \; u \in \integerSet{U}, \, \supp(\leftsupp{S}_{[u]}\one_{[u]}) \ni i\}$  are linearly independent.
		\end{itemize}
	\end{enumerate}
\end{theorem}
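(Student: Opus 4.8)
The plan is to exploit \Cref{cor:equivalence-EMD-fixed-support} and the fixed-factor analysis of \Cref{section:right-identifiability}, reading each necessary condition as an obstruction to uniqueness produced by an explicit competing decomposition. The common mechanism is the following: given the representative $(\mat{X}, \mat{Y}) \in \Sigma_{\pair{S}}$ with $\tuplerkone{C} = \varphi(\mat{X}, \mat{Y})$, any pair $(\mat{X}', \mat{Y}') \in \Sigma_{\pair{S}}$ with $\mat{X}'\transpose{\mat{Y}'} = \mat{Z} := \mathcal{A}(\tuplerkone{C})$ satisfies $\varphi(\mat{X}', \mat{Y}') \in \Gamma_{\tuplerkone{S}}$ and $\mathcal{A}(\varphi(\mat{X}', \mat{Y}')) = \mat{Z}$, since each contribution has support contained in the corresponding $\rankone{S}{i}$. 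By \Cref{def:P-uniqueness-EMD} this forces $\varphi(\mat{X}', \mat{Y}') \sim \tuplerkone{C}$. Thus, to contradict $\tuplerkone{C} \in \unique(\Gamma_{\tuplerkone{S}})$ it suffices to build, from a hypothetical violation of (i) or (ii), a competitor $(\mat{X}', \mat{Y}') \in \Sigma_{\pair{S}}$ with the same product $\mat{Z}$ whose tuple of rank-one contributions is \emph{not} a permutation of that of $\tuplerkone{C}$.

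For condition (ii) I would fix the left factor $\mat{X}$ and vary only $\mat{Y}$, which turns the problem into the linear inverse problem of \Cref{section:right-identifiability}. After discarding the zero columns of $\mat{X}$ (\Cref{lemma:eliminating-zero-col-right-id}) and normalizing the remaining ones (\Cref{prop:right-id-up-permutation}), the relation $\varphi(\mat{X}, \mat{Y}') \sim \tuplerkone{C}$ becomes exactly $P$-uniqueness of $(\mat{X}, \mat{Y})$ with $\mat{X}$ fixed, so \Cref{thm:right-uniqueness-perm} applies and in particular yields its condition~(i): the collapsed factor $\mat{\underline{X}} = (\vctorseq{\vctor{x}}{k})_k$ determines the collapsed right factor uniquely. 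Feeding this uniqueness into the necessary condition~(i) of \Cref{prop:right-uniqueness-linear-inv-pb-matrix} gives precisely the linear independence, for each row index $j$, of the representatives $\vctorseq{\vctor{x}}{k}$ whose collapsed right support contains $j$, which is the first bullet. Concretely, a linear dependence among these representatives lets me perturb the $j$-th row of $\mat{Y}$ inside $\rightsupp{S}$ to obtain $\mat{Y}'(t)$ with $\mat{X}\transpose{\mat{Y}'(t)} = \mat{Z}$; the second bullet follows symmetrically by fixing $\mat{Y}$ and transposing, using that $P$-uniqueness of an EMD is invariant under transposition of every contribution.

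For condition (i) I would argue directly by mass transfer. If $\rankone{C}{i}$ and $\rankone{C}{i'}$ are nonzero with the same column span, then $\matcol{\mat{X}}{i}$ and $\matcol{\mat{X}}{i'}$ are collinear, hence share a common nonempty support; if moreover $\rankone{S}{i} \cap \rankone{S}{i'} \neq \emptyset$, the two rank-one supports must then also share a right coordinate $j_0 \in \supp(\matcol{\rightsupp{S}}{i}) \cap \supp(\matcol{\rightsupp{S}}{i'})$. Transferring an amount $t\,\basis{j_0}$ from $\matcol{\mat{Y}}{i'}$ to $\matcol{\mat{Y}}{i}$, rescaled by the collinearity factor, leaves $\rankone{C}{i} + \rankone{C}{i'}$ and the constraint $\pair{S}$ unchanged, producing a competitor $(\mat{X}, \mat{Y}'(t)) \in \Sigma_{\pair{S}}$ with the same product. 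The same-row-span case is obtained by transposition, transferring mass between the left columns. In each construction $\varphi(\mat{X}, \mat{Y}'(t))$ differs from $\tuplerkone{C}$ in only two positions and depends affinely and non-trivially on $t$.

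The main obstacle, common to all three constructions, is to verify that the competitor is genuinely \emph{not} permutation-equivalent to $\tuplerkone{C}$, which is delicate precisely because $\mat{X}$ (or $\mat{Y}$) may have collinear columns, so a permutation reshuffling indices within a collinearity class could a priori reabsorb the perturbation. I would handle this by a finiteness argument: for each of the finitely many $\sigma \in \mathfrak{S}(\integerSet{r})$, the requirement that $\varphi(\mat{X}, \mat{Y}'(t))$ coincide with the $\sigma$-reordering of $\tuplerkone{C}$ is a polynomial system in the single parameter $t$ that can hold identically only if the perturbation vanishes, which is impossible since the chosen dependence coefficient (or transfer amount) is nonzero and the relevant column of $\mat{X}$ is nonzero. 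Hence the set of $t$ for which equivalence holds is finite, and any small nonzero $t$ outside it yields the contradiction. The remaining bookkeeping, namely choosing the representative $(\mat{X}, \mat{Y})$ with identical column supports via \Cref{lemma:id-from-outerproduct} and tracking zero columns through the reductions of \Cref{lemma:eliminating-zero-col-right-id} and \Cref{prop:right-id-up-permutation}, is routine.
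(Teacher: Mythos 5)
Your proposal is correct, and its skeleton matches the paper's: reduce to P-uniqueness of the EMD via the lifting correspondence, then refute uniqueness contrapositively by exhibiting a competitor in $\Sigma_{\pair{S}}$ with the same product whose tuple of rank-one contributions is not a permutation of $\tuplerkone{C}$, exploiting the fixed-factor analysis of \cref{section:right-identifiability} for the left and right factors successively. For condition (ii) your pipeline is essentially the paper's (condition (i) of \Cref{thm:right-uniqueness-perm} fed into condition (i) of \Cref{prop:right-uniqueness-linear-inv-pb-matrix}), and your explicit row perturbation $\mat{Y}'(t)$ — picking, for each class $k$ with a nonzero dependence coefficient, an index $i_k \in [k]$ with $j \in \supp(\matcol{\rightsupp{S}}{i_k})$ and adding $t\lambda_k/\alpha_{i_k}$ to the $(j,i_k)$ entry — is exactly the dependence-based competitor underlying that proposition; it preserves the product, the support constraint, and the rank-one structure, as one checks directly. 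Where you genuinely diverge is condition (i): the paper obtains disjointness of $\rankone{S}{i}$ and $\rankone{S}{i'}$ within a collinearity class from condition (ii) of \Cref{thm:right-uniqueness-perm} combined with \Cref{prop:right-uniqueness-perm-vector-of-ones} (pairwise disjoint column supports of $\submat{\mat{Y}}{[k]}$, plus the constraints on the restrictions of $\rankone{S}{}$ to $I$ and $I^c$), whereas you build a one-parameter mass transfer $t\,\basis{j_0}$ between the two right columns (rescaled by the collinearity factor $\alpha$) and rule out permutation equivalence by finiteness over $\sigma \in \mathfrak{S}(\integerSet{r})$: each $\sigma$ imposes constraints affine in $t$ with a nonzero direction at a perturbed index, hence at most one bad $t$ per $\sigma$, and any $t \neq 0$ outside this finite set yields the contradiction. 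This device is sound and arguably more elementary and self-contained; what the paper's route buys is that the combinatorial matching work is done once, structurally, in the fixed-factor propositions, which are then reused (e.g., for the uniform results of \cref{section:uniform-identifiability}). One imprecision worth correcting: your claim that $\varphi(\mat{X},\mat{Y}') \sim \tuplerkone{C}$ ``becomes exactly'' P-uniqueness with $\mat{X}$ fixed fails when $\mat{X}$ has zero columns and $\rightsupp{S}$ permits nonzero $\mat{Y}'$-columns at those indices, since the tuple is then unchanged while the pairs are inequivalent; the equivalence only holds after restricting to $\colsupp(\mat{X})$ via \Cref{lemma:eliminating-zero-col-right-id} and patching the permutation within collinearity classes on the zero contributions. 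You flag exactly this bookkeeping, and since the stated bullets only involve the classes of \emph{nonzero} columns and your direct perturbations bypass the intermediate fixed-$\mat{X}$ uniqueness entirely, the gap is not fatal.
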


The proof is deferred to  \cref{proof:thm-NC-fixed-support-id}.

\begin{remark}
	As shown in  \cref{proof:thm-NC-fixed-support-id}, when $\tuplerkone{C} \in \unique(\Gamma_{\tuplerkone{S}})$, for any $(\mat{X}, \mat{Y}) \in \Sigma_{\pair{S}}$ such that $\tuplerkone{C} = \varphi(\mat{X}, \mat{Y})$, we have $\colsupp(\mat{X}) = \colsupp(\mat{Y}) = \colsupp(\leftsupp{S}) = \colsupp(\rightsupp{S})$, 
	so in condition (ii), the partitions $\{[1], \ldots, [K]\}$ and $\{[1], \ldots, [U]\}$ are independent of the choice of such a pair $(\mat{X}, \mat{Y})$. 
\end{remark}

\begin{remark}
	It is possible to design an algorithm to check if these necessary conditions are violated for a given $\tuplerkone{C} \in \Gamma_{\tuplerkone{S}}$. Such an algorithm needs to check linear independence between vectors, so its numerical implementation has to take into account conditioning issues. 
		In practice, to check whether a set of vectors is linearly independent, one can for instance compute the LU decomposition (when it exists) of the matrix whose columns are defined by the considered vectors. Such a decomposition requires $p q^2 - q^3 / 3$ flops for a matrix of size $p \times q$ with $p \geq q$ \cite{golub2013matrix}. 
		In total, for $(\mat{X}, \mat{Y})$ of size $m \times r$ and $n \times r$ with $r \leq \min(n, m)$, the algorithm would typically require $r(r-1) / 2 (n+m) + m (nr^2 - r^3 / 3) + n (m r^2 - r^3 / 3)$ flops in the worst case scenario.
\end{remark}

As illustrated on the following example, the necessary conditions for fixed-support identifiability in \Cref{thm:NC-fixed-support-id} are  also sufficient in some cases.
\begin{example}
	\label{example:NC-fixed-supp-id-is-sometime-SC}
	Consider $a, b, c, d, x, y \in \mathbb{C}$, such that $(a,b) \neq (0,0)$ and $(c,d) \neq (0,0)$. Define:
	\begin{displaymath}
		\rankone{S}{1} := \begin{pmatrix}
			1 & 1 & 0 \\ 1 & 1 & 0
		\end{pmatrix}, \quad \rankone{C}{1} := \begin{pmatrix}
			a & ax & 0 \\ b & by & 0
		\end{pmatrix}, \quad \rankone{S}{2} := \begin{pmatrix}
			0 & 1 & 1 \\ 0 & 1 & 1 
		\end{pmatrix}, \quad \rankone{C}{2} := \begin{pmatrix}
			0 & cy & c \\ 0 & dy & d
		\end{pmatrix}.
	\end{displaymath}
	As detailed in  \cref{proof:example-NC-fixed-supp-id-is-sometime-SC}, $(\rankone{C}{1}, \rankone{C}{2})$ is the PS-unique EMF of $\mat{Z} := \rankone{C}{1} + \rankone{C}{2}$ in $\Gamma_{\tuplerkone{S}}$ with $\tuplerkone{S} := (\rankone{S}{1}, \rankone{S}{2})$ if, and only if, conditions (i) and (ii) of \Cref{thm:NC-fixed-support-id} are verified, or equivalently, if the columns $\transpose{\begin{pmatrix}
		a & b
	\end{pmatrix}}$ and $\transpose{\begin{pmatrix}
		c & d
	\end{pmatrix}}$ are linearly independent.
\end{example}

For future work, one might take into account this kind of example in order to establish tighter conditions for fixed-support identifiability.

\subsection{Sufficient conditions for fixed-support identifiability}
\label{section:SC-fixed-support}

In complement to the previous necessary conditions, we now show some sufficient conditions for fixed-support identifiability.
This condition will be based on a notion of \emph{closability} in some bipartite graphs associated to ``observable entries'' in rank-one supports.

\subsubsection{Observable \emph{vs} missing entries, bipartite graphs, and matrix completion}
Given an $r$-tuple $\tuplerkone{S}$ of rank-one supports and $\tuplerkone{C} \in \Gamma_{\tuplerkone{S}}$, for any $i \in \integerSet{r}$, an entry is said to be \emph{missing} in $\rankone{C}{i}$ if its index $(k,l)$ belongs to the intersection $\rankone{S}{i} \cap \rankone{S}{j}$ for another $j \neq i$; otherwise, it is said to be \emph{observable}. The intuition behind these notions is that each observable entry is equal to the corresponding entry of the factorized matrix $\sum_{i=1}^r \rankone{C}{i}$, while missing entries need to be deduced, if possible, from observable ones using the low-rank constraints. 

Low-rank matrix completion problems without sparsity constraints  are naturally 
associated to properties of certain bipartite graphs \cite[Proposition 2.15]{kiraly2012combinatorial}. This fact was exploited to prove fixed-support identifiability of certain factorizations of the DFT \cite[Chapter 7]{le2016matrices} using completion operations
inside each of the rank-one contributions.
More precisely, it was established that when all the missing values inside each contribution can be completed without ambiguity from the observable ones using the rank-one constraint, the considered tuple of rank-one contributions is identifiable for the EMD with fixed support.
We extend these results by showing  a more refined sufficient condition for $\tuplerkone{C} \in \unique(\Gamma_{\tuplerkone{S}})$, which is basically: all the missing entries in the contributions $\{ \rankone{C}{i} \}_{i=1}^r$ can be recovered through \emph{iterative} and \emph{partial} completion operations based on the rank-one constraint, starting only from the knowledge of the observable entries. These completion operations include completion \emph{inside} each contribution, as well completion \emph{across} the contributions.

\paragraph{Notations for graphs}
A graph $\graph{G}$ is defined by a set of vertices $V$ and a set of edges $E \subseteq V^2$, and is denoted $\graph{G}(V, E)$. 
The set of vertices and the set of edges in a given graph $\graph{G}$ are denoted respectively $V(\graph{G})$ and $E(\graph{G})$.  
The complement of a graph $\graph{G}(V, E)$ is the graph $\graphcomplement{\graph{G}}= \graph{G}(V, V^2 \backslash E)$.
In a \emph{bipartite} graph $\graph{G}$, the set of vertices is partitioned into two sets of vertices $V_1$ and $V_2$, and its set of edges satisfies $E \subseteq V_1 \times V_2$. This will be denoted $\graph{G}(V_1, V_2, E)$. 
By convention, the elements in the first (resp. second) group of vertices $V_1$ (resp. $V_2$) will be called red (resp. blue) vertices.
Then, the set of red (resp. blue) vertices for a given bipartite graph $\graph{G}$ is denoted by $V_1(\graph{G})$ (resp. $V_2(\graph{G})$).
A bipartite graph $\graph{G}(V_1, V_2, E)$ with $V_1 \subseteq \integerSet{m}$, $V_2 \subseteq \integerSet{n}$ is a representation of a matrix support $\mat{S}$ of size $n \times m$, where the nonzero entries corresponds to the set $E$. In other words, $\mat{S}$ is the adjacency matrix of $\graph{G}$. 
The \emph{complete bipartite graph} $\graph{G}(V_1, V_2, V_1 \times V_2)$ is denoted $\completegraph{V_1, V_2}$. Given a bipartite graph $\graph{G}$, we denote  $\completegraph{\graph{G}}$ the corresponding \emph{completed bipartite graph} defined as $\completegraph{\graph{G}} := \completegraph{V_1(\graph{G}), V_2(\graph{G})}$. An $r$-tuple of graphs is denoted $\tuplegraph{G} := (\graphindex{G}{1}, \ldots, \graphindex{G}{r})$.
In the following, we consider only undirected graphs.

We define bipartite graphs whose adjacency matrices correspond to observable entries. 
\begin{definition}[Observable supports, observable bipartite graphs]
	\label{def:obs-bipartite-graph}
	Consider an $r$-tuple of rank-one supports $\tuplerkone{S} = (\rankone{S}{1}, \ldots, \rankone{S}{r})$.  
	Viewing each $\rankone{S}{i}$ as an index set, the \emph{observable supports} are the subsets:
	\begin{equation}
		\obssupport{\tuplerkone{S}}{i} := \rankone{S}{i} \backslash \Big( \bigcup_{j \neq i} \rankone{S}{j} \Big), \quad 1 \leq i \leq r.
	\end{equation}
	Viewing each $\rankone{S}{i}$ as a binary matrix of rank one,  the $i$-th \emph{observable bipartite graph} of $\tuplerkone{S}$ ($1 \leq i \leq r$) is the bipartite graph $\obsgraph{\tuplerkone{S}}{i} := (\rowsupp(\rankone{S}{i}), \colsupp(\rankone{S}{i}), \obssupport{\tuplerkone{S}}{i})$. The $r$-tuple of observable bipartite graphs $\obsgraph{\tuplerkone{S}}{i}$, $1 \leq i\leq r$, associated to $\tuplerkone{S}$ is denoted $\tupleobsgraph{\tuplerkone{S}}$.
\end{definition}

\begin{figure}[!htbp]
	\centering
	\includegraphics[width=0.8\textwidth]{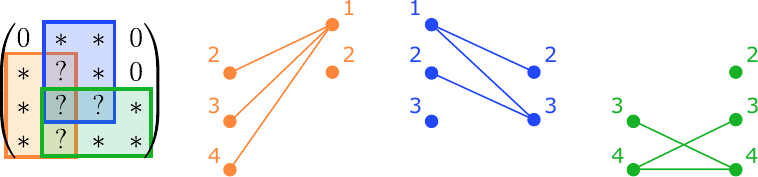}
	\caption{Illustration of the observable bipartite graphs of $\tuplerkone{S} := (\rankone{S}{1}, \rankone{S}{2}, \rankone{S}{3})$ where $\rankone{S}{1} = \{ 2, 3, 4\} \times \{ 1, 2\}$ (orange), $\rankone{S}{2} = \{ 1, 2, 3 \} \times \{ 2, 3\}$ (blue) and $\rankone{S}{3} = \{ 3, 4\} \times \{ 2, 3, 4\}$ (green). The indices marked with ``$*$" are those which belong to only one rank-one support (observable supports). The indices marked with ``$?$" are those which belong to at least two different rank-one supports.}
	\label{fig:observable_graph}
\end{figure}

This is illustrated on an example in \Cref{fig:observable_graph}. The role of bipartite graphs in characterizing fixed-support identifiability in EMD will become apparent once we recall an existing result in low-rank matrix completion.
For any matrix support $\mat{S} \in \mathbb{B}^{m \times n}$ interpreted as a binary mask, and any observed submatrix $(\matindex{\mat{Z}}{k}{l})_{(k, l) \in \mat{S}}$, define the \emph{rank-one matrix completion} problem as:
\begin{equation}
	\label{eq:completion-pb}
	\text{find, if possible, } \mat{M} \text{ of rank at most one such that } (\mat{M}_{ij})_{(i,j) \in \mat{S}} = (\matindex{\mat{Z}}{k}{l})_{(k, l) \in \mat{S}}.
\end{equation}
The next proposition from \cite{kiraly2012combinatorial}, illustrated by \Cref{fig:rankone-completion}, characterizes uniqueness in rank-one matrix completion.

\begin{figure}[!htbp]
	\centering
	\begin{subfigure}[b]{0.45\textwidth}
		\centering
		\includegraphics[width=0.9\textwidth]{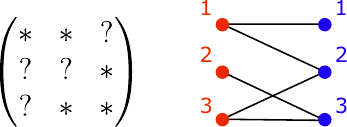}
		\caption{Connected bipartite graph.}
	\end{subfigure}
	\hfill
	\begin{subfigure}[b]{0.45\textwidth}
		\centering
		\includegraphics[width=0.9\textwidth]{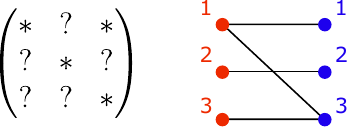}
		\caption{Not connected bipartite graph.}
	\end{subfigure}
	\caption{Illustration of \Cref{prop:connected-bipartite-graph}: on the left, it is possible to complete the missing values denoted by ``$?$" from the nonzero observed values ``$*$" in the mask, because the bipartite graph associated to this mask is connected. On the right, this condition is not verified.}
	\label{fig:rankone-completion}
\end{figure}

\begin{proposition}[Adapted from {\cite[Proposition 2.15]{kiraly2012combinatorial}}]
	\label{prop:connected-bipartite-graph}
	A rank-one matrix $\mat{A}$ of size $m \times n$ which has no zero entry is the unique solution of the completion problem \eqref{eq:completion-pb} for the observed submatrix $(\matindex{\mat{A}}{k}{l})_{(k, l) \in \mat{S}}$ on the mask $\mat{S}$ if, and only if, the bipartite graph $\graph{G}(\integerSet{m}, \integerSet{n}, \mat{S})$ is connected. 
\end{proposition}

\subsubsection{Completion operations for tuples of bipartite graphs}
The sufficient conditions for fixed-support identifiability that we will express rely on two completion operations  in the observable bipartite graphs $\tupleobsgraph{\tuplerkone{S}}$ that we now define. These operations are illustrated in \Cref{fig:completion-operations} for the tuple of graphs introduced in  \Cref{fig:observable_graph}.

The first completion operation is directly inspired from \Cref{prop:connected-bipartite-graph} and \cite[Chapter 7]{le2016matrices}. It corresponds to the completion of all missing edges in all connected subgraphs in each bipartite graph. In terms of completion of missing values in rank-one matrices, it corresponds to solving successively trivial linear systems with one variable. For instance, knowing that $\begin{pmatrix}
	a & x
\end{pmatrix}^\intercal$ and $\begin{pmatrix}
	b & c
\end{pmatrix}^\intercal$ are collinear, $x$ can be deduced from the values of $a$, $b$, and $c$, as long as $b$ is nonzero. 

The second completion operation involves all the bipartite graphs in the tuple.
In terms of completion of missing values in rank-one contributions, it corresponds to the fact that, for a given index pair $(k,l)$, when all the entries $\matindex{\rankone{C}{j}}{k}{l}$ ($j \in \integerSet{r}$) are known except for $j = i$, the entry $\matindex{\rankone{C}{i}}{k}{l}$ is easily deduced from the knowledge of $\matindex{\mat{Z}}{k}{l} := \sum_{j=1}^r \matindex{\rankone{C}{j}}{k}{l}$.

\begin{figure}[!htbp]
	\centering
	\begin{subfigure}[b]{\textwidth}
		\centering
		\includegraphics[width=0.7\textwidth]{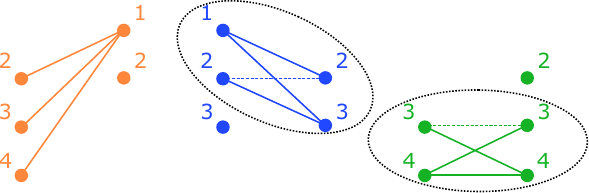}
		\caption{Completion \emph{inside} each graph. We look for all the subgraphs isomorphic to $\completegraph{2, 2}^-$ (circled by dashed lines), and we complete the edge that is missing in each of these subgraphs.}
	\end{subfigure}
	\vfill
	\begin{subfigure}[b]{\textwidth}
		\centering
		\includegraphics[width=0.7\textwidth]{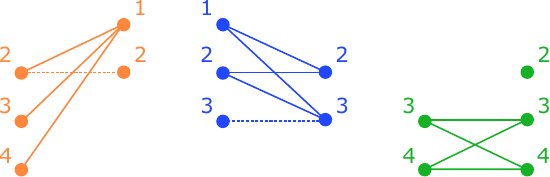}
		\caption{Completion \emph{across} graphs. The edge $(2, 2)$ belongs to the complete bipartite graphs $\completegraph{\graphindex{G}{1}}$ and $\completegraph{\graphindex{G}{2}}$, and it is missing in $\graphindex{G}{1}$ but not in $\graphindex{G}{2}$, so this edge is added to $\graphindex{G}{1}$ by the completion operation across graphs. Similarly, the edge $(3, 3)$ is added to $\graphindex{G}{2}$.}
	\end{subfigure}
	\caption{Illustration of completion operations defined by \Cref{def:completion-operations}. The tuple of bipartite graphs before completion $(\graphindex{G}{1}, \graphindex{G}{2}, \graphindex{G}{3})$, already introduced in  \Cref{fig:observable_graph},  is represented by full lines. Dashed lines represent edges that are added after the completion operation.}
	\label{fig:completion-operations}
\end{figure}

\begin{definition}[Completion operations]
	\label{def:completion-operations}
	Let $\tuplegraph{G} := (\graphindex{G}{i})_{i=1}^r$ be an $r$-tuple of bipartite graphs.
	Denote $\completegraph{2,2}^{-}$ the complete bipartite graph $\completegraph{\integerSet{2}, \integerSet{2}}$ minus one edge.
	\begin{itemize}
		\item {\bf Completion \emph{inside} each graph:} For each $i \in \integerSet{r}$, define $\graphindex{G'}{i}$ as the bipartite graph which is obtained from $\graphindex{G}{i}$ by completing all the missing edges in each subgraph of $\graphindex{G}{i}$ that is isomorphic to $\completegraph{2,2}^{-}$, and denote $a(\tuplegraph{G}) := (\graphindex{G'}{i})_{i=1}^r$.
		
		\item {\bf Completion \emph{across} graphs:} 
		If a given edge $e$ is missing only in the graph $\graphindex{G}{i}$ but not in the other graphs $\graphindex{G}{j}$ for $j \neq i$, we complete this missing edge in $\graphindex{G}{i}$. Formally, for each $i \in \integerSet{r}$, define $\graphindex{G''}{i}$ as the bipartite graph which is obtained from $\graphindex{G}{i}$ by adding all the edges in the set $
		\left \{ e \in E(\completegraph{\graphindex{G}{i}}) \; | \; e \notin  E(\graphindex{G}{i}) \; \text{and} \; e \in \bigcap_{\substack{j \neq i \\ e \in E(\completegraph{\graphindex{G}{j}})}} E(\graphindex{G}{j}) \right \}.$
	\end{itemize}
\end{definition}

\begin{remark}
	In the spirit of \Cref{prop:connected-bipartite-graph}, the completion operation inside each graph can be defined equivalently in the following way: ``For each $i \in \integerSet{r}$, define $\graphindex{G'}{i}$ as the bipartite graph which is obtained from $\graphindex{G}{i}$ by completing all the missing edges in each connected subgraph of $\graphindex{G}{i}$."
\end{remark}

The so-called \emph{closure} $\closure{\tuplegraph{G}}$ is then obtained by completing iteratively the missing edges in $\tuplegraph{G}$ through the completion operations $a$ and $b$ of \Cref{def:completion-operations}, until no more edges can be added. This process indeed reaches a fixed point after a finite number of steps.

\begin{lemma}\label{le:fixedpointcompl}
	Let $\tuplegraph{G}$ be an $r$-tuple of bipartite graphs. There is a positive integer $N$ such that $\iterate{(b \circ a)}{N+1}(\tuplegraph{G}) = \iterate{(b \circ a)}{N}(\tuplegraph{G})$, where $a$ and $b$ are the completion operations introduced in \Cref{def:completion-operations}.
\end{lemma}
The proof exploits a partial order between bipartite graphs sharing the same red and blue vertices, that will also be used in the proof of \Cref{thm:SC-fixed-support-id} below. 
\begin{definition}
	For any bipartite graphs $\graph{G}, \graph{H}$ for which $V_1(\graph{G}) = V_1(\graph{H})$, $V_2(\graph{G}) = V_2(\graph{H})$, we write $\graph{G} \preceq \graph{H}$ if $E(\graph{G}) \subseteq E(\graph{H})$.
	For any $r$-tuples of bipartite graphs $\tuplegraph{G}$, $\tuplegraph{H}$, we write:
	\begin{equation}
		\tuplegraph{G} \preceq \tuplegraph{H} \iff \forall i \in \integerSet{r}, \; \graphindex{G}{i} \preceq \graphindex{H}{i}. 
	\end{equation}
	One verifies that this partial order is indeed reflexive, anti-symmetric and transitive.
\end{definition}

\begin{proof}[Proof of~\Cref{le:fixedpointcompl}]
	First, we remark that $\tuplegraph{G} \preceq a(\tuplegraph{G}) \preceq (b \circ a)(\tuplegraph{G})$. Secondly, for any bipartite graph $\graph{H}$ with red vertices $V_1$ and blue vertices $V_2$, $\graph{H} \preceq \completegraph{V_1, V_2}$. But the number of edges in the complete bipartite graph $\completegraph{V_1, V_2}$ is finite. In conclusion, we obtain the claimed result by applying the monotone convergence theorem.
\end{proof}

\begin{definition}[Closure of a tuple of bipartite graphs]
	The \emph{closure} of an $r$-tuple of bipartite graphs $\tuplegraph{G}$ is the $r$-tuple of bipartite graphs $\closure{\tuplegraph{G}} := \iterate{(b \circ a)}{N(\tuplegraph{G})}(\tuplegraph{G})$ where
	\begin{equation}
		N(\tuplegraph{G}) := \min \; \{ N \in \mathbb{N} \; | \; \iterate{(b \circ a)}{N+1}(\tuplegraph{G}) = \iterate{(b \circ a)}{N}(\tuplegraph{G}) \}.
	\end{equation}
	
\end{definition}

\subsubsection{Sufficient condition for identifiability with fixed support}
We can now formulate our sufficient condition for identifiability with fixed support of $\tuplerkone{C} \in \unique(\Gamma_{\tuplerkone{S}})$.

\begin{definition}[Closable tuple of rank-one supports]
	\label{def:closable}
	We say that the $r$-tuple of rank-one supports $\tuplerkone{S}$ is \emph{closable} if the closure of its observable bipartite graphs, $\closure{\tupleobsgraph{\tuplerkone{S}}}$, is the $r$-tuple of complete bipartite graphs $(\completegraph{\obsgraph{\tuplerkone{S}}{i}})_{i=1}^r$.
\end{definition}

\begin{theorem}
	\label{thm:SC-fixed-support-id}
	Suppose that $\tuplerkone{S}$ is a closable $r$-tuple of rank-one supports. 
	Denote the set of $r$-tuples of rank-one matrices 
	with a support \emph{exactly} equal to $\tuplerkone{S}$ as:
	\begin{equation}
		\exact{\tuplerkone{S}} := \left \{ \tuplerkone{C} \in \Gamma_{\tuplerkone{S}} \; | \; \forall i \in \integerSet{r}, \; \supp(\rankone{C}{i}) = \rankone{S}{i} \right \}.
	\end{equation}
	Any $\tuplerkone{C} \in \exact{\tuplerkone{S}}$ is the P-unique EMD of $\mat{Z} := \mathcal{A}(\tuplerkone{C})$ in $\exact{\tuplerkone{S}}$. In other words, $\exact{\tuplerkone{S}} \subseteq \unique(\Gamma_{\tuplerkone{S}})$.
\end{theorem}

The proof is deferred to \cref{app:SC-fixed-support-id}.

It can be shown \cite[Chapter 7, Corollary 1]{le2016matrices} that any $r$-tuple of rank-one supports $\tuplerkone{S}$ such that $\rankone{S}{i} \cap \rankone{S}{i'} = \emptyset$ for all $i \neq i'$ is closable. As an immediate corollary we obtain P-uniqueness of the corresponding rank-one factors.
\begin{corollary}
	\label{cor:disjoint-support-EMD-fixed-support}
	Suppose that $\tuplerkone{S}$ is such that $\rankone{S}{i} \cap \rankone{S}{i'} = \emptyset$ for all $i \neq i'$. Then $\exact{\tuplerkone{S}} \subseteq \unique(\Gamma_{\tuplerkone{S}})$.
\end{corollary}

However, a closable $r$-tuple of rank-one supports $\tuplerkone{S}$ does not have necessarily pairwise disjoint rank-one supports. 
Consider for instance the tuple $\tuplerkone{S}$ of \Cref{fig:observable_graph}.
Even though its rank-one supports are not pairwise disjoint,  $\tuplerkone{S}$  will also be shown to be closable (see discussion around \Cref{fig:example-completion}). The following example illustrates an application of \Cref{thm:SC-fixed-support-id} for a given $\tuplerkone{C} \in \Gamma_{\tuplerkone{S}}$.

\begin{example}
	\label{ex:completion}
	Denote $\tuplerkone{S} := (\rankone{S}{1}, \rankone{S}{2}, \rankone{S}{3})$ as the tuple of supports of size $4 \times 4$ defined in \Cref{fig:observable_graph}, where $\rankone{S}{1} = \{ 2, 3, 4\} \times \{ 1, 2\}$, $\rankone{S}{2} = \{ 1, 2, 3 \} \times \{ 2, 3\}$, and $\rankone{S}{3} = \{ 3, 4\} \times \{ 2, 3, 4\}$. Define also:
	\begin{equation*}
		\mat{Z} = \begin{pmatrix}
			0 & 1 & 2 & 0 \\
			1 & 2 & 2 & 0 \\
			2 & 6 & 5 & 6 \\
			3 & 5 & 2 & 4
		\end{pmatrix}.
	\end{equation*}
	Then, by \Cref{thm:SC-fixed-support-id}, the matrix $\mat{Z}$ admits a P-unique EMD in $\Gamma_{\tuplerkone{S}}$, because any $\tuplerkone{C} \in \Gamma_{\tuplerkone{S}}$ such that $\mathcal{A}(\tuplerkone{C}) = \mat{Z}$ belongs to $\Gamma_{\tuplerkone{S}}^{(=)}$, and $\tuplerkone{S}$ is closable. 
	
\end{example}

\subsubsection{Algorithm based on rank-one matrix completion}

The closability condition of the previous section suggests an algorithm based on rank-one matrix completion to decompose, if possible, an observed matrix $\mat{Z}$ into a sum of rank-one matrices $\rankone{C}{i}$ ($1 \leq i \leq r$) satisfying the rank-one sparsity constraints associated to $\tuplerkone{S}$. 
By design, \Cref{algo:completion} greedily completes missing values only if the completion is non ambiguous. One the one hand, in the case when there exist some ambiguities during the completion, the algorithm returns a tuple of rank-one contributions with remaining missing values, and nothing can be said about the uniqueness of the exact matrix decomposition. 
On the other hand, the absence of missing values of the output indicates that the input $\mat{Z}$ admits a P-unique EMD with fixed supports constraint $\tuplerkone{S}$.

\begin{proposition}
	\label{prop:completion-algo}
	Let us run \Cref{algo:completion} with a given $r$-tuple of rank-one supports $\tuplerkone{S}$ and a given observed matrix $\mat{Z}$ as inputs. The following assertions hold:
	\begin{enumerate}[label=(\roman*)]
		\item If the algorithm breaks the loop because of incompatibility at \Cref{line:break-loop}, then $\mat{Z} \notin \mathcal{A}(\Gamma_{\tuplerkone{S}})$.
		\item If the algorithm does not break the loop because of incompatibility, and 
		outputs $\tuplerkone{C}$ without missing values $``?"$, then $\mat{Z} = \mathcal{A}(\tuplerkone{C})$. Moreover, $\tuplerkone{C}$ is the unique solution to $\mathcal{A}(\tuplerkone{C'}) = \mat{Z}$ such that $\tuplerkone{C'} \in \Gamma_{\tuplerkone{S}}$. A fortiori, $\tuplerkone{C} \in \unique(\Gamma_{\tuplerkone{S}})$.
	\end{enumerate}	
	
\end{proposition}

\begin{algorithm}[!t]
	\caption{Fixed-support exact matrix decomposition algorithm}
	\label{algo:completion}
	\begin{algorithmic}[1]
		\setcounter{ALC@unique}{0}
		\REQUIRE Observed matrix $\mat{Z} \in \mathbb{C}^{m \times n}$, $r$-tuple of rank-one matrices $\tuplerkone{S} = (\rankone{S}{i})_{i=1}^r$ 
		\ENSURE{Rank-one contributions $\tuplerkone{C}$, with possible missing values $``?"$}
		\STATE For $1 \leq i \leq r$, set $\matindex{\rankone{C}{i}}{k}{l} = \begin{cases}
			\mat{Z}_{k, l} & \text{if } (k, l) \in \obssupport{\tuplerkone{S}}{i} \\
			0 & \text{if } (k, l) \notin \rankone{S}{i} \\
			``?" & \text{otherwise}
		\end{cases}$
		\WHILE{completion of $``?"$ in $\tuplerkone{C}$ is still possible}
		\FOR[Completion \emph{inside} each bipartite graph]{$i=1$ \TO $r$}
		\FOR{each submatrix $\bigl[ \begin{smallmatrix} a_1 & a_2 \\ a_4 & a_3 \end{smallmatrix} \bigr]$ in $\rankone{C}{i}$}
		\IF{there exist three non missing entries and one missing entry $a_i = ``?"$}
		\STATE $i' \leftarrow (i + 2) \mod 4$
		\IF{$a_{i'} \neq 0$}
		\STATE $a_i \leftarrow \frac{a_k \times a_j}{a_{i'}}$ with $\{k, j\}$ the remaining indices in  $\integerSet{4} \backslash \{i, i'\}$
		\ENDIF
		\ENDIF
		\ENDFOR
		\ENDFOR
		\FOR[Checking compatibility of the completion]{$(k,l) \in \integerSet{m} \times \integerSet{n}$}
		\IF{$\matindex{\rankone{C}{i}}{k}{l} \neq ``?"$ for each $i \in \integerSet{r}$ and $\matindex{\mat{Z}}{k}{l} \neq \sum_{i} \matindex{\rankone{C}{i}}{k}{l}$}
		\RETURN\label{line:break-loop}{break the ``while" loop because of incompatibility}
		\ENDIF
		\ENDFOR
		
		\FOR[Completion \emph{accross} bipartite graphs]{$(k,l) \in \integerSet{m} \times \integerSet{n}$}
		\IF{there exists $i$ such that $\matindex{\rankone{C}{i}}{k}{l} = ``?"$ and $\matindex{\rankone{C}{j}}{k}{l} \neq ``?"$ for all $j \neq i$}
		\STATE $\matindex{\rankone{C}{i}}{k}{l} \leftarrow \matindex{\mat{Z}}{k}{l} - \sum_{j \neq i} \matindex{\rankone{C}{j}}{k}{l}$
		\ENDIF
		\ENDFOR			
		\ENDWHILE
		\RETURN{Rank-one contributions $\tuplerkone{C}$}
	\end{algorithmic}
\end{algorithm}

The proof is deferred to \Cref{proof:completion-algo}.
In other words, when \Cref{algo:completion}, run with $\mat{Z}$ and $\tuplerkone{S}$ as inputs, returns an output $\tuplerkone{C}$ without remaining missing values, $\tuplerkone{C}$ is guaranteed to be the P-unique EMD of $\mat{Z}$ in $\Gamma_{\tuplerkone{S}}$.
Moreover in this case  the PS-unique EMF of $\mat{Z}$ in $\pair{S}$ can be constructed from $\tuplerkone{C}$, where $\pair{S}$ is the unique pair of supports such that $\varphi(\pair{S}) = \tuplerkone{S}$. Indeed, one easily constructs $(\mat{X}, \mat{Y}) \in \idcolsupp{\pair{S}}$ such that $\matcol{\mat{X}}{i} \transpose{\matcol{\mat{Y}}{i}} = \rankone{C}{i}$.
We illustrate an application of this algorithm to the supports of \Cref{fig:observable_graph} / \Cref{ex:completion} in \Cref{fig:example-completion}.

\begin{figure}[!htbp]
	\centering
	\begin{subfigure}[b]{0.8\textwidth}
		\centering
		\includegraphics[width=\textwidth]{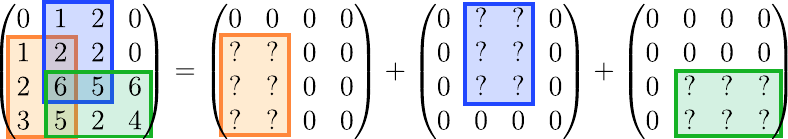}
		\caption{Initialization.}
	\end{subfigure}
	\vfill
	\begin{subfigure}[b]{0.8\textwidth}
		\centering
		\includegraphics[width=\textwidth]{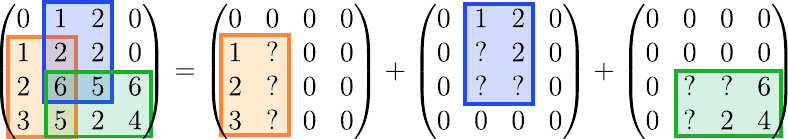}
		\caption{Filling values in observable supports.}
	\end{subfigure}
	\vfill
	\begin{subfigure}[b]{0.8\textwidth}
		\centering
		\includegraphics[width=\textwidth]{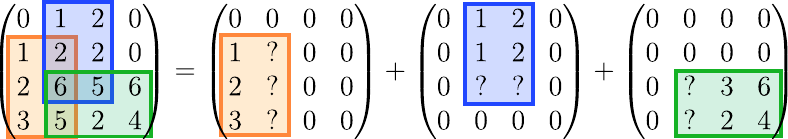}
		\caption{Completion operation inside each bipartite graph.}
	\end{subfigure}
	\vfill
	\begin{subfigure}[b]{0.8\textwidth}
		\centering
		\includegraphics[width=\textwidth]{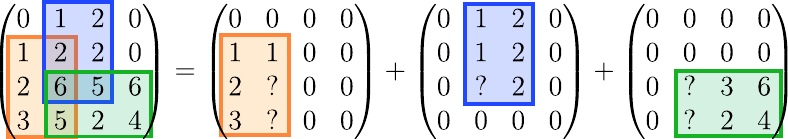}
		\caption{Completion operation across bipartite graphs.}
	\end{subfigure}
	\vfill
	\begin{subfigure}[b]{0.8\textwidth}
		\centering
		\includegraphics[width=\textwidth]{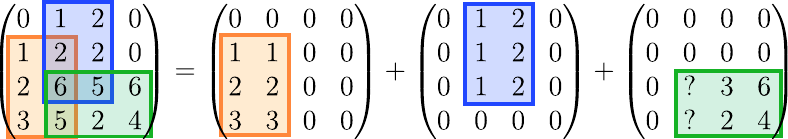}
		\caption{Completion operation inside each bipartite graph.}
	\end{subfigure}
	\vfill
	\begin{subfigure}[b]{0.8\textwidth}
		\centering
		\includegraphics[width=\textwidth]{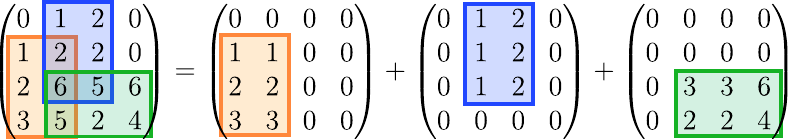}
		\caption{Completion operation across bipartite graphs and output.}
	\end{subfigure}
	\caption{Application of \Cref{algo:completion} on the supports of \Cref{fig:observable_graph} / \Cref{ex:completion}. We represent each contribution $\rankone{C}{i}$ at several steps of the completion algorithm. The output does not have remaining values, so by \Cref{prop:completion-algo}, the obtained tuple of rank-one contributions is identifiable in the EMD with fixed support.}
	\label{fig:example-completion}
\end{figure}

From \Cref{thm:SC-fixed-support-id}, we can deduce that one way to guarantee that \Cref{algo:completion} returns an output without remaining missing values is to enforce closability on the input tuple of rank-one supports $\tuplerkone{S}$. The proof is in essence a corollary of \Cref{thm:SC-fixed-support-id}.

\begin{lemma}
	Suppose that $\tuplerkone{S}$ is closable. Then, for any $\mat{Z} \in \mathcal{A}(\Gamma_{\tuplerkone{S}}^{(=)})$, the output of \Cref{algo:completion} with inputs $\tuplerkone{S}$ and $\mat{Z}$ is an $r$-tuple of rank-one contributions without remaining missing values.
\end{lemma}

\section{Uniform identifiability results}
\label{section:uniform-identifiability}
Uniform identifiability is a stronger property that requires identifiability of all instances $(\mat{X}, \mat{Y})$ that satisfy the sparsity constraints, from the observation $\mat{Z} := \mat{X} \transpose{\mat{Y}}$. They can be characterized by simple conditions, based on our previous analysis on instance identifiability in \cref{section:right-identifiability,section:EMD-identifiability}.

\subsection{Uniform P-uniqueness in EMD}

We start by stating a uniform identifiability result for fixed-support exact matrix decomposition. The proof is deferred to  \cref{proof:lemma-disjoint-support-characterizing-uniform-EMD}.

\begin{lemma}
	\label{lemma:disjoint-support-characterizing-uniform-EMD}
	Let $\tuplerkone{S}$ be an $r$-tuple of rank-one supports. Then, the following assertions are equivalent:
	\begin{enumerate}[label=(\roman*)]
		\item Uniform P-uniqueness of EMD in $\Gamma_{\tuplerkone{S}}$ holds, i.e., $\unique(\Gamma_{\tuplerkone{S}}) = \Gamma_{\tuplerkone{S}}$.
		\item The linear application $\mathcal{A}: \tuplerkone{C} \mapsto \sum_{i=1}^r \rankone{C}{i}$ restriced to $\Gamma_{\tuplerkone{S}}$ is injective.
		\item The rank-one supports $\{ \rankone{S}{i} \}_{i=1}^r$ are pairwise disjoint.
	\end{enumerate} 
\end{lemma}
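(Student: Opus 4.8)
The plan is to prove the three equivalences by establishing a cycle of implications, say (iii) $\Rightarrow$ (ii) $\Rightarrow$ (i) $\Rightarrow$ (iii), since each arrow is shorter than a direct biconditional. The structural heart of the matter is that $\mathcal{A}$ is a fixed \emph{linear} map, and $\Gamma_{\tuplerkone{S}}$ is a union over sign/rank constraints of rank-one matrices constrained to lie in supports $\rankone{S}{i}$; so the statement is really about when summing constrained rank-one pieces can never collide. Pairwise disjointness of the supports is the combinatorial condition that forbids any nontrivial collision.

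\begin{proof}
	We prove (iii) $\Rightarrow$ (ii) $\Rightarrow$ (i) $\Rightarrow$ (iii).

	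\textbf{(iii) $\Rightarrow$ (ii).} Assume the supports $\{ \rankone{S}{i} \}_{i=1}^r$ are pairwise disjoint. Let $\tuplerkone{C}, \tuplerkone{C'} \in \Gamma_{\tuplerkone{S}}$ satisfy $\mathcal{A}(\tuplerkone{C}) = \mathcal{A}(\tuplerkone{C'})$, i.e. $\sum_{i=1}^r \rankone{C}{i} = \sum_{i=1}^r \rankone{C'}{i}$. For each $i$, both $\rankone{C}{i}$ and $\rankone{C'}{i}$ are supported in $\rankone{S}{i}$. Since the $\rankone{S}{i}$ are pairwise disjoint, reading off the entries indexed by $\rankone{S}{i}$ in the equality $\sum_i \rankone{C}{i} = \sum_i \rankone{C'}{i}$ isolates $\rankone{C}{i} = \rankone{C'}{i}$ for each $i$. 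Hence $\tuplerkone{C} = \tuplerkone{C'}$, so $\mathcal{A}$ restricted to $\Gamma_{\tuplerkone{S}}$ is injective.

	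\textbf{(ii) $\Rightarrow$ (i).} Suppose $\mathcal{A}|_{\Gamma_{\tuplerkone{S}}}$ is injective. Let $\tuplerkone{C} \in \Gamma_{\tuplerkone{S}}$ and $\mat{Z} := \mathcal{A}(\tuplerkone{C})$. For any $\tuplerkone{C'} \in \Gamma_{\tuplerkone{S}}$ with $\mathcal{A}(\tuplerkone{C'}) = \mat{Z} = \mathcal{A}(\tuplerkone{C})$, injectivity forces $\tuplerkone{C'} = \tuplerkone{C}$, and in particular $\tuplerkone{C'} \sim \tuplerkone{C}$. Thus $\tuplerkone{C} \in \unique(\Gamma_{\tuplerkone{S}})$. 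Since $\tuplerkone{C}$ was arbitrary, $\unique(\Gamma_{\tuplerkone{S}}) = \Gamma_{\tuplerkone{S}}$.

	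\textbf{(i) $\Rightarrow$ (iii).} We prove the contrapositive: if two supports overlap, uniform P-uniqueness fails. Suppose there exist $i \neq i'$ and an index $(k,l) \in \rankone{S}{i} \cap \rankone{S}{i'}$. Build a counterexample $\tuplerkone{C} \in \Gamma_{\tuplerkone{S}}$ whose only nonzero entries are a single nonzero value at position $(k,l)$ carried by the $i$-th contribution (all other contributions zero); a single nonzero rank-one matrix supported at one entry is valid, so $\tuplerkone{C} \in \Gamma_{\tuplerkone{S}}$ and $\mat{Z} := \mathcal{A}(\tuplerkone{C})$ has exactly one nonzero entry at $(k,l)$. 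Now form $\tuplerkone{C'}$ by instead placing that same nonzero value at position $(k,l)$ in the $i'$-th contribution and leaving all others zero; since $(k,l) \in \rankone{S}{i'}$, also $\tuplerkone{C'} \in \Gamma_{\tuplerkone{S}}$ and $\mathcal{A}(\tuplerkone{C'}) = \mat{Z}$. The two tuples $\tuplerkone{C}$ and $\tuplerkone{C'}$ differ in which slot ($i$ versus $i'$) holds the nonzero matrix, while all other slots are the zero matrix; because the single nonzero matrix is identical in both, they are \emph{not} equal up to a permutation only when the remaining structure distinguishes them. To ensure $\tuplerkone{C} \not\sim \tuplerkone{C'}$, use two distinct nonzero values, or more simply populate a second, non-colliding entry in one tuple so that the multisets of contributions genuinely differ. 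Then $\mathcal{A}(\tuplerkone{C}) = \mathcal{A}(\tuplerkone{C'})$ with $\tuplerkone{C} \not\sim \tuplerkone{C'}$, so $\tuplerkone{C} \notin \unique(\Gamma_{\tuplerkone{S}})$, contradicting (i).
\end{proof}

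\textbf{The main obstacle} I anticipate is the careful handling of the permutation equivalence $\sim$ in the final implication (i) $\Rightarrow$ (iii): simply moving a nonzero matrix from slot $i$ to slot $i'$ may yield tuples that \emph{are} permutation-equivalent (they contain the same multiset of rank-one matrices), which would not violate P-uniqueness. The fix is to construct the colliding configuration so that the two tuples differ as multisets --- for instance by using two contributions of different magnitudes at the shared position, or by exploiting a second entry present in one support but not the other. Pinning down a clean, fully general such construction (covering degenerate cases where supports are singletons, where entries force collinearity, etc.) is the delicate part; everything else reduces to reading off entries from a linear identity.
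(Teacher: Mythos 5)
Your cycle of implications is correct and is essentially the same route as the paper's proof: (iii) $\Rightarrow$ (ii) by reading off entries on the pairwise disjoint supports, (ii) $\Rightarrow$ (i) immediately, and (i) $\Rightarrow$ (iii) by contraposition via a colliding construction. The one point that needs tightening is the end of (i) $\Rightarrow$ (iii), which you leave at the level of ``use two distinct nonzero values, or populate a second, non-colliding entry''. Your first fix is sound and can be made fully general in one line: given $(k,l) \in \rankone{S}{i} \cap \rankone{S}{i'}$ with $i \neq i'$, let $\tuplerkone{C}$ have, in slot $i$, the matrix whose only nonzero entry is $1$ at position $(k,l)$, with zero matrices in all other slots, and let $\tuplerkone{C'}$ have, in each of the slots $i$ and $i'$, the matrix whose only nonzero entry is $1/2$ at $(k,l)$, with zeros elsewhere. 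A matrix with a single nonzero entry has rank one, so both tuples belong to $\Gamma_{\tuplerkone{S}}$; they have the same image under $\mathcal{A}$; and they cannot be permutation-equivalent, since $\tuplerkone{C}$ contains exactly one nonzero matrix while $\tuplerkone{C'}$ contains exactly two. Your second suggested fix, by contrast, is not viable in full generality: populating an extra entry in only one tuple changes the sum under $\mathcal{A}$, and redistributing an extra entry between the tuples requires $\rankone{S}{i}$ to contain a second admissible position (in the same row or column, to preserve rank one), which fails for instance when $\rankone{S}{i} = \rankone{S}{i'} = \{(k,l)\}$. With the split-value construction substituted for the hedge, your proof is complete and follows the paper's approach.
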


Consequently, when a pair of supports $\pair{S} := (\leftsupp{S}, \rightsupp{S})$ is such that $\varphi(\pair{S})$ has disjoint rank-one supports, almost every pair of factors $(\mat{X}, \mat{Y})$ is identifiable for the EMF of $\mat{Z} = \mat{X} \transpose{\mat{Y}}$ in $\Sigma_{\pair{S}}$. In fact, a stronger identifiability property than PS-uniqueness is verified in this case, which is identifiability up to scaling ambiguities \emph{only}. 

\begin{definition}[S-uniqueness of an EMF in $\Sigma$]
	For any set $\Sigma$ of pairs of factors, the pair $(\mat{X}, \mat{Y}) \in \Sigma$ is the \emph{S-unique} EMF of $\mat{Z} := \mat{X} \transpose{\mat{Y}}$ in $\Sigma$, if any solution $(\mat{X'}, \mat{Y'})$ to \eqref{eq:EMF} with $\mat{Z}$ and $\Sigma$ is equivalent to $(\mat{X}, \mat{Y})$ up to scaling ambiguities only, written $(\mat{X'}, \mat{Y'}) \sim_s (\mat{X}, \mat{Y})$.
\end{definition}

The set of all pairs $(\mat{X}, \mat{Y}) \in \Sigma$ such that $(\mat{X}, \mat{Y})$  is the S-unique EMF of $ \mat{Z} := \mat{X} \transpose{\mat{Y}} \text{ in } \Sigma$ is denoted $\uniquescaling(\Sigma)$.

\begin{proposition}
	\label{prop:disjoint-rank-one-supports-unique-EMF}
	Suppose that $\pair{S} := (\leftsupp{S}, \rightsupp{S})$ is such that $\varphi(\pair{S})$ has disjoint rank-one supports. Then:
	\begin{displaymath}
		\uniquescaling(\Sigma_{\pair{S}}) = \idcolsupp{\pair{S}} \cap \maxcolsupp{\pair{S}},
	\end{displaymath}
	where we recall that $\idcolsupp{\pair{S}}$ and $\maxcolsupp{\pair{S}}$ has been defined by \eqref{eq:idcolsupp-fixed} and \eqref{eq:maxcolsupp-fixed}.
\end{proposition}

\begin{proof}
	Since $\uniquescaling(\Sigma_{\pair{S}}) \subseteq \unique(\Sigma_{\pair{S}})$, by \Cref{cor:equivalence-EMD-fixed-support}, $\uniquescaling(\Sigma_{\pair{S}}) \subseteq \idcolsupp{\pair{S}} \cap \maxcolsupp{\pair{S}}$. Conversely, consider $(\mat{X}, \mat{Y}) \in \idcolsupp{\pair{S}} \cap \maxcolsupp{\pair{S}}$, and let us show that $(\mat{X}, \mat{Y}) \in \unique(\Sigma_{\pair{S}})$. Let $(\mat{X'}, \mat{Y'}) \in \Sigma_{\pair{S}}$ such that $\mat{X} \transpose{\mat{Y}} = \mat{X'} \transpose{\mat{Y'}}$. Then, $\mathcal{A}(\varphi(\mat{X}, \mat{Y})) = \mathcal{A}(\varphi(\mat{X'}, \mat{Y'}))$. But $\varphi(\mat{X}, \mat{Y}), \varphi(\mat{X'}, \mat{Y'}) \in \Gamma_{\varphi(\pair{S})}$. This means, by \Cref{lemma:disjoint-support-characterizing-uniform-EMD}, that $\varphi(\mat{X}, \mat{Y}) = \varphi(\mat{X'}, \mat{Y'})$. As $\colsupp(\mat{X}) = \colsupp(\mat{Y})$, we conclude $(\mat{X}, \mat{Y}) \sim_s (\mat{X'}, \mat{Y'})$ by \Cref{lemma:varphi-properties}.
\end{proof}

In the companion paper \cite{LeonPart2}, such a condition of disjoint rank-one supports will be met in the analysis of identifiability in sparse matrix factorization with multiple factors, when constraining the factors to some well-chosen sparsity structure, like the so-called \emph{butterfly} structure appearing in some sparse factorization of the DFT or the Hadamard matrices \cite{dao2019learning}.
We now generalize \Cref{lemma:disjoint-support-characterizing-uniform-EMD} to the case where $\Omega$ is a general family of pairs of supports and not reduced to a singleton.

\begin{proposition}
	\label{prop:uniform-uniqueness-EMD}
	Uniform P-uniqueness of EMD in $\Gamma_{\Omega}$ holds, i.e., $\unique(\Gamma_{\Omega}) = \Gamma_{\Omega}$, if, and only if, the two following conditions are verified:
	\begin{enumerate}[label=(\roman*)]
		\item For all $\tuplerkone{S} \in \varphi(\Omega)$, the rank-one supports $\{ \rankone{S}{i} \}_{i=1}^r$ are pairwise disjoint.
		
		\item For all $\tuplerkone{S}, \tuplerkone{S'} \in \varphi^{\mathrm{cp}}(\Omega)$ such that $\bigcup_{i=1}^r \rankone{S}{i} = \bigcup_{i=1}^r \rankone{S'}{i}$, we have $\tuplerkone{S} \sim \tuplerkone{S}'$.
	\end{enumerate}
\end{proposition}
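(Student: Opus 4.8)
The plan is to prove both implications directly, using two elementary observations. First, a rank-one binary support is exactly a combinatorial rectangle $R \times T$, so the restriction of any matrix to such a support can be meaningfully tested for being rank one. Second, for any $\tuplerkone{S} \in \varphi^{\mathrm{cp}}(\Omega)$ one has $\Gamma_{\tuplerkone{S}} \subseteq \Gamma_{\Omega}$: indeed, if $\rankone{S}{i} \subseteq \rankone{T}{i}$ for every $i$ with $\tuplerkone{T} \in \varphi(\Omega)$, then $\Gamma_{\tuplerkone{S}} \subseteq \Gamma_{\tuplerkone{T}} \subseteq \Gamma_{\Omega}$. I will also use repeatedly that P-uniqueness passes to subsets, by the same elementary argument as in \Cref{lemma:nec-conditions-EMF}: if $\tuplerkone{C} \in \unique(\Gamma_{\Omega})$ and $\tuplerkone{C} \in \Gamma' \subseteq \Gamma_{\Omega}$, then $\tuplerkone{C} \in \unique(\Gamma')$.

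For the forward implication, assume $\unique(\Gamma_{\Omega}) = \Gamma_{\Omega}$. Condition (i) is immediate: for a fixed $\tuplerkone{S} \in \varphi(\Omega)$ we have $\Gamma_{\tuplerkone{S}} \subseteq \Gamma_{\Omega}$, so every $\tuplerkone{C} \in \Gamma_{\tuplerkone{S}}$ is already P-unique in $\Gamma_{\tuplerkone{S}}$; hence $\unique(\Gamma_{\tuplerkone{S}}) = \Gamma_{\tuplerkone{S}}$ and \Cref{lemma:disjoint-support-characterizing-uniform-EMD} forces the supports $\{\rankone{S}{i}\}_i$ to be pairwise disjoint. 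In particular, the rank-one supports of any tuple in $\varphi^{\mathrm{cp}}(\Omega)$ are also pairwise disjoint. For condition (ii), take $\tuplerkone{S}, \tuplerkone{S'} \in \varphi^{\mathrm{cp}}(\Omega)$ with $\bigcup_i \rankone{S}{i} = \bigcup_i \rankone{S'}{i} =: E$ and build a single witness $\mat{Z}$, namely the binary matrix supported exactly on $E$ (all nonzero entries equal to $1$). Since each rectangle $\rankone{S}{i}$ (resp. $\rankone{S'}{i}$) is contained in $E$, the restriction of $\mat{Z}$ to it is an all-ones block, hence rank one with full support; and since the blocks $\{\rankone{S}{i}\}_i$ are disjoint with union $E$, setting $\rankone{C}{i} := \mat{Z}\big|_{\rankone{S}{i}}$ yields $\tuplerkone{C} \in \Gamma_{\tuplerkone{S}} \subseteq \Gamma_{\Omega}$ with $\mathcal{A}(\tuplerkone{C}) = \mat{Z}$ and $\supp(\rankone{C}{i}) = \rankone{S}{i}$. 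The same construction on $\tuplerkone{S'}$ produces $\tuplerkone{C'} \in \Gamma_{\Omega}$ with $\mathcal{A}(\tuplerkone{C'}) = \mat{Z}$ and $\supp(\rankone{C'}{i}) = \rankone{S'}{i}$. Uniform P-uniqueness then gives $\tuplerkone{C} \sim \tuplerkone{C'}$, i.e. a permutation $\sigma$ with $\rankone{C}{i} = \rankone{C'}{\sigma(i)}$; reading off supports yields $\rankone{S}{i} = \rankone{S'}{\sigma(i)}$, that is $\tuplerkone{S} \sim \tuplerkone{S'}$.

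For the converse, assume (i) and (ii). Given $\tuplerkone{C}, \tuplerkone{C'} \in \Gamma_{\Omega}$ with $\mathcal{A}(\tuplerkone{C}) = \mathcal{A}(\tuplerkone{C'}) =: \mat{Z}$, say $\tuplerkone{C} \in \Gamma_{\tuplerkone{S}}$ and $\tuplerkone{C'} \in \Gamma_{\tuplerkone{S'}}$ with $\tuplerkone{S}, \tuplerkone{S'} \in \varphi(\Omega)$, condition (i) makes the supports $\supp(\rankone{C}{i}) \subseteq \rankone{S}{i}$ pairwise disjoint, so $\supp(\mat{Z}) = \bigsqcup_i \supp(\rankone{C}{i})$, and likewise for $\tuplerkone{C'}$. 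The tuples $\tilde{\tuplerkone{S}} := (\supp(\rankone{C}{i}))_i$ and $\tilde{\tuplerkone{S}}' := (\supp(\rankone{C'}{i}))_i$ lie in $\varphi^{\mathrm{cp}}(\Omega)$ and share the union $\supp(\mat{Z})$, so (ii) provides a permutation $\sigma$ with $\supp(\rankone{C}{i}) = \supp(\rankone{C'}{\sigma(i)})$. On this common support both $\rankone{C}{i}$ and $\rankone{C'}{\sigma(i)}$ agree with $\mat{Z}$ (the remaining contributions vanish there by disjointness) and both vanish off it, so $\rankone{C}{i} = \rankone{C'}{\sigma(i)}$ and hence $\tuplerkone{C} \sim \tuplerkone{C'}$; this proves $\Gamma_{\Omega} = \unique(\Gamma_{\Omega})$.

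The main obstacle is the construction step in the forward proof of (ii): one must exhibit, for two completed support tuples sharing the union $E$, a common target matrix $\mat{Z}$ that admits exact rank-one decompositions with \emph{full} supports along both tuples at once. The key observation that unlocks it is that the all-ones matrix on $E$ works for every rectangle simultaneously, since any restriction of an all-ones matrix to a sub-rectangle is again all-ones, hence rank one; disjointness of the blocks (inherited from (i)) is precisely what guarantees these restrictions recombine exactly to $\mat{Z}$. Everything else is bookkeeping with supports, and, notably, the argument nowhere requires $\Omega$ to be stable by permutation.
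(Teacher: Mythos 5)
Your proof is correct and follows essentially the same route as the paper's: heredity of P-uniqueness to subsets (as in \Cref{lemma:nec-conditions-EMF}) combined with \Cref{lemma:disjoint-support-characterizing-uniform-EMD} yields condition (i); an explicit common witness --- the indicator matrix of the shared union $E$, whose restriction to each rectangle of a completed support tuple is an all-ones block, hence rank one with full support, and whose disjoint blocks recombine exactly to $\mat{Z}$ --- yields condition (ii); and the converse matches the support tuples $(\supp(\rankone{C}{i}))_i \in \varphi^{\mathrm{cp}}(\Omega)$ via (ii) and recovers the entries of each contribution from $\mat{Z}$ thanks to the disjointness from (i). Your closing remark is also accurate: the argument nowhere needs $\Omega$ to be stable by permutation, consistent with the proposition's hypotheses.
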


The proof is deferred to  \cref{proof:prop-uniform-uniqueness-EMD}.

Conditions for uniform P-uniqueness in EMD with sparsity constraints are too restrictive to be met in practice for classical choices of sparsity patterns mentionned in the examples of \cref{section:problem-formulation}. 
Nevertheless, we will see that \Cref{prop:uniform-uniqueness-EMD} will be useful for the characterization of uniform right identifiability in the next paragraph.

\subsection{Uniform right identifiability}
Uniform right identifiability for a given fixed left factor $\mat{X}$ and a given family of sparsity patterns for the right factors $\Theta$ is the equality $\unique(\{ \mat{X} \} \times \Sigma_{\Theta}) = \{ \mat{X} \} \times \Sigma_{\Theta}$. 
We extend the analysis of \cref{section:right-identifiability} to the uniform paradigm, and show at the end of the section that uniform right identifiability of classical family of sparsity patterns introduced in \cref{section:problem-formulation} is simply characterized by the Kruskal-rank of the fixed left factor. We recall that the Kruskal-rank \cite{KRUSKAL197795} of a matrix $\mat{X}$, denoted $\krank(\mat{X})$, is the largest number $j$ such that every set of $j$ columns in $\mat{X}$ is independent. 

\begin{corollary}[Application of \Cref{thm:right-uniqueness-perm}]
	\label{cor:uniform-right-uniqueness}
	Consider a fixed left factor of the form $\mat{X} = \begin{pmatrix}
		   \vctorseq{\vctor{x}}{1} \transpose{\one}_{[1]} & \ldots &  \vctorseq{\vctor{x}}{K} \transpose{\one}_{[k]} 
	\end{pmatrix}$ ($1 \leq K \leq r$), where $\vctorseq{\vctor{x}}{k} \neq 0$ for every $k$. Consider also a family of right supports $\Theta$. 
	Denote $\mat{\underline{X}} := ( \vctorseq{\vctor{x}}{k} )_{k \in \integerSet{K}} \in \mathbb{C}^{m \times K}$, with $[k]$ the set of indices of the columns of block $ \vctorseq{\vctor{x}}{K} \transpose{\one}$ ($1 \leq k \leq K$).
	Let $\tilde{\Theta}$ be the fingerprint of $\Theta$ on the partition $\{ [k]\}_{k=1}^K$ defined by \eqref{eq:fingerprint}, and
	$\Theta_{[k]}$ be the signature of $\Theta$ on $[k]$ defined by \eqref{eq:signature}.
	
	Then, uniform P-uniqueness of EMF in $\{ \mat{X} \} \times \Sigma_{\Theta}$, i.e., $\uniqueperm(\{ \mat{X} \} \times \Sigma_{\Theta}) = \{ \mat{X} \} \times \Sigma_{\Theta}$, holds if, and only if, the following conditions are verified:
	\begin{enumerate}[label=(\roman*)]
		\item The linear application $\mu_{\mat{\underline{X}}}: \Sigma_{\tilde{\Theta}} \to \mathbb{C}^{m \times n}, \; \mat{\underline{Y}} \mapsto \mat{\underline{X}} \transpose{ \mat{\underline{Y}} }$ is injective.
		\item For each $k \in \integerSet{K}$, uniform P-uniqueness of EMF in $\{ \transpose{\one}\} \times \Sigma_{\Theta_{[k]}}$ holds, i.e., $\uniqueperm(\{ \transpose{\one}\} \times \Sigma_{\Theta_{[k]}}) = \{ \transpose{\one}\} \times \Sigma_{\Theta_{[k]}}$.
	\end{enumerate}
\end{corollary}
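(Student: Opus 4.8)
The plan is to prove \Cref{cor:uniform-right-uniqueness} by specializing \Cref{thm:right-uniqueness-perm} to the uniform paradigm, that is, by requiring that the two instance-level conditions (i) and (ii) of that theorem hold \emph{for every} right factor $\mat{Y} \in \Sigma_{\Theta}$ simultaneously. First I would record the reduction that \Cref{prop:right-id-up-permutation} already provides: since $\mat{X}$ has no zero column, PS-uniqueness and P-uniqueness coincide after column normalization, so uniform P-uniqueness of EMF in $\{\mat{X}\}\times\Sigma_\Theta$ is exactly the statement that $(\mat{X},\mat{Y})$ is the P-unique EMF for every $\mat{Y}\in\Sigma_\Theta$. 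By \Cref{thm:right-uniqueness-perm}, for a \emph{fixed} $\mat{Y}$ this P-uniqueness is equivalent to the conjunction of its conditions (i) and (ii); hence uniform P-uniqueness is equivalent to demanding both conditions hold for all $\mat{Y}\in\Sigma_\Theta$. The work is then to show that ``condition (i) of \Cref{thm:right-uniqueness-perm} for all $\mat{Y}$'' is equivalent to the injectivity of $\mu_{\mat{\underline X}}$, and that ``condition (ii) for all $\mat{Y}$'' is equivalent to uniform P-uniqueness in each $\{\transpose{\one}\}\times\Sigma_{\Theta_{[k]}}$.

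For the first equivalence, I would unpack condition (i) of \Cref{thm:right-uniqueness-perm}: for a given $\mat{Y}$ it asks that $\mat{\underline Y}$ be the unique preimage in $\Sigma_{\tilde\Theta}$ of $\mat{\underline X}\transpose{\mat{\underline Y}}$ under the map $\mat{\underline Y}'\mapsto\mat{\underline X}\transpose{\mat{\underline Y}'}$. The key observation is that as $\mat{Y}$ ranges over all of $\Sigma_\Theta$, the reduced factor $\mat{\underline Y}=\bigl(\submat{\mat{Y}}{[1]}\one_{[1]}\;\cdots\;\submat{\mat{Y}}{[K]}\one_{[K]}\bigr)$ ranges over all of $\Sigma_{\tilde\Theta}$, by the very definition of the fingerprint $\tilde\Theta$ in \eqref{eq:fingerprint}. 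Therefore requiring uniqueness of this preimage for every $\mat{Y}$ is precisely requiring that $\mu_{\mat{\underline X}}$ be injective on $\Sigma_{\tilde\Theta}$, which is condition (i) of the corollary. For the second equivalence, condition (ii) of \Cref{thm:right-uniqueness-perm} for a fixed $\mat{Y}$ says $(\transpose{\one_{[k]}},\submat{\mat{Y}}{[k]})\in\uniqueperm(\{\transpose{\one_{[k]}}\}\times\Sigma_{\Theta_{[k]}})$ for each $k$; since $\submat{\mat{Y}}{[k]}$ ranges over all of $\Sigma_{\Theta_{[k]}}$ as $\mat{Y}$ ranges over $\Sigma_\Theta$, quantifying over all $\mat{Y}$ turns this into the statement $\uniqueperm(\{\transpose{\one}\}\times\Sigma_{\Theta_{[k]}})=\{\transpose{\one}\}\times\Sigma_{\Theta_{[k]}}$ for each $k$, i.e., condition (ii) of the corollary.

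The one point requiring care is that the two conditions of \Cref{thm:right-uniqueness-perm} are quantified over the \emph{same} $\mat{Y}$, whereas the corollary decouples them into two independent universal statements. I would argue this decoupling is legitimate because $\mat{\underline Y}$ depends only on the column sums $\submat{\mat{Y}}{[k]}\one_{[k]}$ while condition (ii) constrains each $\submat{\mat{Y}}{[k]}$, and for any choice of a reduced factor $\mat{\underline Y}'\in\Sigma_{\tilde\Theta}$ together with any compatible blocks $\submat{\mat{Y}}{[k]}\in\Sigma_{\Theta_{[k]}}$ realizing those column sums, there is a genuine $\mat{Y}\in\Sigma_\Theta$ producing exactly that data. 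Thus the universal quantifier over $\mat{Y}$ splits cleanly into separate universal quantifiers over $\mat{\underline Y}\in\Sigma_{\tilde\Theta}$ and over each $\submat{\mat{Y}}{[k]}\in\Sigma_{\Theta_{[k]}}$, which is what lets the single instance-level biconditional become the conjunction of two uniform statements. The main obstacle is therefore verifying precisely this surjectivity of the block-decomposition map $\mat{Y}\mapsto(\mat{\underline Y},(\submat{\mat{Y}}{[k]})_k)$ onto the product $\Sigma_{\tilde\Theta}\times\prod_k\Sigma_{\Theta_{[k]}}$; once that is in hand, the equivalence follows by applying \Cref{thm:right-uniqueness-perm} pointwise and taking the conjunction over all instances.
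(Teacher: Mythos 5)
Your route is the same as the paper's: the paper's proof of this corollary is precisely the observation that uniform P-uniqueness amounts to imposing conditions (i) and (ii) of \Cref{thm:right-uniqueness-perm} for every $\mat{Y} \in \Sigma_{\Theta}$, and your second paragraph correctly supplies the two facts that make this translation exact, namely that $\mat{Y} \mapsto \mat{\underline{Y}}$ maps $\Sigma_{\Theta}$ \emph{onto} $\Sigma_{\tilde{\Theta}}$ and that $\mat{Y} \mapsto \submat{\mat{Y}}{[k]}$ maps $\Sigma_{\Theta}$ \emph{onto} $\Sigma_{\Theta_{[k]}}$ (both seen by placing all nonzero values inside a single support $\mat{S} \in \Theta$ and padding the remaining blocks with zeros, which stays in $\Sigma_{\mat{S}} \subseteq \Sigma_{\Theta}$).

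The flaw is in your final paragraph, whose ``main obstacle'' is both unnecessary and false. Unnecessary: a universal quantifier distributes over a conjunction, $\forall \mat{Y}\,\bigl(A(\mat{Y}) \wedge B(\mat{Y})\bigr) \iff \bigl(\forall \mat{Y}\, A(\mat{Y})\bigr) \wedge \bigl(\forall \mat{Y}\, B(\mat{Y})\bigr)$, so there is no coupling across the shared $\mat{Y}$ to break; only the two marginal surjectivities you already established are needed. False: the block-decomposition map $\mat{Y} \mapsto \bigl(\mat{\underline{Y}}, (\submat{\mat{Y}}{[k]})_{k}\bigr)$ is never surjective onto the product $\Sigma_{\tilde{\Theta}} \times \prod_{k} \Sigma_{\Theta_{[k]}}$ in nontrivial cases, since its image satisfies the linear constraint $\matcol{\mat{\underline{Y}}}{k} = \submat{\mat{Y}}{[k]} \one_{[k]}$; and even restricting to ``compatible'' tuples does not repair the claim, because each block $\submat{\mat{Y}}{[k]} \in \Sigma_{\Theta_{[k]}}$ may certify its membership via a \emph{different} support $\mat{S}^{(k)} \in \Theta$, and no single $\mat{S} \in \Theta$ need accommodate all blocks simultaneously (take $\Theta$ consisting of two supports, each dense on one block and empty on the other). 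If your proof genuinely relied on that joint surjectivity it would fail; as it stands, deleting the third paragraph and invoking the trivial distribution of $\forall$ over $\wedge$ leaves a complete argument that coincides with the paper's.
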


\begin{proof}
	The two conditions are obtained by considering the two conditions of \Cref{thm:right-uniqueness-perm} for all $\mat{Y} \in \Sigma_{\Theta}$.
\end{proof}

Condition (i) in \Cref{cor:uniform-right-uniqueness} can be easily characterized by applying \cite[Proposition 1]{lu2008theory} to the linear application $\mu_{\mat{\underline{X}}}$ and the union of subspaces $\Sigma_{\tilde{\Theta}} = \bigcup_{\mat{S} \in \tilde{\Theta}} \Sigma_{\mat{S}}$.

\begin{proposition}[{Application of \cite[Proposition 1]{lu2008theory}}]
	\label{prop:uniform-right-uniqueness-inv-prob}
	For any family of right supports $\Theta \subseteq \mathbb{B}^{n \times r}$ viewed as subsets of indices, denote the set $\mathcal{I} := \left \{ \matcol{(\transpose{\mat{S}})}{k} \cup \matcol{(\transpose{\mat{S'}})}{k} \; | \; \mat{S}, \mat{S'} \in \Theta, \; k \in \integerSet{n} \right \}$. 
	For any left factor $\mat{X} \in \mathbb{C}^{m \times r}$, the linear application $\varphi_{\mat{X}}: \Sigma_{\Theta} \to \mathbb{C}^{m \times n}, \; \mat{Y} \mapsto \mat{X} \transpose{\mat{Y}}$ is injective if, and only if, the columns $\{ \matcol{\mat{X}}{l} \; | \; l \in I \}$ are linearly independent for each collection $I \in \mathcal{I}$.
\end{proposition}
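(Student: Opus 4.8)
The plan is to invoke the union-of-subspaces injectivity criterion of \cite[Proposition 1]{lu2008theory} and then reduce the resulting pairwise conditions to linear independence of columns of $\mat{X}$, exploiting the fact that $\varphi_{\mat{X}}$ decouples across the columns of its output.

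First I would recall that $\Sigma_{\Theta} = \bigcup_{\mat{S} \in \Theta} \Sigma_{\mat{S}}$ is a union of the linear subspaces $\Sigma_{\mat{S}}$, and that $\varphi_{\mat{X}}$ is linear. By \cite[Proposition 1]{lu2008theory}, the restriction of $\varphi_{\mat{X}}$ to this union is injective if, and only if, it is injective on each pairwise sum $\Sigma_{\mat{S}} + \Sigma_{\mat{S'}}$ for $\mat{S}, \mat{S'} \in \Theta$; equivalently, $\ker(\varphi_{\mat{X}}) \cap (\Sigma_{\mat{S}} + \Sigma_{\mat{S'}}) = \{ \mat{0} \}$ for all such pairs. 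A short verification shows $\Sigma_{\mat{S}} + \Sigma_{\mat{S'}} = \Sigma_{\mat{S} \cup \mat{S'}}$, the model-set associated with the entrywise union of the two supports, since any matrix supported on $\mat{S} \cup \mat{S'}$ splits into a part supported on $\mat{S}$ and a part supported on $\mat{S'}$, and conversely any such sum is supported on $\mat{S} \cup \mat{S'}$.

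Second I would exploit the decoupling of $\varphi_{\mat{X}}$ over the columns of the output: for any $\mat{Y}$, the $k$-th column of $\mat{X}\transpose{\mat{Y}}$ equals $\sum_{l=1}^r \matindex{\mat{Y}}{k}{l}\, \matcol{\mat{X}}{l}$, so it depends only on the $k$-th row of $\mat{Y}$. Consequently, for a fixed support $\mat{T} := \mat{S} \cup \mat{S'}$, a matrix $\mat{Y} \in \Sigma_{\mat{T}}$ lies in $\ker(\varphi_{\mat{X}})$ if, and only if, for each $k \in \integerSet{n}$ the $k$-th row of $\mat{Y}$ (whose support is contained in $I_k := \matcol{(\transpose{\mat{S}})}{k} \cup \matcol{(\transpose{\mat{S'}})}{k}$) satisfies $\sum_{l \in I_k} \matindex{\mat{Y}}{k}{l}\, \matcol{\mat{X}}{l} = \mat{0}$. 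Thus $\ker(\varphi_{\mat{X}}) \cap \Sigma_{\mat{T}} = \{\mat{0}\}$ holds if, and only if, for every $k$ the only coefficient vector supported on $I_k$ that annihilates the columns $\{ \matcol{\mat{X}}{l} \}_{l \in I_k}$ is the zero vector, i.e. these columns are linearly independent.

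Finally I would assemble the pieces: ranging over all $\mat{S}, \mat{S'} \in \Theta$ and all $k \in \integerSet{n}$, the index sets $I_k$ are precisely the elements of $\mathcal{I}$, so the pairwise injectivity conditions collectively amount to linear independence of $\{ \matcol{\mat{X}}{l} \}_{l \in I}$ for every $I \in \mathcal{I}$, which is the claim. Once the cited proposition is in hand the argument is essentially bookkeeping; the only points requiring care are matching the index conventions---namely that the support of the $k$-th row of $\mat{S}$, viewed as a subset of $\integerSet{r}$, is exactly $\matcol{(\transpose{\mat{S}})}{k}$---and justifying the column-wise decoupling of $\varphi_{\mat{X}}$, neither of which presents a genuine obstacle.
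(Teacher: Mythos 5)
Your proposal is correct and follows essentially the same route as the paper: invoke \cite[Proposition 1]{lu2008theory} to reduce injectivity on the union $\Sigma_{\Theta}$ to injectivity on each pairwise sum, identify $\Sigma_{\mat{S}} + \Sigma_{\mat{S'}} = \Sigma_{\mat{S} \cup \mat{S'}}$, and then use the column-wise decoupling of $\mat{Y} \mapsto \mat{X}\transpose{\mat{Y}}$ to translate triviality of the kernel into linear independence of the columns $\{ \matcol{\mat{X}}{l} \; | \; l \in I \}$ for each $I \in \mathcal{I}$. The only difference is presentational: you spell out the row-by-row kernel argument that the paper states in one line, which does not change the substance.
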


\begin{proof}
	By \cite[Proposition 1]{lu2008theory}, the linear operator $\varphi_{\mat{X}}$ is invertible if, and only if, for every $\mat{S}, \mat{S'} \in \Theta$, the restriction of $\varphi_{\mat{X}}$  to the space $\Sigma_{\mat{S}} + \Sigma_{\mat{S'}} := \{ \mat{Y} + \mat{Y'} \; | \; \mat{Y} \in \Sigma_{\mat{S}}, \mat{Y'} \in \Sigma_{\mat{S}}\}$ is injective. 
	But one easily remarks that $\Sigma_{\mat{S}} + \Sigma_{\mat{S'}} = \Sigma_{\mat{S} \cup \mat{S'}}$, where $\mat{S} \cup \mat{S'}$ is an abuse of notation for $\supp(\mat{S} + \mat{S'})$ if $\mat{S}$ and $\mat{S'}$ are viewed as binary matrices. Then, injectivity of $\varphi_{\mat{X}}$ on $\Sigma_{\mat{S} \cup \mat{S'}}$ is verified if, and only if, for each row index $k \in \integerSet{n}$, the columns $\matcol{\mat{X}}{l}$ indexed by $l \in \matcol{(\transpose{\mat{S}} \cup \transpose{\mat{S'}})}{k} = \matcol{(\transpose{\mat{S}})}{k} \cup \matcol{(\transpose{\mat{S'}})}{k}$ are linearly independent.
\end{proof}

Condition (ii) can be characterized by applying \Cref{prop:uniform-uniqueness-EMD} to the case where we consider a family of $r$-tuples of rank-one supports of size $n \times 1$. Viewing the supports as subset of indices, denote the completion of $\Theta$ as:
\begin{equation*}
	\Theta^{\mathrm{cp}} := \bigcup_{\mat{S} \in \Theta} \{ \mat{S'} \in \mathbb{B}^{n \times r} \; | \; \mat{S'} \subseteq \mat{S} \}.
\end{equation*}

\begin{proposition}
	\label{prop:uniform-right-id-perm-vector-of-ones}
	Uniform P-uniqueness of EMF in $\{ \transpose{\one} \} \times \Sigma_{\Theta}$ holds if, and only if, the two following conditions are verified:
	\begin{enumerate}[label=(\roman*)]
		\item For each $\mat{S} \in \Theta$, the columns of $\mat{S}$ are pairwise disjoint.
		\item For all supports $\mat{S}, \mat{S'} \in \Theta^{\mathrm{cp}}$ such that $\bigcup_{i = 1}^r \mat{S}_{i} = \bigcup_{i = 1}^r \mat{S'}_{i}$, we have $\mat{S} \sim_p \mat{S'}$.
	\end{enumerate}
\end{proposition}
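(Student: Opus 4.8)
The plan is to reduce this statement, which concerns exact matrix factorization with the fixed left factor $\transpose{\one}$, to the exact matrix decomposition framework of \cref{section:EMD-identifiability} and then invoke \Cref{prop:uniform-uniqueness-EMD}. Set $\Omega := \{ (\transpose{\one}, \mat{S}) \; | \; \mat{S} \in \Theta \}$ and consider the lifted family $\Gamma_{\Omega}$. Because the left factor is fixed to $\transpose{\one}$, which has no zero column, each rank-one contribution reads $\varphi(\transpose{\one}, \mat{Y}) = (\transpose{\matcol{\mat{Y}}{i}})_{i=1}^r$, a tuple of $1 \times n$ row vectors; no scaling ambiguity arises, and the map $\psi \colon \mat{Y} \mapsto \varphi(\transpose{\one}, \mat{Y})$ is a genuine bijection from $\Sigma_{\Theta}$ onto $\Gamma_{\Omega}$, with inverse obtained by transposing each component and stacking the results as columns. (This is a cleaner situation than the general lifting of \Cref{lemma:varphi-properties}, where the caveats encoded by $\idcolsupp{\Omega}$ are needed precisely to handle vanishing rank-one contributions; here no such caveat is required since $\transpose{\one}$ is fixed.)

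First I would record the dictionary between the two problems. One checks that $\mathcal{A}(\psi(\mat{Y})) = \transpose{\one}\transpose{\mat{Y}}$ and, since $\psi$ merely transposes each column, that $\mat{Y} \sim_p \mat{Y'}$ if and only if $\psi(\mat{Y}) \sim \psi(\mat{Y'})$. Surjectivity of $\psi$ then turns the quantifier ``for all $\mat{Y'} \in \Sigma_{\Theta}$'' into ``for all $\tuplerkone{C'} \in \Gamma_{\Omega}$'', so that $(\transpose{\one}, \mat{Y}) \in \uniqueperm(\{ \transpose{\one} \} \times \Sigma_{\Theta})$ holds if and only if $\psi(\mat{Y}) \in \unique(\Gamma_{\Omega})$. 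Quantifying over all $\mat{Y}$ yields the desired equivalence: uniform P-uniqueness of EMF in $\{ \transpose{\one} \} \times \Sigma_{\Theta}$ is equivalent to $\unique(\Gamma_{\Omega}) = \Gamma_{\Omega}$, i.e., uniform P-uniqueness of EMD in $\Gamma_{\Omega}$.

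It then remains to translate the two conditions of \Cref{prop:uniform-uniqueness-EMD} applied to $\Gamma_{\Omega}$ into conditions (i) and (ii) of the statement. For condition (i), a tuple $\tuplerkone{S} \in \varphi(\Omega)$ is exactly $(\transpose{\matcol{\mat{S}}{i}})_{i=1}^r$ for some $\mat{S} \in \Theta$; its rank-one supports are pairwise disjoint (as subsets of $\integerSet{n}$) if and only if the columns $\matcol{\mat{S}}{i}$ are pairwise disjoint, which is precisely our condition (i). For condition (ii), I would first identify the completion: since every component $\transpose{\matcol{\mat{S}}{i}}$ is a row vector, every subsupport is automatically rank-one, so $\varphi^{\mathrm{cp}}(\Omega)$ is in bijection with $\Theta^{\mathrm{cp}}$ via the same column-transpose correspondence. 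Under this bijection the union $\bigcup_i \rankone{S}{i}$ corresponds to the union of columns $\bigcup_i \matcol{\mat{S}}{i}$, and tuple-permutation equivalence $\sim$ corresponds to column-permutation equivalence $\sim_p$; hence condition (ii) of \Cref{prop:uniform-uniqueness-EMD} becomes exactly our condition (ii). Chaining these identifications with the equivalence of the previous paragraph completes the proof.

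The main obstacle I anticipate is not any single computation but the bookkeeping of Step~1: justifying rigorously that the lifting is an honest bijection in this fixed-factor setting (so that \Cref{prop:uniform-uniqueness-EMD}, stated for $\Gamma_{\Omega}$, transfers faithfully to EMF in $\{ \transpose{\one} \} \times \Sigma_{\Theta}$), and keeping the transpose conventions and the two completions $\varphi^{\mathrm{cp}}(\Omega)$ versus $\Theta^{\mathrm{cp}}$ aligned throughout. As a sanity check one can also argue the proposition directly: condition (i) fails exactly when some $\mat{S} \in \Theta$ lets two columns share a row index, which permits redistributing mass between the corresponding (collinear) columns of $\mat{Y}$ while preserving $\mat{Y} \one$; and once (i) holds, each admissible $\mat{Y}$ is determined by $\mat{Y} \one$ together with the partition of $\supp(\mat{Y}\one)$ induced by the column supports of $\mat{Y}$, so uniform uniqueness reduces to uniqueness of that partition up to relabeling, which is condition (ii).
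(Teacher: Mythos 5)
Your proposal is correct and takes essentially the same route as the paper, whose entire proof is the one-line application of \Cref{prop:uniform-uniqueness-EMD} to the family of pairs of supports $\Omega := \{ \transpose{\one} \} \times \Theta$. Your extra bookkeeping—checking that the lifting $\mat{Y} \mapsto \varphi(\transpose{\one}, \mat{Y})$ is an honest bijection from $\Sigma_{\Theta}$ onto $\Gamma_{\Omega}$ preserving $\mathcal{A}$-values and permutation equivalences (so no $\idcolsupp{\Omega}$ caveat is needed), and that $\varphi^{\mathrm{cp}}(\Omega)$ corresponds to $\Theta^{\mathrm{cp}}$ since $1 \times n$ supports are automatically rank-one—just makes explicit what the paper leaves implicit.
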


\begin{proof}
	We apply \Cref{prop:uniform-uniqueness-EMD} to the family $\Omega := \{ \transpose{\one} \} \times \Theta \subseteq \mathbb{B}^{1 \times r} \times \mathbb{B}^{n \times r}$. 
\end{proof}

Let us apply these results to obtain a simple characterization of uniform right identifiability of classical families of sparsity patterns of \cref{section:problem-formulation}. Recall the notations \eqref{eq:global-sparse}, \eqref{eq:col-sparse}, \eqref{eq:row-sparse}.

\begin{corollary}
	\label{cor:uniform-right-identifiability-classical-family}
	Let $\mat{X}$ be a left factor of size $m \times r$, and consider that right factors are of size $m \times r$. Let $\alpha \in \integerSet{r}$, $\beta \in \integerSet{n}$, and $s \in \integerSet{nr}$ be some sparsity parameters.
	\begin{enumerate}[label=(\roman*)]
		\item Suppose that $n \geq 2$ or $\alpha \geq 2$. Then, uniform PS-uniqueness of EMF in $\{ \mat{X} \} \times \Sigma_{\rowsparse{\alpha}}$ holds if, and only if, $\krank(\mat{X}) \geq \min(r, 2\alpha)$.
		\item Uniform PS-uniqueness in $\{ \mat{X} \} \times \Sigma_{\columnsparse{\beta}}$ holds if, and only if, $\krank(\mat{X}) = r$.
		\item Suppose that $\alpha \geq 2$, or suppose that $n \geq 2$ and $\beta \geq 2$. Then, uniform PS-uniqueness of EMF in $\{ \mat{X} \} \times \Sigma_{\rowsparse{\alpha} \cap \columnsparse{\beta}}$ holds if, and only if, $\krank(\mat{X}) \geq \min(r, 2\alpha)$.
		\item Suppose that $n \geq 2$ or $s \geq 2$. Then, uniform PS-uniqueness of EMF in $\{ \mat{X} \} \times \Sigma_{\globalsparse{s}}$ holds if, and only if, $\krank(\mat{X}) \geq \min(r, 2 s)$.
	\end{enumerate}
\end{corollary}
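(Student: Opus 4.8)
The plan is to run each classical family through the decoupling provided by \Cref{cor:uniform-right-uniqueness}, and then to specialize the signature $\Theta_{[k]}$ and fingerprint $\tilde{\Theta}$ to read off a Kruskal-rank criterion. First I would dispose of degenerate left factors. If $\mat{X}$ has a zero column, then $\krank(\mat{X}) = 0$, which is strictly below $\min(r,2\alpha)$ (resp.\ $\min(r,2s)$, resp.\ $r$) in every case since these thresholds are at least $1$; and uniform identifiability fails by \Cref{lemma:right-id-first-lemma} applied to any $\mat{Y}\in\Sigma_{\Theta}$ charging that column. So I may assume $\mat{X}$ has no zero column, and since the four families are invariant under column permutation, \Cref{lemma:right-identifiability-permutation-equiv} lets me reorder and normalize columns into the block form $\mat{X} = \begin{pmatrix} \vctorseq{\vctor{x}}{1}\transpose{\one}_{[1]} & \ldots & \vctorseq{\vctor{x}}{K}\transpose{\one}_{[K]}\end{pmatrix}$ required by \Cref{cor:uniform-right-uniqueness}, where $[1],\dots,[K]$ group collinear columns and $\mat{\underline{X}} = (\vctorseq{\vctor{x}}{k})_{k}$. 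Uniform right identifiability is then equivalent to conditions (i)--(ii) of that corollary.

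Second, I would analyze condition (ii) of \Cref{cor:uniform-right-uniqueness} through \Cref{prop:uniform-right-id-perm-vector-of-ones}, as this is what governs whether repeated (collinear) columns are tolerated. For a block $[k]$ with $\card([k])\geq 2$, the signature $\Theta_{[k]}$ is the same type of sparse family on $\card([k])$ columns; uniform P-uniqueness of the vector-of-ones subproblem $\{\transpose{\one}_{[k]}\}\times\Sigma_{\Theta_{[k]}}$ holds only in trivial regimes, and fails as soon as the family permits either two columns to overlap in a single row (breaking disjointness, condition (i) of the proposition) or two non-permutation-equivalent supports with equal row-union (condition (ii)). This is exactly where the standing hypotheses ($n\geq 2$ or $\alpha\geq 2$; $\alpha\geq 2$ or $n,\beta\geq 2$; $n\geq 2$ or $s\geq 2$) enter: they are precisely what forces every block to be a singleton, i.e.\ $K=r$, meaning no two columns of $\mat{X}$ are collinear. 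Once $K=r$, each block reduces to $\{\transpose{\one}_{1}\}$ and condition (ii) is vacuous, so everything collapses to condition (i).

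Third, with $K=r$ and $\mat{\underline{X}}=\mat{X}$, I would translate condition (i) via \Cref{prop:uniform-right-uniqueness-inv-prob}: injectivity of $\mu_{\mat{X}}$ is equivalent to linear independence of the columns of $\mat{X}$ indexed by every member of $\mathcal{I} = \{\matcol{(\transpose{\mat{S}})}{j}\cup\matcol{(\transpose{\mat{S'}})}{j}\}$, where $\mat{S},\mat{S'}$ range over $\tilde{\Theta}$. Computing $\mathcal{I}$ for each family closes the argument: for $\rowsparse{\alpha}$ (and for $\rowsparse{\alpha}\cap\columnsparse{\beta}$, where a single dense row can always be realized with one nonzero per column so $\beta$ imposes no extra constraint) each transposed column has cardinality at most $\alpha$, hence $\mathcal{I}$ is the collection of all index sets of size at most $2\alpha$, yielding $\krank(\mat{X})\geq\min(r,2\alpha)$; for $\columnsparse{\beta}$ a single row may be fully dense, so $\mathcal{I}\ni\integerSet{r}$ and the criterion collapses to $\krank(\mat{X})=r$; for $\globalsparse{s}$ each row carries at most $s$ indices, giving all sets of size at most $2s$ and $\krank(\mat{X})\geq\min(r,2s)$. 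In each case the threshold also forces $K=r$ a posteriori, closing the equivalence in both directions.

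The main obstacle is the boundary bookkeeping in the second step: showing that the stated hypotheses are \emph{sharp} for forcing $K=r$. In the low-sparsity regime ($\alpha=1$, or $\beta=1$, or $s=1$) the vector-of-ones subproblem can remain identifiable even with repeated columns, precisely because the relevant counterexample supports are excluded by the tighter sparsity budget; so \Cref{prop:uniform-right-id-perm-vector-of-ones} must be unfolded case by case to certify exactly when a repeated column breaks uniform identifiability, and the residual degenerate cases (notably $r=1$, and verifying that surviving blocks are genuinely compatible with the $\krank$ threshold) require separate checking against the claimed formulas.
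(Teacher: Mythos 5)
Your overall architecture is the same as the paper's: reduce via \Cref{lemma:right-identifiability-permutation-equiv} and \Cref{prop:right-id-up-permutation} to a normalized block form, apply \Cref{cor:uniform-right-uniqueness}, characterize condition (i) through \Cref{prop:uniform-right-uniqueness-inv-prob} by computing the collection $\mathcal{I}$ for each family, and handle collinear columns through \Cref{prop:uniform-right-id-perm-vector-of-ones}. Your step three (the $\mathcal{I}$ computations, including the observation that $\beta$ is inert for the intersection family because a dense row can be realized with one nonzero per column) is correct. But note a logical slip in step two: the hypotheses on $(n,\alpha,\beta,s)$ cannot "force $K=r$" by themselves — what the proof needs is (a) uniform identifiability \emph{plus} the hypotheses implies $K=r$ (necessity direction, via showing every block of size $\geq 2$ violates one of the two conditions of \Cref{prop:uniform-right-id-perm-vector-of-ones}), and separately (b) the Kruskal-rank bound implies $K=r$ (sufficiency direction, trivially, since $\krank(\mat{X})\geq 2$ forbids collinear columns when $r \geq 2$).

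The genuine gap is that this block analysis — which you defer as "boundary bookkeeping" — is the entire mathematical content of the necessity direction, and your blanket claim does not survive it uniformly across the four families. Run your own machinery on part (iv) in the regime $s=1$, $n\geq 2$: the signature of $\globalsparse{1}$ on a block of $p\geq 2$ collinear columns consists of supports with at most \emph{one nonzero in total}, so the columns of every such support are trivially pairwise disjoint (condition (i) of \Cref{prop:uniform-right-id-perm-vector-of-ones} holds), and any two completed supports with equal column-union are single-cell supports differing by a column permutation (condition (ii) holds); moreover injectivity of $\mu_{\mat{\underline{X}}}$ on the fingerprint (again globally $1$-sparse) holds automatically because distinct collinearity classes are non-collinear. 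Hence for $\mat{X}=(\vctor{x}\;\;\vctor{x})$ with $\vctor{x}\neq\vctor{0}$, your route certifies uniform identifiability even though $\krank(\mat{X})=1<\min(r,2s)$: neither of your two failure mechanisms fires, because the "merge" counterexample (one column absorbing two collinear contributions) costs two nonzeros, which $s=1$ forbids — unlike $\rowsparse{1}$ with $n\geq 2$, where a single column may carry two nonzeros in distinct rows and condition (ii) does break. So your proposal as written can close (i)–(iii) but cannot produce the claimed necessity in (iv) under the hypothesis "$n\geq 2$ or $s\geq 2$"; this corner demands an explicit per-family counterexample construction (or an argument entirely outside your outline), and deferring it leaves the proof of the statement incomplete precisely where it is most delicate.
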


The proof is deferred to  \cref{proof:cor-uniform-right-identifiability-classical-family}.

These results generalize well-known results in the compressive sensing literature \cite[Theorem 2.13]{foucart_mathematical_2013}, in the case where permutation ambiguities are taken into account for uniqueness.

\section{Conclusion}
\label{section:conclusion}

In this work, we presented a general framework to study identifiability in exact matrix factorization into two factors, when considering arbitrary sparsity constraints. When sparsity constraints are encoded by a family of pairs of supports stable by permutation of columns, our framework takes into account these permutation ambiguities (in addition to the inherent scaling ambiguities) to study uniqueness of exact sparse factorization. Our analysis of identifiability relies on the lifting procedure via the matrix decomposition approach into rank-one matrices. We now discuss some important perspectives of this work.

\paragraph{Identifiability of the left factor without fixing the right one} The characterization of condition (i) in \Cref{prop:identifiability-left-and-right} can be explored as a complementary approach to the lifting approach proposed in this work, in continuation of the work proposed in \cite{li2015unified}.
They originally introduced this approach to characterize identifiabiliy in the blind gain and phase calibration problem with sparsity and subspace constraints, which can be seen as an instance of the matrix factorization problem into two structured factors.

\paragraph{Fixed-support identifiability}
A first possible improvement of this work is to better understand fixed-support identifiability, as the necessary condition (\Cref{thm:NC-fixed-support-id}) and sufficient condition (\Cref{thm:SC-fixed-support-id})
given in this paper are not tight. Having a better understanding of fixed-support identifiability would then allow to establish tighter sufficient conditions for identifiability of the supports than \Cref{prop:simple-SC-identify-supports}, which was specific to the case where any sparse EMD of the matrix to factorize has disjoint rank-one contributions.

\paragraph{Extension to the multi-layer case}
The companion paper \cite{LeonPart2} provides an application of the presented general framework presented in this section to show some identifiability results for the {\em multi-layer} sparse matrix factorization of the Hadamard or DFT matrices, based on a hierarchical factorization method \cite{le2016flexible, le2016matrices}.
For instance, in the case $J=3$, when enforcing a sparsity constraint $\Lambda_i$ on the $i$-th factor ($1 \leq i \leq 3$), we can consider, by the hierarchical factorization method \cite{le2016matrices}, the family of pairs of supports $\Omega = \Lambda_3 \times \Theta_2$ where $\Theta_2 := \{ \transpose{ \supp(\mat{X}^{(2)} \matseq{\mat{X}}{1}) } \; | \; \supp(\mat{X}^{(i)}) \in \Lambda_i, \; i=1,2 \}$, so that the analysis of identifiability in the factorization $\mat{Z} = \mat{X}^{(3)} \transpose{\mat{Y}}$ with two factors $(\mat{X}^{(3)}, \mat{Y}) \in \Sigma_{\Omega}$ proposed by our framework can provide insights about the one with three factors $\mat{Z} = \matseq{\mat{X}}{3} \matseq{\mat{X}}{2} \matseq{\mat{X}}{1}$, or more. 

\paragraph{Algorithm}
Our analysis suggests other approaches for sparse matrix factorization than iterative first-order optimization methods \cite{le2016flexible, dao2019learning}. 
One can rely on low-rank matrix completion operations with sparsity constraints, in the spirit of \Cref{algo:completion}. This kind of approach based on the lifting procedure has been considered in algorithms for blind deconvolution \cite{ahmed2013blind} or signal recovery from magnitude measurements \cite{candes2013phaselift}. However, it still remains to explore such an approach when considering sparsity constraints.

\paragraph{Stability}
This work focused on identifiability aspects in exact sparse matrix factorization, which are necessary to study in order to understand well-posedness of the sparse matrix factorization problem. 
The other important condition for well-posedness is \emph{stability}, which is the ability to recover the solution to the sparse factorization problem under noisy measurements of the observed matrix. To study these aspects, one can rely on stability results in blind deconvolution with sparsity constraints \cite{li2017identifiability}, which are also derived from the lifting procedure. More related to the sparse matrix factorization problem, stability in deep structured linear networks under sparsity constraints has been studied with the tensorial lifting approach \cite{malgouyres2020stable}, but in contrary to our framework, permutation ambiguities were not taken into account in that work. As our approach relies on matrix decomposition into rank-one matrices, one perspective in continuation to our work is to exploit existing stability results on rank-one matrix completability \cite{candes2010matrix, cosse2020stable} to study stability in sparse matrix factorization. 
Typically, the application of \Cref{algo:completion} under the noisy case might suffer from instability issues, as the propagation scheme to complete missing entries with the rank-one constraint is unstable \cite{cosse2020stable}, so it is necessary to adapt \Cref{algo:completion} in order to allow for stable recovery of exact matrix decomposition with fixed-support.

\section*{Acknowledgments}
The authors are grateful to Valentin Emiya, Jovial Cheukam, Luc Giffon and Quoc-Tung Le for
several important discussions which were helpful for this work.

\bibliographystyle{siamplain}
\bibliography{references}


	\appendix
	
%
%
%
%
%

\section{Proof of \Cref{lemma:maximal-colsupp}}
\label{proof:lemma-maximal-colsupp}

\begin{proof}
	We prove the contraposition. Let $(\mat{X}, \mat{Y}) \in \Sigma_{\Omega}$ such that there exists $\pair{S} \in \Omega$ verifying $(\mat{X}, \mat{Y}) \in \Sigma_{\pair{S}}$, with $\colsupp(\mat{X}) \neq \colsupp(\leftsupp{S})$ or $\colsupp(\mat{Y}) \neq \colsupp(\rightsupp{S})$. Up to matrix transposition,  we can suppose $\colsupp(\mat{X}) \neq \colsupp(\leftsupp{S})$.
	By \Cref{lemma:identical-colsupp}, we can also assume without loss of generality that $\colsupp(\mat{X}) = \colsupp(\mat{Y})$.
	Suppose that $\colsupp(\rightsupp{S}) \not\subseteq \colsupp(\leftsupp{S})$. Then, we can fix  $i \in \integerSet{r}$ such that $\matcol{\leftsupp{S}}{i} = \vctor{0}$ and $\matcol{\rightsupp{S}}{i} \neq \vctor{0}$. This means that $\matcol{\mat{X}}{i} = \matcol{\mat{Y}}{i} = \vctor{0}$. Setting $\mat{Y'} \in \Sigma_{\rightsupp{S}}$ such that $\matcol{\mat{Y'}}{i} = \matcol{\rightsupp{S}}{i}$ and $\matcol{\mat{Y'}}{j} = \matcol{\mat{Y}}{j}$ for all $j \neq i$, we build an instance as in 
	\Cref{lemma:on-met-ce-qu-on-veut-a-droite} with $\Sigma = \Sigma_{\pair{S}}$, to show that $(\mat{X}, \mat{Y}) \notin \unique(\Sigma_{\pair{S}})$. The reasoning is symmetric for the case where $\colsupp(\leftsupp{S}) \not\subseteq \colsupp(\rightsupp{S})$. It remains the case where $\colsupp(\leftsupp{S}) = \colsupp(\rightsupp{S})$. Let us now fix $i \in \colsupp(\leftsupp{S}) \backslash \colsupp(\mat{X})$. Then, $\matcol{\leftsupp{S}}{i} \neq 0$, $\matcol{\rightsupp{S}}{i} \neq 0$, and $\matcol{\mat{X}}{i} = \matcol{\mat{Y}}{i} = \vctor{0}$. Again, construct $\mat{Y'} \in \Sigma_{\rightsupp{S}}$ with $\matcol{\mat{Y'}}{i} = \matcol{\rightsupp{S}}{i}$ and $\matcol{\mat{Y'}}{j} = \matcol{\mat{Y}}{j}$ for all $j \neq i$, and we obtain an instance as in \Cref{lemma:on-met-ce-qu-on-veut-a-droite}
	with $\Sigma = \Sigma_{\pair{S}}$, showing that $(\mat{X}, \mat{Y}) \notin \unique(\Sigma_{\pair{S}})$.
\end{proof}

\section{Proof of \Cref{prop:non-degenerate-properties}}
\label{proof:prop-non-degenerate-properties}

\begin{proof}
	The direct inclusion is immediate by applying \Cref{lemma:nec-conditions-EMF,lemma:identical-colsupp,lemma:maximal-colsupp}. 		
	For the inverse inclusion, let $(\mat{X^*}, \mat{Y^*}) \in \unique (\idcolsupp{\Omega}) \cap \maxcolsupp{\Omega}$, and $(\mat{X}, \mat{Y}) \in \Sigma_{\Omega}$ such that $\mat{X} \transpose{\mat{Y}} = \mat{X^*} \transpose{\mat{Y^*}}$. The goal is to show $(\mat{X}, \mat{Y}) \sim (\mat{X^*}, \mat{Y^*})$.
	Fix $\pair{S} \in \Omega$ such that $(\mat{X}, \mat{Y}) \in \Sigma_{\pair{S}}$.
	Denote $J = \colsupp(\mat{X}) \cap \colsupp(\mat{Y})$. 
	Define $(\mat{X'}, \mat{Y'}) \in \idcolsupp{\Omega}$ such that $(\submat{\mat{X'}}{J}, \submat{\mat{Y'}}{J}) = (\submat{\mat{X}}{J}, \submat{\mat{Y}}{J})$ and $(\matcol{\mat{X'}}{i}, \matcol{\mat{Y'}}{i}) = (\vctor{0}, \vctor{0})$ for $i \notin J$. 
	Since $\mat{X'} \transpose{\mat{Y'}} = \mat{X} \transpose{\mat{Y}} = \mat{X^*} \transpose{\mat{Y^*}}$, and $(\mat{X^*}, \mat{Y^*}) \in \unique(\idcolsupp{\Omega})$, we have $(\mat{X'}, \mat{Y'}) \sim (\mat{X^*}, \mat{Y^*})$. Fix $\mat{G} \in \genperm{r}$ such that $\mat{X^*} = \mat{X'} \mat{G}$ and $\transpose{\mat{Y^*}} = \mat{G}^{-1} \transpose{\mat{Y'}}$, and denote $\mat{P} := \supp(\mat{G})$.
	Then, $\supp(\mat{X^*}) = \supp(\mat{X'} \mat{G}) = \supp(\mat{X'}) \mat{P} \subseteq \leftsupp{S} \mat{P}$, and similarly, $\supp(\mat{Y^*}) \subseteq \rightsupp{S} \mat{P}$. By stability of $\Omega$, $(\leftsupp{S} \mat{P}, \rightsupp{S} \mat{P}) \in \Omega$. Therefore, since $(\mat{X^*}, \mat{Y^*}) \in \idcolsupp{\Omega} \cap \maxcolsupp{\Omega}$, we have $\colsupp(\leftsupp{S} \mat{P}) = \colsupp(\mat{X^*}) = \colsupp(\mat{Y^*}) = \colsupp(\rightsupp{S} \mat{P})$. But $\card(\colsupp(\mat{X'})) = \card(\colsupp(\mat{X^*}))$, because $(\mat{X'}, \mat{Y'}) \sim (\mat{X^*}, \mat{Y^*})$. 
	As $J = \colsupp(\mat{X'})$, we obtain 
	\begin{equation*}
		\begin{split}
			\card(J) = \card(\colsupp(\mat{X'})) &= \card(\colsupp(\mat{X^*})) \\
			&=  \card(\colsupp(\leftsupp{S} \mat{P})) = \card(\colsupp(\leftsupp{S})).
		\end{split}
	\end{equation*}
	Therefore, $J = \colsupp(\leftsupp{S})$, as $J \subseteq \colsupp(\leftsupp{S})$. This means $\mat{X'} = \mat{X}$. Similarly, we show that $\mat{Y'} = \mat{Y}$. In conclusion, $(\mat{X}, \mat{Y}) = (\mat{X'}, \mat{Y'}) \sim (\mat{X^*}, \mat{Y^*})$.
\end{proof}

\section{Proof of \Cref{lemma:eliminating-zero-col-right-id}}
\label{proof:lemma-eliminating-zero-col-right-id}

\begin{proof}
	Suppose $(\submat{\mat{X}}{J}, \submat{\mat{Y}}{J}) \in \unique(\{ \submat{\mat{X}}{J} \}\times \Sigma_{\signature{\Theta}{J}})$. Let $\mat{B} \in \Sigma_{\Theta}$ such that $\mat{X} \transpose{\mat{Y}} = \mat{X} \transpose{\mat{B}}$. This means $\submat{\mat{X}}{J} \transpose{\submat{\mat{Y}}{J}} = \submat{\mat{X}}{J} \transpose{\submat{\mat{B}}{J}}$. 
	By definition of the signature, $\submat{\mat{B}}{J} \in 
	\Sigma_{\signature{\Theta}{J}}$. Therefore, by assumption, $(\submat{\mat{X}}{J}, \submat{\mat{B}}{J}) \sim (\submat{\mat{X}}{J}, \submat{\mat{Y}}{J})$. 
	As the columns of $\mat{X}$, $\mat{Y}$ and $\mat{B}$ indexed by $\integerSet{r} \backslash J$ are all zero columns, we obtain $(\mat{X}, \mat{Y}) \sim (\mat{X}, \mat{B})$. This shows $(\mat{X}, \mat{Y}) \in \unique(\{ \mat{X} \} \times \Sigma_{\Theta})$.
	
	Conversely, suppose $(\mat{X}, \mat{Y}) \in \unique(\{ \mat{X} \} \times \Sigma_{\Theta})$. Let $\mat{B'} \in \Sigma_{\signature{\Theta}{J}}$ such that $\submat{\mat{X}}{J} \transpose{\submat{\mat{Y}}{J}} = \submat{\mat{X}}{J} \transpose{\mat{B'}}$. 
	Define $\mat{B}$ such that $\submat{\mat{B}}{J} = \mat{B'}$ and 	$\submat{\mat{B}}{\integerSet{r} \backslash J} = \mat{0}$. By definition of the signature, $\mat{B} \in \Sigma_{\Theta}$.
	Since $\colsupp(\mat{Y}) \subseteq J$, $\submat{\mat{X}}{J} \transpose{\submat{\mat{Y}}{J}} = \mat{X} \transpose{\mat{Y}}$. Similarly, since $\colsupp(\mat{B}) \subseteq J$, $\submat{\mat{X}}{J} \transpose{\mat{B'}} = \mat{X} \transpose{\mat{B}}$. Hence, $\mat{X} \transpose{\mat{Y}} = \mat{X} \transpose{\mat{B}}$, and by assumption, $(\mat{X}, \mat{B}) \sim (\mat{X}, \mat{Y})$. Fix $\mat{G} \in \mathcal{G}(\mat{X})$ such that $\transpose{\mat{B}} = \inverse{\mat{G}} \transpose{\mat{Y}}$. 
	The permutation matrix $\supp(\mat{G})$ only permutes nonzero columns of $\mat{X}$ with nonzero columns of $\mat{X}$. In other words, $J$ is stable by the permutation induced by $\supp(\mat{G})$. This means that the submatrix $\mat{G'} := (\matindex{\mat{G}}{k}{l})_{(k,l) \in J^2}$ verifies $\mat{G'} \in \mathcal{G}(\submat{\mat{X}}{J})$, and $\transpose{\mat{B'}} = \inverse{\mat{G'}} \transpose{\submat{\mat{Y}}{J}}$, which shows $(\submat{\mat{X}}{J}, \mat{B'}) \sim (\submat{\mat{X}}{J}, \submat{\mat{Y}}{J})$.
\end{proof}

\section{Proof of \Cref{thm:right-uniqueness-perm}}
\label{proof:right-uniqueness-perm}

We rely on the following lemma.

\begin{lemma}	
	\label{lemma:P-X-block-structure}
	Consider $\mat{X} = \begin{pmatrix}
		\vctorseq{\vctor{x}}{1} \transpose{\one}_{\card([1])} & \ldots &  \vctorseq{\vctor{x}}{K} \transpose{\one}_{\card([K])} 
	\end{pmatrix}$, where the sets of indices $[1], \ldots, [K]$ form a partition of $\integerSet{r}$. Then, a permutation matrix $\mat{P}$ leaves $\mat{X}$ invariant, in the sense that $\mat{P} \in \mathcal{P}(\mat{X})$, if, and only if, it is a product of $K$ permutation matrices $\mat{P}^{(k)}$ ($1 \leq k \leq K$) which leave respectively the block $\submat{\mat{X}}{[k]}$ invariant, in the sense that $\mat{P}^{(k)} \in \mathcal{P}(\submat{\mat{X}}{[k]})$.
\end{lemma}

\begin{proof}
	If $\mat{P} \in \mathcal{P}(\mat{X})$, then $\mat{P}$ cannot permute a column indexed by $i \in [k]$ to a column indexed by $i' \in [k']$ with $k' \neq k$, as the columns $\{  \vctorseq{\vctor{x}}{K} \}_{k=1}^K$ are all different. 
\end{proof}

\begin{remark}
	\label{eq:XY=XY'-iff-tXtY=tXtY'}
	Denoting $\mat{\underline{Y}'} := \begin{pmatrix}
		\submat{\mat{Y'}}{[1]} \one_{[1]} & \ldots & \submat{\mat{Y'}}{[K]} \one_{[K]}
	\end{pmatrix}$ for any right factor $\mat{Y'}$ with $r$ columns, we have: $\mat{X} \transpose{\mat{Y}} = \mat{X} \transpose{\mat{Y'}} \iff 
	\mat{\underline{X}} \transpose{ \mat{\underline{Y}} } =
	\mat{\underline{X}} 
	\transpose{ \mat{\underline{Y}'} }$.
\end{remark}

\begin{proof}[Proof of \Cref{thm:right-uniqueness-perm}]
	For sufficiency, suppose that conditions (i) and (ii) are verified, and let $\mat{Y'} \in \Sigma_{\Theta}$ such that $\mat{X} \transpose{\mat{Y'}} = \mat{X} \transpose{\mat{Y}}$. The goal is to show $(\mat{X}, \mat{Y'}) \sim (\mat{X}, \mat{Y})$.
	Denote $\mat{\underline{Y}'} := \begin{pmatrix}
		\submat{\mat{Y'}}{[1]} \one_{[1]} & \ldots & \submat{\mat{Y'}}{[K]} \one_{[K]}
	\end{pmatrix} \in \Sigma_{\tilde{\Theta}}$, where $\tilde{\Theta}$ is the fingerprint of $\Theta$ on $\{ [k] \}_{k=1}^K$. 
	Then, by \eqref{eq:XY=XY'-iff-tXtY=tXtY'}, $\mat{\underline{X}} \transpose{ \mat{\underline{Y}'}} = \mat{\underline{X}} \transpose{ \mat{\underline{Y}}}$. 
	By condition (i), $\mat{\underline{Y}'} = \mat{\underline{Y}}$. By fixing $k \in \integerSet{K}$, this implies  $\submat{\mat{Y'}}{[k]} \one_{[k]} = \submat{\mat{Y}}{[k]} \one_{[k]}$. 
	By condition (ii), there exists a permutation matrix $\mat{P}^{(k)}$ such that $\submat{\mat{Y'}}{[k]}  = \submat{\mat{Y}}{[k]} \mat{P}^{(k)}$. This is true for each $k$, so by \Cref{lemma:P-X-block-structure}, there exists $\mat{P} \in \mathcal{P}(\mat{X})$ such that $\mat{Y'} = \mat{Y} \mat{P}$, hence we conclude that $(\mat{X}, \mat{Y'}) \sim (\mat{X}, \mat{Y})$. 
	
	Suppose that condition (ii) is not verified. Fix $k \in \integerSet{K}$, and $\matseq{\mat{Y}}{k} \in \Sigma_{\signature{\Theta}{[k]}}$ such that $\submat{\mat{Y}}{[k]} \one_{[k]} = \matseq{\mat{Y}}{k} \one_{[k]}$, but $\matseq{\mat{Y}}{k} \neq \submat{\mat{Y}}{[k]} \mat{P}^{(k)}$ for each permutation matrix $\mat{P}^{(k)} \in \mathcal{P}(\submat{\mat{X}}{[k]})$. 
	Define $\mat{Y'} \in \Sigma_{\Theta}$ such that $\submat{\mat{Y'}}{[k]}  = \matseq{\mat{Y}}{k}$ and $\submat{\mat{Y'}}{[k']} = \submat{\mat{Y}}{[k']}$ for $k' \neq k$. 
	Then, by \eqref{eq:XY=XY'-iff-tXtY=tXtY'}, we obtain $\mat{X} \transpose{\mat{Y}} = \mat{X} \transpose{\mat{Y'}}$.
	Since $\matseq{\mat{Y}}{k} \neq \submat{\mat{Y}}{[k]} \mat{P}^{(k)}$ for each permutation matrix $\mat{P}^{(k)} \in \mathcal{P}(\submat{\mat{X}}{[k]})$, applying \Cref{lemma:P-X-block-structure} yields $\mat{Y'} \neq \mat{Y} \mat{P}$ for each $\mat{P} \in \mathcal{P}(\mat{X})$. 
	In conclusion, $(\mat{X}, \mat{Y'}) \not\sim_p (\mat{X}, \mat{Y})$, and $(\mat{X}, \mat{Y}) \notin \uniqueperm( \{ \mat{X} \} \times \Sigma_{\Theta})$.
	
	For necessity of condition (i), suppose that $(\mat{X}, \mat{Y}) \in \uniqueperm (\{ \mat{X} \} \times \Sigma_{\Theta})$, and let $\mat{\underline{Y}'} \in \Sigma_{\tilde{\Theta}}$ such that $\mat{\underline{X}} \transpose{\mat{\underline{Y}'}} =  \mat{\underline{X}} \transpose{ \mat{\underline{Y}}}$. We want to show $\mat{\underline{Y}'} = \mat{\underline{Y}}$.
	Fix $\mat{\underline{S}} \in \tilde{\Theta}$ such that $\mat{\underline{Y}'} \in \Sigma_{\mat{\underline{S}}}$. 
	By definition of the fingerprint, there exists $\mat{S} \in \Theta$ such that $\bigcup_{i \in [k]}{\matcol{\mat{S}}{i}} = \matcol{\mat{\underline{S}}}{k}$ ($1 \leq k \leq K$).
	Fix $k \in \integerSet{K}$. 
	Since $\supp(\matcol{\mat{\underline{Y}'}}{k}) \subseteq \matcol{\mat{\underline{S}}}{k} = \bigcup_{i \in [k]}{\matcol{\mat{S}}{i}}$, there exists $\matseq{\mat{Y}}{k} \in \Sigma_{\submat{\mat{S}}{[k]}}$ such that $\matcol{\mat{\underline{Y}'}}{k} = \matseq{\mat{Y}}{k} \one_{[k]}$. 
	Construct then $\mat{Y'} \in \Sigma_{\mat{S}}$ such that $\submat{\mat{Y'}}{[k]}  = \matseq{\mat{Y}}{k}$ for each $k \in \integerSet{K}$. Hence, $\mat{\underline{Y}'} = \begin{pmatrix}
		\matcol{\mat{\underline{Y}'}}{1} & \ldots & \matcol{\mat{\underline{Y}'}}{K}
	\end{pmatrix} = \begin{pmatrix}
		\submat{\mat{Y'}}{[1]} \one_{[1]} & \ldots & \submat{\mat{Y'}}{[K]} \one_{[K]}
	\end{pmatrix}$. By \eqref{eq:XY=XY'-iff-tXtY=tXtY'}, we obtain  $\mat{X} \transpose{\mat{Y}} = \mat{X} \transpose{\mat{Y'}}$. By assumption, $(\mat{X}, \mat{Y}) \sim_p (\mat{X}, \mat{Y'})$. 
	By \Cref{lemma:P-X-block-structure}, there exists a permutation matrix $\mat{P}^{(k)}$ such that $\submat{\mat{Y'}}{[k]}  = \submat{\mat{Y}}{[k]} \mat{P}^{(k)}$ ($1 \leq k \leq K$).
	In conclusion, for each $k \in \integerSet{K}$, $		\matcol{\mat{\underline{Y}'}}{k} = \submat{\mat{Y'}}{[k]} \one_{[k]} = (\submat{\mat{Y}}{[k]} \mat{P}^{(k)}) \one_{[k]} = \submat{\mat{Y}}{[k]} \one_{[k]} = \matcol{\mat{\underline{Y}}}{k}$, because the result of a sum does not depend on the summation order. This yields $\mat{\underline{Y}'} = \mat{\underline{Y}}$, which shows condition (i).
\end{proof}

\section{Proof of \Cref{prop:right-uniqueness-linear-inv-pb-matrix}}
\label{proof:prop-right-uniqueness-linear-inv-pb-matrix}

The proof of this proposition relies on the simpler case where the right factor is reduced to a vector. 

\begin{lemma}
	\label{lemma:uniqueness-reconstruction-vector}
	Let $\bm{A} \in \mathbb{C}^{m \times N}$, and $\theta \subseteq \mathbb{B}^N$ be a family of allowed vector supports. Let $\mat{x^*} \in \Sigma_{\theta}$ be an allowed vector. Then, uniqueness in the linear inverse problem $\mat{A} \vctor{x} = \mat{A} \mat{x^*}$ for $\vctor{x} \in \Sigma_{\theta}$, namely
	\begin{displaymath}
		\forall \vctor{x} \in \Sigma_{\theta}, \; \mat{A} \vctor{x} = \mat{A} \mat{x^*} \implies \vctor{x} = \mat{x^*},
	\end{displaymath}
	holds if, and only if, both of the following conditions hold:
	\begin{enumerate}[label=(\roman*)]
		\item For all $\vctor{s} \in \theta$ such that $\supp(\mat{x^*}) \subseteq \vctor{s}$, the columns of $\submat{\mat{A}}{\vctor{s}}$ are linearly independent.
		\item For all $\vctor{s} \in \theta$ such that $\mat{A} \mat{x^*} \in \im(\submat{\mat{A}}{\vctor{s}})$, the support of $\mat{x^*}$ is included in $\vctor{s}$.
	\end{enumerate}
\end{lemma}

\begin{proof}
	For sufficiency, let $\vctor{x} \in \Sigma_{\theta}$ such that $\mat{A} \vctor{x} = \mat{A} \mat{x^*}$. Fix $\vctor{s} \in \theta$ such that $\vctor{x} \in \Sigma_{\vctor{s}}$. Then, $\mat{A} \mat{x^*} \in \im(\submat{\mat{A}}{\vctor{s}})$, so by condition (ii), the support of $\mat{x^*}$ is included in $\vctor{s}$. 
	By condition (i), we obtain $\mat{x^*} = \vctor{x}$.
	For necessity, suppose that $\mat{x^*}$ is the unique solution to the linear inverse problem $\mat{A} \vctor{x} = \mat{A} \mat{x^*}$ for $\vctor{x} \in \Sigma_{\theta}$. Let $\vctor{s} \in \theta$ such that $\supp(\mat{x^*}) \subseteq \vctor{s}$. 
	Then, by assumption, the application $\vctor{x} \mapsto \mat{A} \vctor{x}$ defined on $\Sigma_{\mat{s}}$ is injective, so $\ker(\submat{\mat{A}}{\vctor{s}}) = \{ \vctor{0} \}$.
	Now, let $\vctor{s} \in \theta$ such that $\mat{A} \mat{x^*} \in \im(\submat{\mat{A}}{\vctor{s}})$. Fix $\vctor{x} \in \Sigma_{\vctor{s}}$ such that $\mat{A} \mat{x^*} = \mat{A} \vctor{x}$. Then, by assumption, $\vctor{x} = \mat{x^*}$, which yields $\supp(\mat{x^*}) \subseteq \vctor{s}$.
	
\end{proof}

\begin{proof}[Proof of \Cref{prop:right-uniqueness-linear-inv-pb-matrix}]
	We simply apply \Cref{lemma:uniqueness-reconstruction-vector} for the choices $\bm{A} = \identity{n} \otimes \mat{X} \in \mathbb{C}^{nm \times nr}$, $\mat{x^*} = \vectorization(\transpose{\mat{Y}}) \in \mathbb{C}^{nr}$ and $\theta = \{ \vectorization(\transpose{\mat{S}}) \; | \; \mat{S} \in \Theta \} \subseteq \mathbb{B}^{nr}$, where $\vectorization(\cdot)$ is the vectorization operator, namely:
	\begin{displaymath}
		\forall \mat{Y'} \in \mathbb{C}^{r \times n}, \quad \vectorization(\mat{Y'}) = \begin{pmatrix}
			\matcol{\mat{Y'}}{1} \\ \vdots \\ \matcol{\mat{Y'}}{n}
		\end{pmatrix} \in \mathbb{C}^{nr}.
	\end{displaymath}
\end{proof}

\section{Proof of \Cref{prop:right-uniqueness-perm-vector-of-ones}}
\label{proof:prop-right-uniqueness-perm-vector-of-ones}

The following proof shows that the two conditions of the proposition are equivalent to $(\mat{Y}, \transpose{\one}) \in \unique( \Sigma_{\Theta} \times \{ \transpose{\one} \})$, which, as mentioned in the remark of \Cref{thm:right-uniqueness-perm}, is equivalent to $( \transpose{\one},\mat{Y}) \in \unique( \{ \transpose{\one} \} \times \Sigma_{\Theta})$.

\begin{proof}
	Suppose that condition (i) and (ii) are verified. Let $\mat{Y'} \in \Sigma_{\Theta}$ such that $\mat{Y'} \one = \mat{Y} \one$. We want to show $(\mat{Y}, \transpose{\one}) \sim_p (\mat{Y'}, \transpose{\one})$. Fix $\mat{S} \in \Theta$ such that $\supp(\mat{Y'}) \subseteq \mat{S}$. Then, $I = \supp(\mat{Y} \one) = \supp(\mat{Y'} \one) \subseteq \bigcup_{i=1}^r \matcol{\supp(\mat{Y'})}{i} \subseteq \bigcup_{i=1}^r \matcol{\mat{S}}{i}$, so by the first part of condition (ii), the columns $( \matindex{\mat{S}}{I^c}{j} )_{j=1}^r$ are pairwise disjoint. 
	Consequently, since $\sum_{j=1}^r \matindex{\mat{Y'}}{k}{j} = 0$ for each $k \in I^c$, the rows of $\mat{Y'}$ indexed by $k \in I^c$ are zero rows. 
	In addition, by the second part of condition (ii), the columns $( \matindex{\mat{S}}{I}{j} )_{j=1}^r$ and $(\matindex{\supp(\mat{Y})}{I}{j})_{j=1}^r$ are equal, up to a permutation of indices $j$. By condition (i), this means that the columns $\{ \matindex{\mat{S}}{I}{j} \}_{j=1}^r$ are pairwise disjoint. Thus, the entries of $\mat{Y'}$ are directly identified from its sum of columns $\mat{Y'} \one = \mat{Y} \one$, hence, $(\mat{Y}, \transpose{\one}) \sim_p (\mat{Y'}, \transpose{\one})$. 
	
	Conversely, suppose that condition (i) is not verified. Fix two different indices $i, j$ such that $\matcol{\supp(\mat{Y})}{i} \cap \matcol{\supp(\mat{Y})}{j} \neq \emptyset$. Fix $k$ an index in this intersection. Define $\mat{Y'} \in \Sigma_{\Theta}$ such that:
	\begin{displaymath}
		\forall l \in \integerSet{r}, \quad \matcol{\mat{Y'}}{l} = \begin{cases}
			\matcol{\mat{Y}}{i} + \lambda \basis{k} & \text{if $l=i$} \\
			\matcol{\mat{Y}}{j} - \lambda \basis{k} & \text{if $l=j$} \\
			\matcol{\mat{Y}}{l} & \text{otherwise} \\
		\end{cases},
	\end{displaymath}
	where $\lambda$ is a scalar different from $\matindex{\mat{Y}}{k}{l} - \matindex{\mat{Y}}{k}{i}$ for each $l \in \integerSet{r}$. Then, one easily verifies that $\mat{Y} \one = \mat{Y'} \one$. 
	However, $\matcol{\mat{Y'}}{i} \neq \matcol{\mat{Y}}{l}$ for each $l \in \integerSet{r}$ by construction, thus, $\matcol{\mat{Y'}}{i} \neq \matcol{(\mat{Y} \mat{P})}{i}$ for each $\mat{P} \in \permutation{r}$, and necessarily, $(\mat{Y}, \transpose{\one}) \not\sim_p (\mat{Y'}, \transpose{\one})$. 
	Suppose now that the first part of condition (ii) is not verified. Fix $\mat{S} \in \Theta$ such that $I := \supp(\mat{Y} \one) \subseteq \bigcup_{i=1}^r \matcol{\mat{S}}{i}$, and two different indices $i \neq j$ such that $\matindex{\mat{S}}{I^c}{i} \cap \matindex{\mat{S}}{I^c}{j} \neq \emptyset$. 
	Fix $k$ in this intersection. Then, we can construct similarly $\mat{Y'} \in \Sigma_{\mat{S}}$ such that $\mat{Y'} \one = \mat{Y} \one$, but $(\mat{Y}, \transpose{\one}) \not\sim_p (\mat{Y'}, \transpose{\one})$.
	
	Finally, suppose that the second part of condition (iii) is not verified. Fix $\mat{S} \in \Theta$ such that $I := \supp(\mat{Y} \one) \subseteq \bigcup_{i=1}^r \matcol{\mat{S}}{i}$, and $(\matindex{\mat{S}}{I}{j})_{j=1}^r \neq (\matindex{\supp(\mat{Y})}{I}{\sigma(j)})_{j=1}^r$ for any permutation $\sigma \in \mathfrak{S}(\integerSet{r})$. Construct $\mat{Y'} \in \Sigma_{\mat{S}}$ in the following way:
	\begin{displaymath}
		\forall (k,l) \in \integerSet{n} \times \integerSet{r}, \quad \matindex{\mat{Y'}}{k}{l} = \begin{cases}
			\frac{\vctorindex{(\mat{Y}\one)}{k}}{ \| \matcol{(\transpose{\mat{S}})}{k} \|_0} & \text{if $k \in I$ and $l \in \matcol{(\transpose{\mat{S}})}{k}$} \\
			0 & \text{otherwise}
		\end{cases},
	\end{displaymath}
	which is well defined because since $I \subseteq \bigcup_{i=1}^r \matcol{\mat{S}}{i}$, we have $\| \matcol{(\transpose{\mat{S}})}{k} \|_0 > 0$ for $k \in I$. Then, one verifies that $\mat{Y'} \one = \mat{Y} \one$. However, by construction, $(\matindex{\supp(\mat{Y'})}{I}{j})_{j=1}^r = (\matindex{\mat{S}}{I}{j})_{j=1}^r \neq (\matindex{\supp(\mat{Y})}{I}{\sigma(j)})_{j=1}^r$ for any permutation $\sigma \in \mathfrak{S}(\integerSet{r})$. This means that $(\mat{Y}, \transpose{\one}) \not\sim_p (\mat{Y'}, \transpose{\one})$. 
\end{proof}

\section{Proof of \Cref{lemma:varphi-properties}}
\label{proof:lemma-varphi-properties}

\begin{proof}
	Consider $(\mat{X}, \mat{Y}) \sim (\mat{X'}, \mat{Y'})$. By definition, there exists $\mat{D} \in \diagonal{r}$ and $\mat{P} \in \permutation{r}$ such that $\mat{X'} = \bm{XDP}$ and $\mat{Y'} = \mat{Y} \inverse{\mat{D}} \mat{P}$. Since $\varphi(\bm{XD}, \inverse{\mat{D}} \mat{Y}) = \varphi(\mat{X}, \mat{Y})$, we immediately obtain $\varphi(\mat{X}, \mat{Y}) \sim \varphi(\mat{X'}, \mat{Y'})$.
	For surjectivity, let $\tuplerkone{C} \in \Gamma_{\Omega}$. By definition, there exists $\pair{S} \in \Omega$ such that $\tuplerkone{C} \in \Gamma_{\tuplerkone{S}}$, where $\tuplerkone{S} = \varphi(\pair{S})$. 
	Fix $i \in \integerSet{r}$. 
	If $\rankone{C}{i} = \mat{0}$, define $\vctorseq{\vctor{x}}{i} = \vctor{0}$ and $\vctorseq{\vctor{y}}{i} = \vctor{0}$. Otherwise, $\rankone{C}{i}$ is a rank-one matrix, and since $\supp(\rankone{C}{i}) \subseteq \matcol{\leftsupp{S}}{i} \times \matcol{\rightsupp{S}}{i}$ (viewing the columns $\matcol{\leftsupp{S}}{i}$ and $\matcol{\rightsupp{S}}{i}$ as subset of indices), we can find $(\vctorseq{\vctor{x}}{i}, \vctorseq{\vctor{y}}{i}) \in \Sigma_{\matcol{\leftsupp{S}}{i}} \times \Sigma_{\matcol{\rightsupp{S}}{i}}$ such that $\rankone{C}{i} = \vctorseq{\vctor{x}}{i} \transpose{\vctorseq{\vctor{y}}{i}}$, with $\vctorseq{\vctor{x}}{i} \neq \vctor{0}$, $\vctorseq{\vctor{y}}{i} \neq \vctor{0}$, by \Cref{lemma:id-from-outerproduct}. Then, the matrices $\mat{X} := (\vctorseq{\vctor{x}}{i})_{i \in \integerSet{r}}$, $\mat{Y} := (\vctorseq{\vctor{y}}{i})_{i \in \integerSet{r}}$ verifies $(\mat{X}, \mat{Y}) \in \idcolsupp{\Omega}$, and $\varphi_{\Omega}(\mat{X}, \mat{Y}) = \tuplerkone{C}$ by construction.
	For injectivity up to equivalences, let $(\mat{X}, \mat{Y}), (\mat{X'}, \mat{Y'}) \in \idcolsupp{\Omega}$ such that $\varphi_{\Omega}(\mat{X}, \mat{Y}) \sim \varphi_{\Omega}(\mat{X'}, \mat{Y'})$. 
	Fix $\sigma \in \mathfrak{S}(\integerSet{r})$ such that $\matcol{\mat{X}}{\sigma(i)} \transpose{\matcol{\mat{Y}}{\sigma(i)}} = \matcol{\mat{X'}}{i} \transpose{\matcol{\mat{Y'}}{i}}$ ($1 \leq i \leq r$). 
	If $\matcol{\mat{X'}}{i} \transpose{\matcol{\mat{Y'}}{i}} = \mat{0}$, then since $(\mat{X}, \mat{Y}), (\mat{X'}, \mat{Y'}) \in \idcolsupp{\Omega}$, we have $\matcol{\mat{X}}{\sigma(i)} = \matcol{\mat{X'}}{i} = \vctor{0}$ and $\transpose{\matcol{\mat{Y}}{\sigma(i)}} = \transpose{\matcol{\mat{Y'}}{i}} = \vctor{0}$ by \Cref{lemma:id-from-outerproduct}. 
	Otherwise, $\matcol{\mat{X'}}{i} \transpose{\matcol{\mat{Y'}}{i}} \neq \mat{0}$, and, by \Cref{lemma:id-from-outerproduct}, there exists $\lambda_i \neq 0$ such that $\matcol{\mat{X'}}{i} = \lambda_i \matcol{\mat{X}}{\sigma(i)}$ and $\matcol{\mat{Y'}}{i} = \frac{1}{\lambda_i} \matcol{\mat{Y}}{\sigma(i)}$. Thus, $(\mat{X}, \mat{Y}) \sim (\mat{X'}, \mat{Y'})$.
\end{proof}

\section{Proof of \Cref{prop:id-support-and-entries}}
\label{proof:prop-id-support-and-entries}

\begin{proof}
	For necessity, suppose $\tuplerkone{C} \in \unique(\Gamma_{\Omega})$. By analogy with \Cref{lemma:nec-conditions-EMF}, condition (ii) is verified.
	Now, let $\tuplerkone{S} \in \varphi(\Omega)$ such that $\mathcal{A}(\tuplerkone{C}) \in \mathcal{A}(\Gamma_{\tuplerkone{S}})$. Fix $\tuplerkone{C}' \in \Gamma_{\tuplerkone{S}}$ such that $\mathcal{A}(\tuplerkone{C'}) = \mathcal{A}(\tuplerkone{C})$. By assumption, $\tuplerkone{C} \sim \tuplerkone{C}'$. Fix $\sigma \in \mathfrak{S}(\integerSet{r})$ such that $\rankone{C}{i} = \rankone{C'}{\sigma(i)} \in \Sigma_{\rankone{S}{\sigma(i)}}$ for all $i \in \integerSet{r}$. This yields condition (i).
	For sufficiency, suppose that condition (i) and (ii) are verified, and let $\tuplerkone{C'} \in \Gamma_{\Omega}$ such that $\mathcal{A}(\tuplerkone{C}) = \mathcal{A}(\tuplerkone{C'})$. 
	We want to show $\tuplerkone{C'} \sim \tuplerkone{C}$.
	Fix $\tuplerkone{S'} \in \varphi(\Omega)$ such that $\tuplerkone{C'} \in \Gamma_{\tuplerkone{S'}}$. Then by construction, $\mathcal{A}(\tuplerkone{C}) \in \mathcal{A}(\Gamma_{\tuplerkone{S'}})$, so by condition (i), fix $\sigma \in \mathfrak{S}(\integerSet{r})$ such that $\supp(\rankone{C}{i}) \subseteq \rankone{S'}{\sigma(i)}$ ($1 \leq i \leq r$). In other words, defining $\tuplerkone{S} := (\rankone{S'}{\sigma(i)})_{i=1}^r$, we have $\tuplerkone{C} \in \Gamma_{\tuplerkone{S}}$. But $\Omega$ is stable by permutation, so $\tuplerkone{S} \in \varphi(\Omega)$. 
	By definition, $\tuplerkone{C''} := (\rankone{C'}{\sigma(i)})$ belongs to $\Gamma_{\tuplerkone{S}}$. As $\mathcal{A}$ is invariant to permutation, $\mathcal{A}(\tuplerkone{C''}) = \mathcal{A}(\tuplerkone{C'}) = \mathcal{A}(\tuplerkone{C})$. By condition (ii), $\tuplerkone{C} \in \unique(\Gamma_{\tuplerkone{S}})$, so $\tuplerkone{C''} \sim \tuplerkone{C}$. As $\tuplerkone{C''} \sim \tuplerkone{C'}$, we obtain $\tuplerkone{C'} \sim \tuplerkone{C}$.
\end{proof}

\section{Proof of \Cref{cor:equivalence-EMD-fixed-support}}
\label{proof:cor-equivalence-EMD-fixed-support}

In order to prove this corollary, we have to establish the following result. Denote the equivalence class of $\pair{S} \in \mathbb{B}^{m \times r} \times \mathbb{B}^{n \times r}$ with respect to permutation equivalence (see \Cref{def:puniqueness}) as:

\begin{displaymath}
	[\pair{S}] := \{ \pair{S'} \in \mathbb{B}^{m \times r} \times \mathbb{B}^{n \times r} \; | \; \pair{S'} \sim_p \pair{S} \}.
\end{displaymath}

\begin{lemma}
	\label{lemma:unique-equivalence-class}
	For any fixed pair of supports $\pair{S}$, $\unique(\Sigma_{[\pair{S}]}) \cap \Sigma_{\pair{S}} = \unique(\Sigma_{\pair{S}})$. Similarly, denoting $\tuplerkone{S} := \varphi(\pair{S})$, $\unique(\Gamma_{[\pair{S}]}) \cap \Gamma_{\tuplerkone{S}} = \unique(\Gamma_{\tuplerkone{S}})$.
\end{lemma}

\begin{proof}
	The inclusion $\unique(\Sigma_{[\pair{S}]}) \cap \Sigma_{\pair{S}} \subseteq \unique(\Sigma_{\pair{S}})$ is an application of \Cref{lemma:nec-conditions-EMF}. For the inverse inclusion, let $(\mat{X}, \mat{Y}) \in \unique(\Sigma_{\pair{S}})$, and $(\mat{X'}, \mat{Y'}) \in \Sigma_{[\pair{S}]}$ such that $\mat{X'} \transpose{\mat{Y'}} = \mat{X} \transpose{\mat{Y}}$. Fix $\mat{P} \in \permutation{r}$ such that $(\mat{X'}, \mat{Y'}) \in \Sigma_{(\leftsupp{S} \mat{P}, \rightsupp{S} \mat{P})}$. This means that $(\mat{X'} \inverse{\mat{P}}, \mat{Y'} \inverse{\mat{P}}) \in \Sigma_{\pair{S}}$. Since $(\mat{X}, \mat{Y}) \in \unique(\Sigma_{\pair{S}})$, we have $(\mat{X'} \inverse{\mat{P}}, \mat{Y'} \inverse{\mat{P}}) \sim (\mat{X}, \mat{Y})$, hence, $(\mat{X'}, \mat{Y'}) \sim (\mat{X}, \mat{Y})$. The proof is similar for the equality $\unique(\Gamma_{[\pair{S}]}) \cap \Gamma_{\tuplerkone{S}} = \unique(\Gamma_{\tuplerkone{S}})$.
\end{proof}

\begin{proof}[Proof of \Cref{cor:equivalence-EMD-fixed-support}]
	Suppose $(\mat{X}, \mat{Y}) \in \unique(\Sigma_{\pair{S}})$. By \Cref{lemma:unique-equivalence-class}, $(\mat{X}, \mat{Y}) \in \unique(\Sigma_{[\pair{S}]})$, so by applying \Cref{thm:equivalence-with-EMD} to the family $[\pair{S}]$ which is stable by permutation, we obtain $\varphi(\mat{X}, \mat{Y}) \in \unique(\Gamma_{[\pair{S}]})$ and $(\mat{X}, \mat{Y}) \in \idcolsupp{[\pair{S}]} \cap \maxcolsupp{[\pair{S}]}$. One the one hand, by \Cref{lemma:unique-equivalence-class}, $\varphi(\mat{X}, \mat{Y}) \in \unique(\Gamma_{\tuplerkone{S}})$. On the other hand, since $\idcolsupp{[\pair{S}]} \cap \maxcolsupp{[\pair{S}]} \cap \Sigma_{\pair{S}} = \idcolsupp{\pair{S}} \cap \maxcolsupp{\pair{S}}$, and $(\mat{X}, \mat{Y}) \in \Sigma_{\pair{S}}$, we obtain $(\mat{X}, \mat{Y}) \in \idcolsupp{\pair{S}} \cap \maxcolsupp{\pair{S}}$. 
	
	Conversely, suppose $\varphi(\mat{X}, \mat{Y}) \in \unique(\Gamma_{\tuplerkone{S}}) \text{ and } (\mat{X}, \mat{Y}) \in \idcolsupp{\pair{S}} \cap \maxcolsupp{\pair{S}}$.
	By \Cref{lemma:unique-equivalence-class}, $\varphi(\mat{X}, \mat{Y}) \in \unique(\Gamma_{[\pair{S}]})$. Moreover, $\idcolsupp{\pair{S}} \subseteq \idcolsupp{[\pair{S}]}$, so $(\mat{X}, \mat{Y}) \in \idcolsupp{[\pair{S}]}$. We now show that $(\mat{X}, \mat{Y}) \in \maxcolsupp{[\pair{S}]}$. Let $\pair{S'} \sim_p \pair{S}$ such that $(\mat{X}, \mat{Y}) \in \Sigma_{\pair{S'}}$. Fix $\mat{P} \in \permutation{r}$ such that $\pair{S'} = (\leftsupp{S} \mat{P}, \rightsupp{S} \mat{P})$. 
	We have $\supp(\mat{X}) \subseteq \leftsupp{S'}$. Since $(\mat{X}, \mat{Y}) \in \maxcolsupp{\pair{S}}$, $\colsupp(\mat{X}) = \colsupp(\leftsupp{S})$. Hence, $\colsupp(\leftsupp{S}) \subseteq \colsupp(\leftsupp{S'})$. But $\pair{S'} \sim_p \pair{S}$, so $\colsupp(\leftsupp{S})$ and $\colsupp(\leftsupp{S'})$ have the same cardinality. Therefore, $\colsupp(\leftsupp{S'}) = \colsupp(\leftsupp{S}) = \colsupp(\mat{X})$. Similarly, $\colsupp(\rightsupp{S'}) = \colsupp(\rightsupp{S}) = \colsupp(\mat{Y})$. This shows $(\mat{X}, \mat{Y}) \in \maxcolsupp{[\pair{S}]}$. By \Cref{thm:equivalence-with-EMD}, we conclude that $(\mat{X}, \mat{Y}) \in \unique(\Sigma_{[\pair{S}]})$, which means $(\mat{X}, \mat{Y}) \in \unique(\Sigma_{\pair{S}})$ by \Cref{lemma:unique-equivalence-class}.
\end{proof}

\section{Proof of \Cref{prop:simple-SC-identify-supports}}
\label{proof:prop-simple-SC-identify-supports}

\begin{proof} 
	Consider $\tuplerkone{S} \in \varphi(\Omega)$ a tuple of rank-one supports such that $\mat{Z} \in \mathcal{A}(\Gamma_{\tuplerkone{S}})$, and $\tuplerkone{C'} \in \Gamma_{\tuplerkone{S}}$ a tuple of rank-one matrices with these supports such that $\mathcal{A}(\mathcal{C'}) = \mat{Z}$. By condition (i), the supports in the tuple $\tuplerkone{S'} := ( \supp(\rankone{C'}{i}) )_{i=1}^r$ are pairwise disjoint. 
	Therefore, $\supp(\mat{Z}) = \supp(\sum_{i=1}^r \rankone{C'}{i}) = \bigcup_{i=1}^r \supp(\rankone{C'}{i}) = \bigcup_{i=1}^r \rankone{S'}{i}$. In other words, $\{ \rankone{S'}{i} \}_{i=1}^r$ is a partition of $\supp(\mat{Z})$. As a consequence, $(\matindex{\mat{Z}}{k}{l})_{(k,l) \in \rankone{S'}{i}} = (\matindex{\rankone{C'}{i}}{k}{l})_{(k,l) \in \rankone{S'}{i}}$ ($1 \leq i \leq r$), which by definition is of rank at most one. Similarly, $(\supp(\rankone{C}{i}))_{i=1}^r$ is also a partition of $\supp(\mat{Z})$, and the submatrices $(\matindex{\mat{Z}}{k}{l})_{(k,l) \in \supp(\rankone{C}{i})} = (\matindex{\rankone{C}{i}}{k}{l})_{(k,l) \in \supp(\rankone{C}{i})}$ are also of rank at most one.
	But $\tuplerkone{S'} \in \varphi^{\mathrm{cp}}(\Omega)$ because $\rankone{S'}{i} = \supp(\rankone{C'}{i}) \subseteq \rankone{S}{i}$ ($1 \leq i \leq r$), and $(\supp(\rankone{C}{i}))_{i=1}^r \in \varphi^{\mathrm{cp}}(\Omega)$ because $\tuplerkone{C} \in \Gamma_{\Omega}$.
	Hence, by condition (ii), $\tuplerkone{S'} \sim (\supp(\rankone{C}{i}))_{i=1}^r$, which shows the existence of $\sigma \in \mathfrak{S}(\integerSet{r})$ such that $\supp(\rankone{C}{i}) = \rankone{S'}{\sigma(i)} = \supp(\rankone{C'}{\sigma(i)}) \subseteq \rankone{S}{\sigma(i)}$. This precisely yields condition (i) of \Cref{prop:id-support-and-entries}.
\end{proof}

\section{Proof of \Cref{thm:NC-fixed-support-id}}
\label{proof:thm-NC-fixed-support-id}

The proof of the theorem is composed of the four following steps.

\subsection{Equivalence with fixed-support identifiability in EMF}
Given an $r$-tuple of rank-one supports $\tuplerkone{S}$, a preliminary necessary condition of $\tuplerkone{C} \in \unique(\Gamma_{\tuplerkone{S}})$ for a given $\tuplerkone{C} \in \Gamma_{\tuplerkone{S}}$ is based on the non-degeneration properties of \cref{section:non-degen-prop}. 

\begin{lemma}
	\label{lemma:preliminary-NC-fixed-support-id}
	Consider $\pair{S}$ a pair of supports, $\tuplerkone{S} = \varphi(\pair{S})$, and $\tuplerkone{C} \in \Gamma_{\tuplerkone{S}}$. Suppose $\tuplerkone{C} \in \unique(\Gamma_{\tuplerkone{S}})$. Then, for any $(\mat{X}, \mat{Y}) \in \Sigma_{\pair{S}}$ such that $\varphi(\mat{X}, \mat{Y}) = \tuplerkone{C}$, $(\mat{X}, \mat{Y}) \in \idcolsupp{\pair{S}} \cap \maxcolsupp{\pair{S}}$, or in other words, $\colsupp(\leftsupp{S}) = \colsupp(\mat{X}) = \colsupp(\mat{Y}) = \colsupp(\rightsupp{S})$.
\end{lemma}

\begin{proof}
	Suppose there exists $(\mat{X}, \mat{Y}) \in \Sigma_{\pair{S}}$ such that $\tuplerkone{C} = \varphi(\mat{X}, \mat{Y})$ but $(\mat{X}, \mat{Y}) \notin \idcolsupp{\pair{S}} \cap \maxcolsupp{\pair{S}}$. As $\idcolsupp{[\pair{S}]} \cap \maxcolsupp{[\pair{S}]} \cap \Sigma_{\pair{S}} = \idcolsupp{\pair{S}} \cap \maxcolsupp{\pair{S}}$, this means that $(\mat{X}, \mat{Y}) \notin \idcolsupp{[\pair{S}]} \cap \maxcolsupp{[\pair{S}]}$. 
	By \Cref{lemma:identical-colsupp,lemma:maximal-colsupp}, $(\mat{X}, \mat{Y}) \notin \unique(\Sigma_{[\pair{S}]})$. Then, by \Cref{lemma:unique-equivalence-class}, $(\mat{X}, \mat{Y}) \notin \unique(\Sigma_{\pair{S}})$, which means, by \Cref{cor:equivalence-EMD-fixed-support}, that $\tuplerkone{C} = \varphi(\mat{X}, \mat{Y}) \notin \unique(\Gamma_{\tuplerkone{S}})$.
\end{proof}

Consequently, thanks to \Cref{cor:equivalence-EMD-fixed-support}, the necessary condition of \Cref{lemma:preliminary-NC-fixed-support-id} is equivalent to the one in the following lemma.

\begin{lemma}
	Suppose $\tuplerkone{C} \in \unique(\Gamma_{\tuplerkone{S}})$. Then, for any $(\mat{X}, \mat{Y}) \in \Sigma_{\pair{S}}$ such that $\varphi(\mat{X}, \mat{Y}) = \tuplerkone{C}$, $(\mat{X}, \mat{Y}) \in \unique(\Sigma_{\pair{S}})$. The converse is also true.
\end{lemma} 

\begin{proof}
	Let $(\mat{X}, \mat{Y}) \in \Sigma_{\pair{S}}$ such that $\tuplerkone{C} = \varphi(\mat{X}, \mat{Y})$. By \Cref{lemma:preliminary-NC-fixed-support-id}, this means that $(\mat{X}, \mat{Y}) \in \idcolsupp{\pair{S}} \cap \maxcolsupp{\pair{S}}$. But by assumption, $\varphi(\mat{X}, \mat{Y}) = \tuplerkone{C} \in \unique(\Gamma_{\tuplerkone{S}})$. By \Cref{cor:equivalence-EMD-fixed-support}, $(\mat{X}, \mat{Y}) \in \unique(\Sigma_{\pair{S}})$. The converse is true by \Cref{cor:equivalence-EMD-fixed-support}.
\end{proof}

\subsection{Fixing the left factor}  
\label{section:step2-sm11}
As a consequence, we obtain the following necessary condition for $\tuplerkone{C} \in \unique(\Gamma_{\tuplerkone{S}})$ by fixing the left factor.

\begin{lemma}
	\label{lemma:fixed-support-right-id-NC}
	Suppose $\tuplerkone{C} \in \unique(\Gamma_{\tuplerkone{S}})$. Then, for any $(\mat{X}, \mat{Y}) \in \Sigma_{\pair{S}}$ such that $\varphi(\mat{X}, \mat{Y}) = \tuplerkone{C}$, $(\mat{X}, \mat{Y}) \in \unique(\{ \mat{X} \} \times \Sigma_{\rightsupp{S}})$. 
\end{lemma}

\begin{proof}
	We apply \Cref{lemma:nec-conditions-EMF} with $\Sigma' := \{ \mat{X} \} \times \Sigma_{\rightsupp{S}} \subseteq \Sigma_{\pair{S}} =:  \Sigma$, 
\end{proof}

We leverage the analysis of \cref{section:right-identifiability} to provide the following necessary conditions for $\tuplerkone{C} \in \unique(\Gamma_{\tuplerkone{S}})$.

\begin{lemma}
	\label{lemma:appication-analysis-right-id-fixed-supp}
	Suppose $\tuplerkone{C} \in \unique(\Gamma_{\tuplerkone{S}})$. Then, using the same notations introduced in \Cref{thm:NC-fixed-support-id} for any $(\mat{X}, \mat{Y}) \in \Sigma_{\pair{S}}$ such that $\varphi(\mat{X}, \mat{Y}) = \tuplerkone{C}$, we have:
	\begin{enumerate}[label=(\roman*)]
		\item for each $k \in \integerSet{K}$, the columns of the submatrix $\submat{\rightsupp{S}}{[k]}$ are pairwise disjoint;
		\item for each $j \in \integerSet{n}$, the subset of columns $\{ \vctor{x}^{(k)} \; : \; k \in \integerSet{K}, \, \supp(\rightsupp{S}_{[k]}\one_{[k]}) \ni j \}$ is linearly independent.
	\end{enumerate}
\end{lemma}

\begin{proof}
	Let $(\mat{X}, \mat{Y}) \in \Sigma_{\pair{S}}$ such that $\varphi(\mat{X}, \mat{Y}) = \tuplerkone{C}$. 
	By \Cref{lemma:fixed-support-right-id-NC},  $(\mat{X}, \mat{Y}) \in \unique(\{ \mat{X} \} \times \Sigma_{\rightsupp{S}})$.
	By \Cref{lemma:right-id-first-lemma}, we get $\colsupp(\mat{Y}) \subseteq \colsupp(\mat{X}) := J$. By \Cref{lemma:eliminating-zero-col-right-id}, this yields $(\submat{\mat{X}}{J}, \submat{\mat{Y}}{J}) \in \unique(\{ \submat{\mat{X}}{J} \} \times \Sigma_{\rightsupp{S}})$. 
	From \Cref{prop:right-id-up-permutation}, we obtain $(\submat{\mat{X}}{J} \mat{N}, \submat{\mat{Y}}{J} \inverse{\mat{N}}) \in \uniqueperm(\{ \submat{\mat{X}}{J} \mat{N} \} \times \Sigma_{\rightsupp{S}})$, where $\mat{N}$ is the diagonal matrix which normalizes the columns of $\submat{\mat{X}}{J}$, in a such way that the first nonzero entry of each normalized nonzero column is 1. 
	Hence, by applying \Cref{thm:right-uniqueness-perm,prop:right-uniqueness-linear-inv-pb-matrix,prop:right-uniqueness-perm-vector-of-ones} to the fixed left factor $\submat{\mat{X}}{J} \mat{N}$ which has no zero column, and to the family of right supports reduced to the singleton $\{ \submat{\rightsupp{S}}{J} \}$, we obtain the claimed necessary conditions of the proposition.
\end{proof}

\subsection{Fixing the right factor}
Because of the following result, the analysis when fixing the right factor is the same as the one when fixing the left factor.
\begin{lemma}
	\label{lemma:equivalence-fixing-right-factor}
	For any pair of supports $(\leftsupp{S}, \rightsupp{S})$, we have:
	\begin{displaymath}
		(\mat{X}, \mat{Y}) \in \unique(\Sigma_{(\leftsupp{S}, \rightsupp{S})}) \iff (\mat{Y}, \mat{X}) \in \unique(\Sigma_{(\rightsupp{S}, \leftsupp{S})}).
	\end{displaymath}
\end{lemma}

\begin{proof}
	Because of the symmetry, we only prove the direct sense. Suppose $	(\mat{X}, \mat{Y}) \in \unique(\Sigma_{(\leftsupp{S}, \rightsupp{S})})$. Let $(\mat{B}, \mat{A}) \in \Sigma_{(\rightsupp{S}, \leftsupp{S})}$ such that $\mat{B} \transpose{\mat{A}} = \mat{Y} \transpose{\mat{X}}$, and let us show that $(\mat{B}, \mat{A}) \sim (\mat{Y}, \mat{X})$. The previous equality is equivalent to $\mat{A} \transpose{\mat{B}} = \mat{X} \transpose{\mat{Y}}$. Then, by assumption, $(\mat{A}, \mat{B}) \sim (\mat{X}, \mat{Y})$. Hence, there exists $\mat{G} \in \genperm{r}$ such that $\mat{X} = \mat{A} \mat{G}$ and $\transpose{\mat{Y}} = \inverse{\mat{G}} \transpose{\mat{B}}$. Thus, $\mat{Y} = \mat{B} \inverse{(\transpose{\mat{G}})}$, and $\transpose{\mat{X}} = \transpose{\mat{G}} \transpose{\mat{B}}$. Since $\inverse{(\transpose{\mat{G}})} \in \genperm{r}$, we obtain $(\mat{B}, \mat{A}) \sim (\mat{Y}, \mat{X})$, which ends the proof.
\end{proof}

Consequently, we can express similar necessary conditions as the ones of \Cref{lemma:appication-analysis-right-id-fixed-supp} for the case where we fix the right factor.

\begin{lemma}
	\label{lemma:sm11-fixing-right}
	Suppose $\tuplerkone{C} \in \unique(\Gamma_{\tuplerkone{S}})$. Then, using the same notations introduced in \Cref{thm:NC-fixed-support-id} for any $(\mat{X}, \mat{Y}) \in \Sigma_{\pair{S}}$ such that $\varphi(\mat{X}, \mat{Y}) = \tuplerkone{C}$, we have:
	\begin{enumerate}[label=(\roman*)]
		\item for each $u \in \integerSet{U}$, the columns of the submatrix $\submat{\leftsupp{S}}{[u]}$ are pairwise disjoint;
		\item for each $i \in \integerSet{m}$, the subset of columns $\{ \vctor{y}^{(u)} \; : \; u \in \integerSet{U}, \, \supp(\leftsupp{S}_{[u]}\one_{[u]}) \ni i \}$ is linearly independent.
	\end{enumerate}
\end{lemma}

\begin{proof}
	We use \Cref{lemma:equivalence-fixing-right-factor} and repeat the reasoning of \cref{section:step2-sm11}.
\end{proof}

\subsection{Combine the obtained necessary conditions} 
It is now possible to formulate the obtained necessary conditions in the following equivalent manner.

\begin{lemma}
	Let $\tuplerkone{C} \in \unique(\Gamma_{\tuplerkone{S}})$. The two following conditions are equivalent:
	\begin{enumerate}[label=(\roman*)]
		\item for any $(\mat{X}, \mat{Y}) \in \Sigma_{\pair{S}}$ such that $\varphi(\mat{X}, \mat{Y}) = \tuplerkone{C}$, for each $k \in \integerSet{K}$, the columns of the submatrix $\submat{\rightsupp{S}}{[k]}$ are pairwise disjoint;
		\item whenever two nonzero rank-one contributions $\rankone{C}{t}$ and $\rankone{C}{i'}$ have nonzero collinear columns ($ i, i' \in \integerSet{r}$, $i \neq i'$), the column support $\matcol{\rightsupp{S}}{i}$, $\matcol{\rightsupp{S}}{i'}$ are disjoint;
		\item for all $ i,i' \in \integerSet{r}$, $i \neq i'$, if the rank-one contributions $\rankone{C}{i}$ and $\rankone{C}{i'}$ are nonzero with the same column span, then the rank-one supports $\rankone{S}{i}$ and $\rankone{S}{i'}$ are disjoint.
	\end{enumerate}
\end{lemma}

\begin{proof}
	The condition (iii) is simply a reformulation of condition (ii).
	
	Suppose (i). Consider $i, i' \in \integerSet{r}$, $i \neq i'$ such that $\rankone{C}{i}$ and $\rankone{C}{i'}$ have nonzero collinear columns. By \Cref{lemma:varphi-properties}, there exists $(\mat{X}, \mat{Y}) \in \Sigma_{\pair{S}}$ such that $\varphi(\mat{X}, \mat{Y}) = \tuplerkone{C}$. In particular, $\rankone{C}{i} = \matcol{\mat{X}}{i} \transpose{\matcol{\mat{Y}}{i}}$ and $\rankone{C}{i'} = \matcol{\mat{X}}{i'} \transpose{\matcol{\mat{Y}}{i'}}$. By assumption, $\rankone{C}{i}$ and $\rankone{C}{i'}$ have nonzero collinear columns. This means that $\matcol{\mat{X}}{i} \neq \vctor{0}$ and $\matcol{\mat{X}}{i'} \neq \vctor{0}$ are collinear. By assumption (i), the columns support $\matcol{\rightsupp{S}}{i}$, $\matcol{\rightsupp{S}}{i'}$ are disjoint.
	
	Conversely, suppose (ii). Let $(\mat{X}, \mat{Y}) \in \Sigma_{\pair{S}}$ such that $\varphi(\mat{X}, \mat{Y}) = \tuplerkone{C}$. Let $k \in \integerSet{K}$, and consider any $i, i' \in [k]$, with $i \neq i'$. By definition, the nonzero columns $\matcol{\mat{X}}{i}$ and $\matcol{\mat{X}}{i'}$ are collinear. Fix $\lambda \neq 0$ such that $\matcol{\mat{X}}{i} = \lambda \matcol{\mat{X}}{i'}$. Then, $\rankone{C}{i} = \matcol{\mat{X}}{i} \transpose{\matcol{\mat{Y}}{i}} = \lambda \matcol{\mat{X}}{i'} \transpose{\matcol{\mat{Y}}{i}}$. But $\lambda \matcol{\mat{X}}{i'} \transpose{\matcol{\mat{Y}}{i}}$ and $\matcol{\mat{X}}{i'} \transpose{\matcol{\mat{Y}}{i'}} = \rankone{C}{i'}$ have nonzero collinear columns. By assumption (ii), the columns support $\matcol{\rightsupp{S}}{i}$, $\matcol{\rightsupp{S}}{i'}$ are disjoint.
\end{proof}
We express a similar lemma for the case where the right factor has been fixed.
We conclude by expressing jointly condition (iii) in these two lemmas to obtain the necessary condition (i) in \Cref{thm:NC-fixed-support-id}. We then combine conditions (ii) of \Cref{lemma:appication-analysis-right-id-fixed-supp,lemma:sm11-fixing-right} to obtain the necessary condition (ii) of \Cref{thm:NC-fixed-support-id}.

\section{Proof of \Cref{example:NC-fixed-supp-id-is-sometime-SC}}
\label{proof:example-NC-fixed-supp-id-is-sometime-SC}

\begin{proof}
	
	First, we verify that conditions (i) and (ii) of \Cref{thm:NC-fixed-support-id} are  equivalent to linear independence of columns $\begin{pmatrix}
		a & b
	\end{pmatrix}^T$ and $\begin{pmatrix}
		c & d
	\end{pmatrix}^T$. Indeed, if these two columns are collinear, then condition (i) cannot be verified, as $\rankone{S}{1}$ and $\rankone{S}{2}$ are not disjoint. Conversely, if these two columns are linearly independent, then condition (ii) is verified. Indeed, for any $(\mat{X}, \mat{Y}) \in \Sigma_{\pair{S}}$ (where $\pair{S}$ is the unique pair of supports such that $\varphi(\pair{S}) = \tuplerkone{S}$) such that $\tuplerkone{C} = \varphi(\mat{X}, \mat{Y})$, $\mat{X}$ is equal to $\begin{pmatrix}
		a & b \\ c & d
	\end{pmatrix}$ up to rescaling of columns, and $\mat{Y}$ is equal to $\begin{pmatrix}
		1 & 0 \\ x & y \\ 0 & 1
	\end{pmatrix}$ up to rescaling of columns. Therefore, the columns of $\mat{X}$ are linearly independent, and the columns of $\mat{Y}$ are linearly independent. Moreover, since $\rankone{C}{1}$ and $\rankone{C}{2}$ have no nonzero collinear columns nor nonzero collinear rows, condition (i) is also verified.
	
	Therefore, by \Cref{thm:NC-fixed-support-id} and the previously shown equivalence, linear independence of columns $\begin{pmatrix}
		a & b
	\end{pmatrix}^T$ and $\begin{pmatrix}
		c & d
	\end{pmatrix}^T$ is necessary for $\tuplerkone{C} \in \unique(\Gamma_{\tuplerkone{S}})$. We now show that it is sufficient. Suppose that $\transpose{\begin{pmatrix}
			a & b
	\end{pmatrix}}$ and $\transpose{\begin{pmatrix}
			c & d
	\end{pmatrix}}$ are linearly independent. Let $\tuplerkone{C'} \in \Gamma_{\tuplerkone{S}}$ such that $\rankone{C'}{1} + \rankone{C'}{2} = \mat{Z}$.  As $\rankone{C'}{1}$ and $\rankone{C'}{2}$ are rank-one matrices, denote $x', y' \in \mathbb{C}$ such that $\matcol{\rankone{C'}{1}}{2} = x' \matcol{\rankone{C'}{1}}{1}$ and $\matcol{\rankone{C'}{2}}{2} = y' \matcol{\rankone{C'}{2}}{3}$. Rewriting the equality $\matcol{\mat{Z}}{2} = \matcol{(\rankone{C'}{1} + \rankone{C'}{2})}{2}$, we have:
	\begin{equation*}
		\begin{cases}
			a x' + c y' = \matindex{\mat{Z}}{1}{2} = a x + c y \\
			b x' + d y' = \matindex{\mat{Z}}{2}{2} = b x + d y \\
		\end{cases}
	\end{equation*}
	Then, by assumption, $x' = x$ and $y = y'$, so $\tuplerkone{C'} = \tuplerkone{C}$, which shows $\tuplerkone{C} \in \unique(\Gamma_{\tuplerkone{S}})$.
	
\end{proof}

\section{Proof of \Cref{thm:SC-fixed-support-id}}
\label{app:SC-fixed-support-id}

\begin{proof}
	Suppose that $\mathcal{S}$ is closable. Let $\tuplerkone{C} \in \exact{\tuplerkone{S}}$, and let us show that $\tuplerkone{C} \in \unique(\Gamma_{\tuplerkone{S}})$. Let $ \tuplerkone{C}' \in \Gamma_{\tuplerkone{S}}$ such that $\mathcal{A}(\tuplerkone{C}) = \mathcal{A}(\tuplerkone{C'})$. The goal of the proof is to show that $\tuplerkone{C} \sim \tuplerkone{C'}$.
	Denote $\mat{Z} := \mathcal{A}(\tuplerkone{C})$, $\mathcal{G} := \tupleobsgraph{\tuplerkone{S}}$ the observable bipartite graphs associated to $\tuplerkone{S}$, and $N := N(\tuplegraph{G})$. As $\tuplerkone{S}$ is closable, all the rank-one supports $\rankone{S}{i}$ ($1 \leq i \leq r$) are different, so $\graphindex{G}{i}$ is not the empty graph for any $i \in \integerSet{r}$. The complete graph $\completegraph{\graphindex{G}{i}}$ is denoted $\completegraph{i}$ for simplicity ($1 \leq i \leq r$).
	For any integer $n$, denote $\tuplegraph{G}_n := \iterate{(b \circ a)}{n}(\tuplegraph{G})$, where $a$ and $b$ are the completion operations defined in \Cref{def:completion-operations}, and $\tuplegraph{H}_n = a(\tuplegraph{G}_n)$.
	
	For $n \in \integerSet{N}$, denote $P_n$ the assertion: ``for all $i \in \integerSet{r}$, for all $(k,l) \in E \left( \graphsubscriptindex{G}{n}{i} \right)$, $\matindex{(\rankone{C}{i})}{k}{l} = \matindex{(\rankone{C'}{i})}{k}{l}$''. 
	Let us prove by induction that $P_n$ is true for all $n \in \integerSet{N}$. Since $N = N(\tuplegraph{G})$, $\tuplegraph{G}_N = \closure{\tuplegraph{G}}$, and by assumption, $\closure{\tuplegraph{G}} = (\completegraph{i})_{i=1}^r$.
	Therefore, proving $P_N$ would yield $\tuplerkone{C} = \tuplerkone{C}'$ and end the proof.
	
	\paragraph{Initialization}
	Fix $i \in \integerSet{r}$, and $(k,l) \in E(\graphsubscriptindex{G}{0}{i})$. By definition, $E(\graphsubscriptindex{G}{0}{i}) = \obssupport{\tuplerkone{S}}{i} = \rankone{S}{i} \backslash \left( \bigcup_{j \neq i} \rankone{S}{j} \right)$, 
	so for $j \neq i$, $\matindex{(\rankone{C}{j})}{k}{l} = 0 = \matindex{(\rankone{C'}{j})}{k}{l}$. Therefore, $\matindex{(\rankone{C}{i})}{k}{l} = \matindex{\mat{Z}}{k}{l} = \matindex{(\rankone{C'}{i})}{k}{l}$, which shows $P_0$.
	
	\paragraph{Inductive step}
	Let $n \in \integerSet{N-1}$, and suppose that $P_n$ is verified. Denote $\tuplegraph{H} := a(\tuplegraph{G})$. Recall that $\tuplegraph{G}_n \preceq \tuplegraph{H}_n \preceq \tuplegraph{G}_{n+1}$.
	\begin{itemize}
		\item For any $i$, for any $(k,l) \in E(\graphsubscriptindex{G}{n}{i})$, $\matindex{(\rankone{C}{i})}{k}{l} = \matindex{(\rankone{C'}{i})}{k}{l}$ by assumption $H_n$.
		
		\item Let $i \in \integerSet{r}$ and $(k,l) \in E(\graphsubscriptindex{H}{n}{i}) \backslash E(\graphsubscriptindex{G}{n}{i})$. Then, $(k,l)$ corresponds to a missing edge in a subgraph of $\graphsubscriptindex{G}{n}{i}$ isomorph to $\completegraph{2,2}^{-}$. For any $(k', l')$ of this subgraph, $(k', l') \in E(\graphsubscriptindex{G}{n}{i})$ by definition of the completion operation $a$, so $\matindex{(\rankone{C}{i})}{k'}{l'} = \matindex{(\rankone{C'}{i})}{k'}{l'}$.
		Then, by \Cref{prop:connected-bipartite-graph}, we obtain $\matindex{(\rankone{C}{i})}{k}{l} = \matindex{(\rankone{C'}{i})}{k}{l}$. This is true for any $i \in \integerSet{r}$ and $(k, l) \in E(\graphsubscriptindex{H}{n}{i})$.
		
		\item Finally, let $(k,l) \in E(\graphsubscriptindex{H}{n+1}{i}) \backslash E(\graphsubscriptindex{H}{n}{i})$. Then, by definition of the completion operation $b$, $(k,l) \in E(\graphsubscriptindex{H}{n}{j})$ for all $j \neq i$ such that $(k,l) \in \completegraph{j}$. This means that $\matindex{(\rankone{C}{j})}{k}{l} = \matindex{(\rankone{C'}{j})}{k}{l}$ for such $j$. For $j \neq i$ such that $(k,l) \notin \completegraph{j}$, we simply have $\matindex{(\rankone{C}{j})}{k}{l} = 0 = \matindex{(\rankone{C'}{j})}{k}{l}$.
		Therefore, we obtain $\matindex{(\rankone{C}{i})}{k}{l} = \matindex{\mat{Z}}{k}{l} - \sum_{j \neq i} \matindex{(\rankone{C}{j})}{k}{l} = \matindex{\mat{Z}}{k}{l} - \sum_{j \neq i} \matindex{(\rankone{C'}{j})}{k}{l} = \matindex{(\rankone{C'}{i})}{k}{l}$, which shows $P_{n+1}$ and ends the inductive step.
	\end{itemize}
\end{proof}

\section{Proof of \Cref{prop:completion-algo}}
\label{proof:completion-algo}

\begin{proof}
	To prove the first assertion, one can assume by contradiction that the algorithm breaks the loop because of incompatibility, with $\mat{Z} \in \mathcal{A}(\Gamma_{\tuplerkone{S}})$. Fix $\tuplerkone{C} \in \Gamma_{\tuplerkone{S}}$ such that $\mathcal{A}(\tuplerkone{C}) = \mat{Z}$. Then, we would have $\matindex{\rankone{C}{i}}{k}{l} = \matindex{\mat{Z}}{k}{l}$ for all $i \in \integerSet{r}$, $(k, l) \in \obssupport{\tuplerkone{S}}{i}$. Following the completion algorithm, since the algorithm breaks the loop because of incompatibility, there exists an index $(k, l)$ such that $\sum_{i} \matindex{\rankone{C}{i}}{k}{l} \neq \matindex{\mat{Z}}{k}{l}$. This is a contradiction.
	
	For the second assertion, one verifies that, at initialization, for each index $(k,l)$, whenever all entries $\matindex{\rankone{C}{i}}{k}{l}$ ($1 \leq i \leq r$) are not missing, the sum $\sum_{i} \matindex{\rankone{C}{i}}{k}{l}$ is equal to $\matindex{\mat{Z}}{k}{l}$. This property is preserved at the end of each loop of the algorithm, as by assumption, the algorithm did not break the loop because of incompatibility. Since there is no missing values in the output of the algorithm, we obtain $\mathcal{A}(\tuplerkone{C}) = \mat{Z}$. Uniqueness comes simply from the fact that each completed value in the algorithm is unique.
\end{proof}

\section{Proof of \Cref{lemma:disjoint-support-characterizing-uniform-EMD}}
\label{proof:lemma-disjoint-support-characterizing-uniform-EMD}

\begin{proof}
	Suppose (iii). Let $\tuplerkone{C} \in \Gamma_{\tuplerkone{S}}$. 
	Then, the entries of $\rankone{C}{i}$ ($1 \leq i \leq r$) can be directly identified from the submatrix $(\matindex{\mat{Z}}{k}{l})_{(k,l) \in \rankone{S}{i}}$, because the rank-one supports in the tuple $\tuplerkone{S}$ are pairwise disjoint. This shows condition (ii). 
	Suppose now (ii). A fortiori, uniform P-uniqueness of EMD in $\Gamma_{\mathcal{S}}$ holds, because the equality $\tuplerkone{C} = \tuplerkone{C'}$ implies $\tuplerkone{C} \sim \tuplerkone{C'}$, which shows (i). Let us now show the implication (i) $\implies$ (iii) by contraposition. Suppose there exists $i, j \in \integerSet{r}$ such that $\rankone{S}{i} \cap \rankone{S}{j} \neq \emptyset$. Let $(k, l)$ be an index in this intersection. Denote $\mat{E}^{(k, l)} \in \mathbb{B}^{n \times m}$ the binary matrix full of zero, except for the index $(k,l)$ for which the entry is one. Define $\tuplerkone{C}, \tuplerkone{C'} \in \Gamma_{\tuplerkone{S}}$ as follow:
	\begin{displaymath}
		\forall t \in \integerSet{r}, \quad \rankone{C}{t} = \begin{cases}
			\bm{E}^{(k, l)} & \text{if $t=i$} \\
			- \bm{E}^{(k, l)} & \text{if $t=j$} \\
			\mat{0} & \text{otherwise}
		\end{cases}, \quad \rankone{C'}{t} = \begin{cases}
			2 \bm{E}^{(k, l)} & \text{if $t=i$} \\
			- 2 \bm{E}^{(k, l)} & \text{if $t=j$} \\
			\mat{0} & \text{otherwise}
		\end{cases}
	\end{displaymath}
	Then, $ \mathcal{A}(\tuplerkone{C}) = 0 = \mathcal{A}(\tuplerkone{C'})$, but for any permutation $\sigma \in \mathfrak{S}(\integerSet{r})$, $\bm{C'}^{i} \neq \bm{C}{\sigma(i)}$, which means that $\tuplerkone{C} \not\sim \tuplerkone{C'}$. 
\end{proof}

\section{Proof of \Cref{prop:uniform-uniqueness-EMD}}
\label{proof:prop-uniform-uniqueness-EMD}

\begin{proof}
	Suppose $\Gamma_{\Omega} = \unique(\Gamma_{\Omega})$. By analogy with \Cref{lemma:nec-conditions-EMF}, we show that uniform P-uniqueness of EMD in $\Gamma_{\tuplerkone{S}}$ holds for any $\tuplerkone{S} \in \varphi(\Omega)$. This yields condition (i) by applying \Cref{lemma:disjoint-support-characterizing-uniform-EMD}.
	Suppose by contradiction that condition (ii) is not verified. Without loss of generality, we can suppose that condition (i) is verified. Fix $\tuplerkone{S}, \tuplerkone{S}' \in \varphi^{\mathrm{cp}}(\Omega)$ such that $\bigcup_{i=1}^r \rankone{S}{i} = \bigcup_{i=1}^r \rankone{S'}{i}$, but $\tuplerkone{S} \not\sim \tuplerkone{S'}$. By condition (i), $\{ \rankone{S}{i} \}_{i=1}^r$ and $\{ \rankone{S'}{i} \}_{i=1}^r$ are respectively pairwise disjoint. 
	Defining $\tuplerkone{C} := \tuplerkone{S}$ and $\tuplerkone{C'} := \tuplerkone{S'}$, one verifies that $\tuplerkone{C}, \tuplerkone{C'} \in \Gamma_{\Omega}$, and $\mathcal{A}(\tuplerkone{C}) = \mathcal{A}(\tuplerkone{C'})$. But by construction, $\tuplerkone{C} \not\sim \tuplerkone{C'}$, so we conclude that $\tuplerkone{C}, \tuplerkone{C'} \notin \unique(\Gamma_{\Omega})$. 
	
	Suppose that condition (i) and condition (ii) are verified. Let $\tuplerkone{C}, \tuplerkone{C'} \in \Gamma_{\Omega}$ such that $\sum_{i=1}^r \bm{C}^ i= \sum_{i=1}^r \bm{C'}^i$. This sum is denoted $\bm{Z}$. 
	Denote $\tuplerkone{S} := (\supp(\rankone{C}{i}))_{i=1}^r$ and $\tuplerkone{S'} := (\supp(\rankone{C'}{i}))_{i=1}^r$. By construction, $\tuplerkone{S}, \tuplerkone{S'}  \in \varphi^{\mathrm{cp}}(\Omega)$. By condition (i), the supports $\{ \rankone{S}{i} \}_{i=1}^r$ and $\{ \rankone{S'}{i} \}_{i=1}^r$ are respectively pairwise disjoint. Therefore, $\bigcup_{i=1}^r \rankone{S}{i} = \supp(\mat{Z}) = \bigcup_{i=1}^r \rankone{S'}{i}$. By condition (ii), we obtain $\tuplerkone{S} \sim \tuplerkone{S'}$. Fix $\sigma \in \mathfrak{S}(\integerSet{r})$ such that $\rankone{S'}{i} = \rankone{S}{\sigma(i)}$ for all $i \in \integerSet{r}$. 
	Since the supports $\{ \supp(\bm{C'}^t) \}_{t=1}^r$ are pairwise disjoint, we have $\rankone{C'}{i} = \sum_{(k,l) \in \rankone{S'}{i}} \matindex{\mat{Z}}{k}{l} \bm{E}^{(k, l)} $, and $\rankone{C}{\sigma(i)} = \sum_{(k,l) \in \rankone{S}{\sigma(i)}} \matindex{\mat{Z}}{k}{l} \bm{E}^{(k, l)} $ ($1 \leq i \leq r$). 
	We conclude that $\bm{C'}^ i= \bm{C}^{\sigma(i)}$ for all $i \in \integerSet{r}$.
\end{proof}

\section{Proof of \Cref{cor:uniform-right-identifiability-classical-family}}
\label{proof:cor-uniform-right-identifiability-classical-family}

Before proving the corollary, let us start with some technical lemmas.

\begin{lemma}
	\label{lemma:zero-column-uniform-p-uniqueness}
	For $\mat{X} \neq 0$, denote $J := \colsupp(\mat{X})$. Then, for any family of right supports $\Theta$:
	\begin{displaymath}
		\{ \mat{X} \} \times \Sigma_{\Theta} = \unique(\{ \mat{X} \} \times \Sigma_{\Theta}) \iff \begin{cases}
			\Theta_{J^c} \subseteq  \{ \mat{0} \}, \text{ and} \\
			\{ \submat{\mat{X}}{J} \} \times \Sigma_{\signature{\Theta}{J}} = \unique( \{ \submat{\mat{X}}{J} \} \times \Sigma_{\signature{\Theta}{J}} )
		\end{cases},
	\end{displaymath}
	with the convention $\signature{\Theta}{J^c} = \emptyset$ when $J^c = \emptyset$.
\end{lemma}

\begin{proof}
	Suppose $\{ \mat{X} \} \times \Sigma_{\Theta} = \unique( \{\mat{X}\} \times \Sigma_{\Theta})$. Let $\mat{S} \in \Theta$. Viewing $\mat{S}$ as a binary matrix, we have $\mat{S} \in \Sigma_{\Theta}$, so $(\mat{X}, \mat{S}) \in \{ \mat{X} \} \times  \Sigma_{\Theta}$. By assumption, $(\mat{X}, \mat{S}) \in \unique(\{ \mat{X} \} \times \Sigma_{\Theta})$. Hence, by \Cref{lemma:right-id-first-lemma}, we obtain $\colsupp(\mat{S}) \subseteq \colsupp(\mat{X}) = J$. This is true for all $\mat{S} \in \Theta$, so $\signature{\Theta}{J^c} \subseteq \{ \mat{0} \}$. Let us now show that uniform PS-uniqueness of EMF in $\{ \submat{\mat{X}}{J} \} \times \Sigma_{\signature{\Theta}{J}}$ holds. Let $\mat{Y'} \in \Sigma_{\signature{\Theta}{J}}$. Define $\mat{Y} \in \Sigma_{\Theta}$ such that $\submat{\mat{Y}}{J} = \mat{Y'}$ and $\submat{\mat{Y}}{J^c} = \mat{0}$. Then, $(\mat{X}, \mat{Y}) \in \{ \mat{X} \} \times \Sigma_{\Theta}$, so by assumption, $(\mat{X}, \mat{Y}) \in \unique(\{ \mat{X} \} \times \Sigma_{\Theta})$. Since $\colsupp(\mat{Y}) \subseteq J = \colsupp(\mat{X})$, $(\submat{\mat{X}}{J}, \mat{Y'}) \in \unique(\{ \submat{\mat{X}}{J}\} \times \Sigma_{\signature{\Theta}{J}})$ by \Cref{lemma:eliminating-zero-col-right-id}, which ends the proof of necessity.
	
	Conversely, suppose that $\signature{\Theta}{J^c} \subseteq \{ \mat{0} \}$, and $\{ \submat{\mat{X}}{J} \} \times \Sigma_{\signature{\Theta}{J}} = \unique( \{ \submat{\mat{X}}{J} \} \times \Sigma_{\signature{\Theta}{J}} )$. Let $\mat{Y} \in \Sigma_{\Theta}$. Since $\signature{\Theta}{J^c} \subseteq \{ \mat{0} \}$, we necessarily have $\colsupp(\mat{Y}) \subseteq J = \colsupp(\mat{X})$. But $(\submat{\mat{X}}{J}, \submat{\mat{Y}}{J}) \in \unique( \{ \submat{\mat{X}}{J}\} \times \Sigma_{\signature{\Theta}{J}})$ by assumption. Hence, by \Cref{lemma:eliminating-zero-col-right-id}, $(\mat{X}, \mat{Y}) \in \unique( \{ \mat{X} \} \times \Sigma_{\Theta})$, which ends the proof.
\end{proof}

\begin{corollary}
	\label{cor:uniform-right-p-uniqueness-equiv}
	Consider $\mat{X}$ with no zero column and $\mat{N} \in \diagonal{r}$ the (unique) diagonal matrix that normalizes its columns, setting the first nonzero entry of each column of $\mat{X} \mat{N}$ to 1. We have:
	\begin{equation*}
		\begin{split}
			\{ \mat{X} \}\times \Sigma_{\Theta} = \unique(\{ \mat{X} \}\times \Sigma_{\Theta}) \iff& \{ \mat{X} \mat{N} \} \times \Sigma_{\Theta} = \unique(\{ \mat{X} \mat{N} \} \times \Sigma_{\Theta})  \\
			\iff & \{ \mat{X} \mat{N} \} \times \Sigma_{\Theta} = \uniqueperm(\{ \mat{X} \mat{N} \} \times \Sigma_{\Theta})
		\end{split}
	\end{equation*}
\end{corollary}

\begin{proof}
	This is a direct consequence of \Cref{prop:right-id-up-permutation}.
\end{proof}

\begin{proof}[Proof of \Cref{cor:uniform-right-identifiability-classical-family}, (i)]
	Remark that  $\mathcal{I}^\alpha_{\mathtt{row}} := \left \{ \matcol{(\transpose{\mat{S}})}{k} \cup \matcol{(\transpose{\mat{S'}})}{k} \; | \; \mat{S}, \mat{S'} \in \rowsparse{\alpha}, \; k \in \integerSet{n} \right \}$ is equal to the set $\{ I \subseteq \integerSet{r} \; | \; \card(I) \leq 2\alpha\}$.
	Suppose $\krank(\mat{X}) \geq \min(r, 2 \alpha)$. In particular, $\mat{X}$ has no zero column, and there is no pair of collinear columns in $\mat{X}$. By assumption, the subset of columns $\{ \matcol{\mat{X}}{I} \; | \; l \in I\}$ for any $I \in \mathcal{I}^\alpha_{\mathtt{row}}$ is linearly independent.
	By \Cref{cor:uniform-right-uniqueness,prop:uniform-right-uniqueness-inv-prob,prop:uniform-right-id-perm-vector-of-ones}, we conclude that uniform PS-uniqueness of EMF in $\{ \mat{X} \} \times \Sigma_{\rowsparse{\alpha}}$ holds.
	
	Conversely, suppose that uniform PS-uniqueness of EMF in $\{ \mat{X} \} \times \Sigma_{\rowsparse{\alpha}}$ holds, and assume in addition that $n \geq 2$ or $\alpha \geq 2$.
	By \Cref{lemma:zero-column-uniform-p-uniqueness}, $\signature{(\rowsparse{\alpha})}{J^c} \subseteq \{ \mat{0} \}$ where $J := \colsupp(\mat{X})$. But this cannot be verified when $J \neq \integerSet{r}$. So necessarily, $\mat{X}$ does not have zero column. 
	Denote $\{ [k] \}_{k=1}^K$ a partition of $\integerSet{r}$ defined by collinearity of columns in $\mat{X}$. Let us show that there is no pair of collinear columns in $\mat{X}$. Suppose there exists $k \in \integerSet{K}$ such that $\card([k]) \geq 2$, and fix $i, j \in [k]$ such that $i \neq j$.
	By \Cref{cor:uniform-right-p-uniqueness-equiv,cor:uniform-right-uniqueness}, the two conditions of \Cref{prop:uniform-right-id-perm-vector-of-ones} are verified for  $\signature{(\rowsparse{\alpha})}{[k]}$. 
	In the case where $\alpha \geq 2$, we can find $\mat{S} \in \rowsparse{\alpha}$ such that the two columns $\matcol{\mat{S}}{i}$ and $\matcol{\mat{S}}{j}$ has a nonzero entry at the first row, which contradicts the first condition of \Cref{prop:uniform-right-id-perm-vector-of-ones} applied to $\signature{(\rowsparse{\alpha})}{[k]}$.
	Otherwise, in the case where $n \geq 2$, we can find $\mat{S}, \mat{S'} \in \rowsparse{\alpha}$ such that:
	\begin{equation*}
		\begin{split}
			\matcol{\mat{S}}{i} = \transpose{\begin{pmatrix}
					1 & 0 & 0 & \ldots & 0
			\end{pmatrix}}, \quad \matcol{\mat{S'}}{i} = \transpose{\begin{pmatrix}
					1 & 1 & 0 & \ldots & 0
			\end{pmatrix}}, \\
			\matcol{\mat{S}}{j} = \transpose{\begin{pmatrix}
					0 & 1 & 0 & \ldots & 0
			\end{pmatrix}}, \quad \matcol{\mat{S'}}{j} = \transpose{\begin{pmatrix}
					0 & 0 & 0 & \ldots & 0
			\end{pmatrix}},
		\end{split}
	\end{equation*}
	with zero columns elsewhere. This is in contradiction with the second condition of \Cref{prop:uniform-right-id-perm-vector-of-ones} applied to $\signature{(\rowsparse{\alpha})}{[k]}$.
	Consequently, $\{ [k] \}_{k=1}^K$ is a partition of $\integerSet{r}$ where each equivalence class is a singleton, so the fingerprint of $\rowsparse{\alpha}$ on $\{ [k] \}_{k=1}^K$ is simply $\rowsparse{\alpha}$. By \Cref{cor:uniform-right-uniqueness,prop:right-uniqueness-linear-inv-pb-matrix}, each subset of columns $\{ \matcol{\mat{X}}{l} \; | \; l \in I\}$ for $I \in \mathcal{I}^\alpha_{\mathtt{row}}$ is linearly independent, meaning that $\krank(\mat{X}) \geq \min(r, 2 \alpha)$.
\end{proof}

\begin{proof}[Proof of (ii)]
	Suppose $\krank(\mat{X}) = r$. In particular, $\mat{X}$ has no zero column, and there is no pair of collinear columns in $\mat{X}$. By assumption, the subset of columns $\{ \matcol{\mat{X}}{I} \; | \; l \in I\}$ for any $I \in \mathcal{I}^\beta_{\mathtt{col}}$ is linearly independent, where $\mathcal{I}^\beta_{\mathtt{col}} := \{ \matcol{(\transpose{\mat{S}})}{k} \cup \matcol{(\transpose{\mat{S'}})}{k} \; | \; \mat{S}, \mat{S'} \in \columnsparse{\beta}, \; k \in \integerSet{n} \}$.
	By \Cref{cor:uniform-right-uniqueness,prop:uniform-right-uniqueness-inv-prob,prop:uniform-right-id-perm-vector-of-ones}, we conclude that uniform PS-uniqueness of EMF in $\{ \mat{X} \} \times \Sigma_{\columnsparse{\beta}}$ holds.
	Conversely, suppose that PS-uniqueness of EMF in $\{ \mat{X} \} \times \Sigma_{\columnsparse{\beta}}$ holds. 
	Using some similar arguments as the previous proof, we can show that necessarily, $\mat{X}$ has no zero column, and no pair of collinear columns.
	Consequently, $\{ [k] \}_{k=1}^K$ is a partition of $\integerSet{r}$ where each equivalence class is a singleton, so the fingerprint of $\columnsparse{\beta}$ on $\{ [k] \}_{k=1}^K$ is simply $\columnsparse{\beta}$. 
	By \Cref{cor:uniform-right-uniqueness,prop:right-uniqueness-linear-inv-pb-matrix}, each subset of columns $\{ \matcol{\mat{X}}{l} \; | \; l \in I\}$ for $I \in \mathcal{I}^\beta_{\mathtt{col}}$ is linearly independent, meaning that $\krank(\mat{X}) = r$, because $\integerSet{r} = \matcol{(\transpose{\mat{S}})}{k} \cup \matcol{(\transpose{\mat{S}})}{k} \in \mathcal{I}^\beta_{\mathtt{col}}$, where $\mat{S}$ is the support full of ones in the first column, and filled with zero elsewhere.
\end{proof}

\begin{proof}[Proof of (iii)]
	Suppose $\krank(\mat{X}) \geq \min(r, 2\alpha)$. Then, by assertion (i) of \Cref{cor:uniform-right-identifiability-classical-family}, uniform PS-uniqueness of EMF in $\{ \mat{X} \} \times \Sigma_{\rowsparse{\alpha}}$ holds. But since $\Sigma_{\rowsparse{\alpha} \cap \columnsparse{\beta}} \subseteq \Sigma_{\rowsparse{\alpha}}$, applying \Cref{lemma:nec-conditions-EMF} yields uniform PS-uniqueness of EMF in $\{ \mat{X} \} \times \Sigma_{\rowsparse{\alpha} \cap \columnsparse{\beta}}$.
	The proof of the converse is essentially the same as the one of assertion (i) in \Cref{cor:uniform-right-identifiability-classical-family}.
\end{proof}

\begin{proof}[Proof of (iv)]
	Remark that  $\mathcal{I}^s_{\mathtt{glob}} := \{ \matcol{(\transpose{\mat{S}})}{k} \cup \matcol{(\transpose{\mat{S'}})}{k} \; | \; \mat{S}, \mat{S'} \in \globalsparse{s}, \; k \in \integerSet{n} \}$ is equal to the set $\{ I \subseteq \integerSet{r} \; | \; \card(I) \leq 2s \}$. 
	Hence, the proof is essentially the same as the one for assertion (i) of \Cref{cor:uniform-right-identifiability-classical-family}.
\end{proof}



\end{document}